\documentclass[10pt]{article} 
\usepackage[accepted]{tmlr}


\usepackage{amsmath,amsfonts,bm}









\def\eqref#1{equation~\ref{#1}}









\def\1{\bm{1}}

\def\eps{{\epsilon}}










\DeclareMathAlphabet{\mathsfit}{\encodingdefault}{\sfdefault}{m}{sl}
\SetMathAlphabet{\mathsfit}{bold}{\encodingdefault}{\sfdefault}{bx}{n}













\def\bx{{\bm{x}}}
\def\bX{{\bm{X}}}
\def\bZ{{\bm{Z}}}
\def\by{{\bm{y}}}
\def\bw{{\bm{w}}}
\def\bp{{\bm{p}}}

\newcommand{\lnb}[1]{%
  \ln\mleft(#1\mright)%
}

\DeclareMathOperator*{\argmin}{arg\,min}

\usepackage{hyperref}
\usepackage{url}
\usepackage{bm}
\usepackage{amsmath}
\usepackage{mathtools}
\usepackage{amsthm}
\usepackage{algorithm}
\usepackage{algorithmic}
\usepackage{wrapfig}
\usepackage{subcaption}
\usepackage{multirow}
\usepackage{float}
\usepackage{booktabs}
\usepackage{amssymb}
\usepackage{tikz}

\usepackage{enumitem,kantlipsum}
\newlength\myindent
\setlength\myindent{2em}

\newtheorem{theorem}{Theorem}[section]

\newtheorem{remark}[theorem]{Remark}
\newtheorem{lemma}[theorem]{Lemma}
\newtheorem*{theorem*}{Theorem}
\theoremstyle{definition}
\newtheorem{definition}{Definition}

\newcommand{\brown}[1]{\textcolor{brown}{#1}}
\newcommand{\saurav}[1]{{\color{blue}#1}}

\usepackage{pifont}

\usepackage{mleftright}

\newcommand{\Chao}[1]{{\color{blue}Chao:[#1]}}

\def\bx{\boldsymbol{x}}
\def\by{\boldsymbol{y}}
\def\bv{\boldsymbol{v}}
\def\bp{\boldsymbol{p}}

\title{Federated Classification in Hyperbolic Spaces via Secure Aggregation of Convex Hulls}


\author{\name Saurav Prakash $^*$ \email sauravp2@illinois.edu \\
      \addr Department of Electrical and Computer Engineering\\
      University of Illinois Urbana Champaign 
      \AND
      \name Jin Sima $^*$ \email jsima@illinois.edu \\
      \addr Department of Electrical and Computer Engineering\\
      University of Illinois Urbana Champaign
      \AND
      \name Chao Pan \thanks{Saurav, Jin, and Chao contributed equally to this work.} \email chaopan2@illinois.edu \\
      \addr Department of Electrical and Computer Engineering\\
      University of Illinois Urbana Champaign
      \AND
      \name Eli Chien \email ichien3@illinois.edu \\
      \addr Department of Electrical and Computer Engineering\\
      University of Illinois Urbana Champaign
      \AND
      \name Olgica Milenkovic \email milenkov@illinois.edu \\
      \addr Department of Electrical and Computer Engineering\\
      University of Illinois Urbana Champaign
      }
 


\begin{document}

\maketitle

\begin{abstract}
Hierarchical and tree-like data sets arise in many relevant applications, including language processing, graph data mining, phylogeny and genomics. It is known that tree-like data cannot be embedded into Euclidean spaces of finite dimension with small distortion, and that this problem can be mitigated through the use of hyperbolic spaces. When such data also has to be processed in a distributed and privatized setting, it becomes necessary to work with new federated learning methods tailored to hyperbolic spaces. As an initial step towards the development of the field of federated learning in hyperbolic spaces, we propose the first known approach to federated classification in hyperbolic spaces. Our contributions are as follows. First, we develop distributed versions of convex SVM classifiers for Poincar\'e discs. In this setting, the information conveyed from clients to the global classifier are convex hulls of clusters present in individual client data. Second, to avoid label switching issues, we introduce a number-theoretic approach for label recovery based on the so-called integer $B_h$ sequences. Third, we compute the complexity of the convex hulls in hyperbolic spaces to assess the extent of data leakage; at the same time, in order to limit the communication cost for the hulls, we propose a new quantization method for the Poincar\'e disc coupled with Reed-Solomon-like encoding. Fourth, at the server level, we introduce a new approach for aggregating convex hulls of the clients based on balanced graph partitioning. We test our method on a collection of diverse data sets, including hierarchical single-cell RNA-seq data from different patients distributed across different repositories that have stringent privacy constraints. The classification accuracy of our method is up to $\sim11\%$ better than its Euclidean counterpart, demonstrating the importance of privacy-preserving learning in hyperbolic spaces. Our implementation for the proposed method is
available at \url{https://github.com/sauravpr/hyperbolic_federated_classification}.
\end{abstract}

\section{Introduction}
\label{sec:intro}
Learning in hyperbolic spaces, which unlike flat Euclidean spaces are negatively curved, is a topic of significant interest in many application domains~\citep{ungar2001hyperbolic,vermeer2005geometric,hitchman2009geometry,nickel2017poincare,tifrea2018poincare,ganea2018hyperbolic,sala2018representation,cho2019large,liu2019hyperbolic,chami2020low,tabaghi2021procrustes,chien2021highly,pan2023provably}. In particular, data embedding and learning in hyperbolic spaces has been investigated in the context of natural language processing~\citep{dhingra-etal-2018-embedding}, graph mining~\citep{chen2022modeling}, phylogeny~\citep{hughes2004visualising,jiang2022learning}, and genomic data studies~\citep{raimundo2021machine,pan2023provably}. The fundamental reason behind the use of negatively curved spaces is that hierarchical data sets and tree-like metrics can be embedded into small-dimensional hyperbolic spaces with small distortion. This is a property not shared by Euclidean spaces, for which the distortion of embedding $N$ points sampled from a tree-metric is known to be $O(\log N)$~\citep{bourgain1985lipschitz,linial1995geometry}.

In the context of genomic data learning and visualization, of special importance are methods for hyperbolic single-cell (sc) RNA-seq (scRNA-seq) data analysis ~\citep{luecken2019current\iffalse,wei2022spatial\fi,ding2021deep,klimovskaia2020poincare,tian2023complex}. Since individual cells give rise to progeny cells of similar properties, scRNA-seq data tends to be hierarchical; this implies that for accurate and scalable analysis, it is desirable to embed it into hyperbolic spaces~\citep{ding2021deep,klimovskaia2020poincare,tian2023complex}. Furthermore, since scRNA-seq data reports gene activity levels of individual cells that are used in medical research, it is the subject of stringent privacy constraints -- sharing data distributed across different institutions is challenging and often outright prohibited. This application domain makes a compelling case for designing novel federated learning techniques in hyperbolic spaces. 

Federated learning (FL) is a class of approaches that combine distributed machine learning methods with data privacy mechanisms~\citep{FL,bonawitz2017practical,sheller2020federated,FLAdvance,FLboard,li2020federated}. FL models are trained across multiple decentralized edge devices (clients) that hold local data samples that are not allowed to be exchanged with other communicating entities. The task of interest is to construct a global model using single- or multi-round exchanges of secured information with a centralized server. 

We take steps towards establishing the field of FL in hyperbolic spaces by developing the first known mechanism for federated classification on the Poincar\'e disc, a two-dimensional hyperbolic space model~\citep{ungar2001hyperbolic,sarkar2012low}. The goal of our approach is to securely transmit low-complexity information from the clients to the server in one round to construct a global classifier at the server side. In addition to privacy concerns, ``label switching'' is another major bottleneck in federated settings. Switching arises when local labels are not aligned across clients. Our FL solution combines, for the first time, learning methods with specialized number-theoretic approaches to address label switching issues. It also describes novel quantization methods and sparse coding protocols for hyperbolic spaces that enable low-complexity communication based on convex hulls of data clusters. More specifically, our main contributions are as follows.
\begin{itemize}[leftmargin=*]
\item We describe a distributed version of centralized convex SVM classifiers for Poincar\'e discs~\citep{chien2021highly} which relies on the use of convex hulls of data classes for selection of biases of the classifiers. In our solution, the convex hulls are transmitted in a communication-efficient manner to the server that aggregates the convex hulls and computes an estimate of the bias to be used on the aggregated (global) data. 
\item We perform the first known analysis of the expected complexity of random convex hulls of data sets in hyperbolic spaces. The results allow us to determine the communication complexity of the scheme and to assess its privacy features which depend on the size of the transmitted convex hulls. Although convex hull complexity has been studied in-depth for Euclidean spaces (see~\citep{har2011expected} and references therein), no counterparts are currently known for negatively curved spaces.
\item We introduce a new quantization method for the Poincar\'e disc that can be coupled with Reed-Solomon-like centroid encoding~\citep{pan2023machine} to exploit data sparsity in the private communication setting. This component of our FL scheme is required because the extreme points of the convex hulls have to be quantized before transmission {due to privacy considerations}. 
\item Another important contribution pertains to a number-theoretic scheme for dealing with the challenging label switching problem. Label switching occurs due to inconsistent class labeling at the clients and prevents global reconciliation of the client data sets at the server side. We address this problem through the use of $B_h$ sequences~\citep{halberstam2012sequences}, with integer parameter $h\geq2$. Each client is assigned labels selected from the sequence of $B_h$ integers which ensures that any possible collection of $h$ confusable labels can be resolved due to the unique $h$-sum property of $B_h$ integers. The proposed approach is not restricted to hyperbolic spaces and can be applied in other settings as well. 
\item To facilitate classification at the server level after label disambiguation, we propose a new approach for (secure) aggregation of convex hulls of the clients based on balanced graph partitioning~\citep{kernighan1970efficient}.  
\end{itemize}
An illustration of the various components of our model is shown in Figure~\ref{fig:fc_framework}. 

The performance of the new FL classification method is tested on a collection of diverse data sets, including scRNA-seq data for which we report the classification accuracy of both the FL and global (centralized) models. The results reveal that our proposed framework offers excellent classification accuracy, consistently exceeding that of Euclidean FL learners. 

It is also relevant to point out that our results constitute the first known solution for FL of hierarchical biological data sets in hyperbolic spaces and only one of a handful of solutions for FL of biomedical data (with almost all prior work focusing on imaging data~\citep{dayan2021federated,rieke2020future,mushtaq2022if,gazula2022federated,saha2022privacy}). 

\begin{figure}[t]
    \centering
    \includegraphics[width=\linewidth]{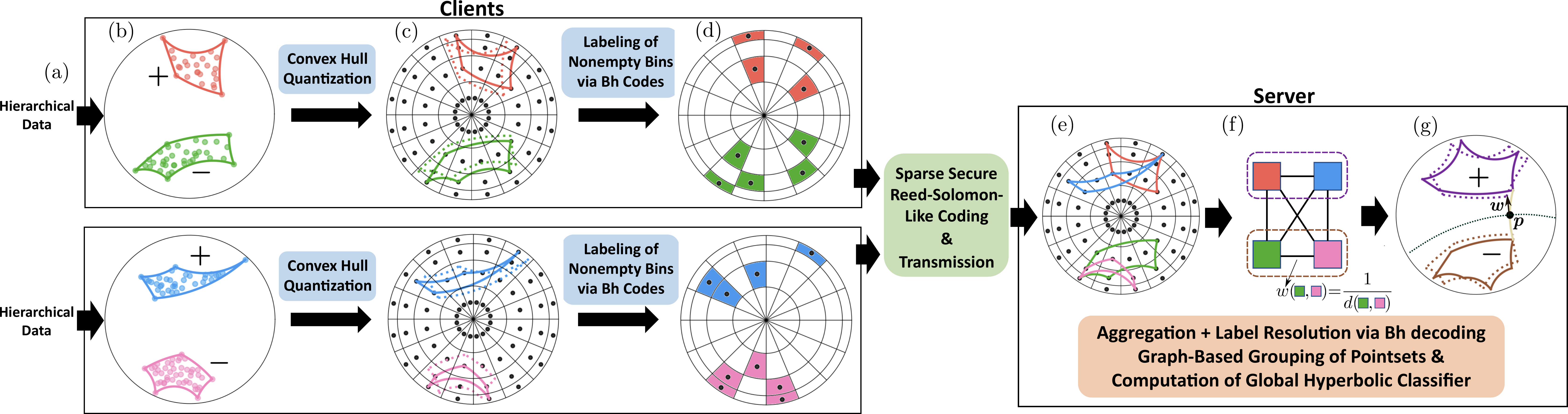}
    \caption{
    Diagram of the hyperbolic federated classification framework in the Poincar\'e model of a hyperbolic space. For simplicity, only binary classifiers for two clients are considered. (a) The clients embed their hierarchical data sets into the Poincar\'e disc. (b) The clients compute the convex hulls of their data to convey the extreme points to the server. (c) To efficiently communicate the extreme points, the Poincar\'e disc is uniformly quantized (due to distance skewing on the disc, the regions do not appear to be of the same size). (d) As part of the secure transmission module, only the information about the corresponding quantization bins containing extreme points is transmitted via Reed-Solomon coding, along with the unique labels of clusters held by the clients, selected from integer $B_h$ sequences (in this case, $h=2$ since there are two classes). (e) The server securely resolves the label switching issue via $B_2$-decoding. (f) Upon label disambiguation, the server constructs a complete weighted graph in which the convex hulls \iffalse\brown{[clusters?]}\fi represent the nodes while the edge weights equal $w(\cdot,\cdot)=1/d(\cdot,\cdot)$, where $d(\cdot,\cdot)$ denotes the average pairwise hyperbolic distance between points in the two hulls. The server then performs balanced graph partitioning to aggregate the convex hulls and arrive at ``proxies'' for the original, global clusters. (g) Once the global clusters are reconstructed, a reference point (i.e., ``bias'' of the hyperbolic classifier), $\boldsymbol{p}$, is computed as the midpoint of the shortest geodesic between the convex hulls, and subsequently used for learning the ``normal'' vector $\boldsymbol{w}$ of the hyperbolic classifier.}
    \label{fig:fc_framework}
\end{figure}

\section{Related Works}
\label{sec:related}
\textbf{Learning in Hyperbolic Spaces.} There has been significant interest in developing theoretical and algorithmic methods for data analysis in negatively curved spaces, with a representative (but not exhaustive) sampling of results pertaining to hyperbolic neural networks~\citep{ganea2018hyperbolic,peng2021hyperbolic,liu2019hyperbolic,zhang2021hyperbolic}; embedding and visualization in hyperbolic spaces~\citep{nickel2017poincare}; learning hierarchical models in hyperbolic spaces~\citep{nickel2018learning,chami2020trees}; tree-metric denoising~\citep{chien2022hyperaid}; computer vision~\citep{khrulkov2020hyperbolic,klimovskaia2020poincare,fang2021kernel}; Procrustes analysis~\citep{tabaghi2021procrustes}; and most relevant to this work, single-cell expression data analysis~\citep{klimovskaia2020poincare,ding2021deep}, hyperbolic clustering~\citep{xavier2010hyperbolic} and classification~\citep{cho2019large,chen2020hyperbolic,chien2021highly,tabaghi2021linear,pan2023provably}. With regards to classification in hyperbolic spaces, the work~\citep{cho2019large} initiated the study of large-margin classifiers, but resulted in a nonconvex formulation for SVMs. This issue was addressed in~\citep{chien2021highly} by leveraging tangent spaces; however, the method necessitates a careful selection of the bias of classifiers, which is a nontrivial issue in the federated setting. {In our work we also introduce a novel quantization scheme that is specially designed for Poincar\'e disc, which is distinct from the ``tiling-based models'' proposed in~\citep{yu2019numerically}.}

\textbf{Federated Classification.} Classification has emerged as an important problem for FL. Several strategies were reported in~\citep{FL,zhao2018federated,li2020federatedop,karimireddy2020scaffold,FLAdvance,niu2023overcoming,elkordy2022basil,prakash2020secure,prakash2020coded,babakniya2023revisiting,prakash2020hierarchical}, with the aim to address the pressing concerns of privacy protection, non-identically distributed data, and system heterogeneity inherent in FL. In order to further alleviate the communication complexity of the training procedure, one-shot federated classification~\citep{zhou2020distilled,li2020practical,salehkaleybar2021one} was introduced to enable the central server to learn a global model over a network of federated devices within a single round of communication, thus streamlining the process. Despite these advances in federated classification, the intricate nature of hyperbolic spaces presents a unique challenge for the direct application of off-the-shelf FL methods. To the best of our knowledge, our work is the first one to explore specialized federated classification techniques for learning within this specific embedding domain. {Notably, there are two recent papers on Riemannian manifold learning in FL setting \citep{li2022federated,wu2023decentralized}. While hyperbolic spaces are Riemannian manifolds, our work aims to formulate the hyperbolic SVM classification problem as a \emph{convex optimization} problem that is solvable through Euclidean optimization methods. In contrast,~\cite{li2022federated} and~\cite{wu2023decentralized} emphasize the extension of traditional Euclidean optimization algorithms, such as SVRG, to Riemannian manifolds within the FL framework. Thus, our focus is more on a new problem formulation, while~\cite{li2022federated} and~\cite{wu2023decentralized} primarily address the optimization procedures. Furthermore, due to label switching, these optimization procedures are not directly applicable to our problems}.

\textbf{FL for Biomedical Data Analysis.} Biomedical data is subject to stringent privacy constraints and it is imperative to mitigate user information leakage and data breaches~\citep{cheng2006health,avancha2012privacy}. FL has emerged as a promising solution to address these challenges~\citep{dayan2021federated,mushtaq2022if,gazula2022federated,saha2022privacy}, especially in the context of genomic (e.g., cancer genomic and transcriptomic) data analysis~\citep{rieke2020future,chowdhury2022review}, and it is an essential step towards realizing a more secure and efficient approach for biological data processing. For FL techniques in computational biology, instead of the classical notions of (local) differential privacy~\citep{dwork2014algorithmic}, more adequate notions of privacy constraints are needed to not compromise utility. We describe one such new notion of data privacy based on quantized minimal convex hulls in Section~\ref{sec:hfc}.

In what follows, we first provide a review of basic notions pertaining to hyperbolic spaces and the corresponding SVM classification task.
\vspace{-1pt}
\section{Classification in Hyperbolic Spaces}
\label{sec:hyp_cls}
The Poincaré ball model~\citep{ratcliffe1994foundations} of hyperbolic spaces is widely used in machine learning and data mining research. In particular, the Poincaré ball model has been extensively used to address classification problems in the hyperbolic space~\citep{ganea2018hyperbolic,pan2023provably}.

We start by providing the relevant mathematical background for hyperbolic classification in the Poincar\'e ball model. Formally, the Poincar\'e ball  $\mathbb{B}_k^n$ is a Riemannian manifold of curvature $-k\;(\text{where }k>0)$, defined as $\mathbb{B}_k^n=\{\boldsymbol{x}\in\mathbb{R}^n:k\lVert\boldsymbol{x}\rVert^2<1\}$. Here and elsewhere, $\lVert\cdot\rVert$ and $\langle\cdot,\cdot\rangle$ stand for the $\ell_2$ norm and the standard inner product, respectively. For simplicity, we consider linear binary classification tasks in the two-dimensional Poincar\'e disc model $\mathbb{B}_k^2$, where the input data set consists of $N$ data points $\{(\bx_j,y_j)\}_{j=1}^N$ with features $\bx_j\in \mathbb{B}_k^2$ and labels $y_j\in\{-1, +1\}, j=1,\ldots,N$. Our analysis can be easily extended to the multi-class classification problem and for the general Poincar\'e ball model in a straightforward manner. 

To enable linear classification on the Poincar\'e disc, one needs to first define a Poincar\'e hyperplane, which generalizes the notion of a hyperplane in Euclidean space. It has been shown in ~\citep{ganea2018hyperbolic} that such a generalization can be obtained via the tangent space $\mathcal{T}_{\boldsymbol{p}}\mathbb{B}_k^2$, which is the first order approximation of the Poincar\'e disc at a point $\boldsymbol{p}\in\mathbb{B}_k^2$. The point $\bp$ is commonly termed  the ``reference point'' for the tangent space $\mathcal{T}_{\boldsymbol{p}}\mathbb{B}_k^2$. Any point $\boldsymbol{x}\in \mathbb{B}_k^2$ can be mapped to the tangent space $\mathcal{T}_{\boldsymbol{p}}\mathbb{B}_k^2$ as a tangent vector $\boldsymbol{v}=\log_{\boldsymbol{p}}\boldsymbol{x}$ via the logarithmic map $\log_{\boldsymbol{p}}(\cdot):\mathbb{B}_k^2\rightarrow \mathcal{T}_{\boldsymbol{p}}\mathbb{B}_k^2$. Conversely, the inverse mapping is given by the exponential map $\exp_{\boldsymbol{p}}(\cdot):\mathcal{T}_{\boldsymbol{p}}\mathbb{B}_k^2\rightarrow \mathbb{B}_k^2$. These two mappings are formally defined as follows:
\begin{align}
    \bv&=\log_{\boldsymbol{p}}(\boldsymbol{x})=
\frac{1-k\|\bp\|^2}{\sqrt{k}}
\tanh^{-1}(\sqrt{k}\lVert(-\boldsymbol{p})\oplus_k\boldsymbol{x}\rVert)\frac{(-\boldsymbol{p})\oplus_k\boldsymbol{x}}{\lVert(-\boldsymbol{p})\oplus_k\boldsymbol{x}\rVert}, \bp\in \mathbb{B}_k^2, \bx\in\mathbb{B}_k^2;\label{eq:log_map}\\
\bx&=\exp_{\boldsymbol{p}}(\boldsymbol{v})=\boldsymbol{p}\oplus_k \left( \tanh\left(\frac{\sqrt{k}\|\boldsymbol{v}\|}{1-k\|\bp\|^2}\right)\frac{\boldsymbol{v}}{\sqrt{k}\|\boldsymbol{v}\|}\right),\bp\in\mathbb{B}_k^2,\bv\in \mathcal{T}_{\boldsymbol{p}}\mathbb{B}_k^2.
\end{align}
Here, $\oplus_k$ stands for M\"obius addition in the Poincar\'e ball model, defined as 
\begin{align}\label{eq:mobius_add}
    \boldsymbol{x}\oplus_k\boldsymbol{y}= \frac{(1+2k\langle \boldsymbol{x},\boldsymbol{y}\rangle+k\lVert \boldsymbol{y}\rVert^2)\boldsymbol{x}+(1-k\lVert \boldsymbol{x}\rVert^2)\boldsymbol{y}}{1+2k\langle \boldsymbol{x},\boldsymbol{y}\rangle +k^2\lVert\boldsymbol{x}\rVert^2\lVert\boldsymbol{y}\rVert^2}, \forall \bx, \by \in \mathbb{B}_k^2.
\end{align}
Furthermore, the distance between $\boldsymbol{x}\in\mathbb{B}_k^2$ and $\boldsymbol{y}\in\mathbb{B}_k^2$ is given by
\begin{align}\label{eq:vectordistance}
    d_k(\boldsymbol{x},\boldsymbol{y})=\frac{2}{\sqrt{k}}\text{tanh}^{-1}(\sqrt{k}\lVert(-\boldsymbol{x})\oplus_k\boldsymbol{y}\rVert).
\end{align}

The notions of reference point $\bp$ and tangent space $\mathcal{T}_{\boldsymbol{p}}\mathbb{B}_k^2$ allow one to define a Poincar\'e hyperplane as
\begin{equation}
\label{eq:poin_hyp}
    H_{\boldsymbol{w},\boldsymbol{p}}\overset{\Delta}{=}\{\boldsymbol{x}\in\mathbb{B}_k^2:\langle(-\bp\oplus_k \bx),\bw\rangle=0\}{=}\{\boldsymbol{x}\in\mathbb{B}_k^2:\langle \log_{\bp}(\bx), \boldsymbol{w}\rangle=0\},
\end{equation}
where $\bw\in \mathcal{T}_{\boldsymbol{p}}\mathbb{B}_k^2$ denotes the normal vector of this hyperplane, resembling the normal vector for Euclidean hyperplane. Furthermore, as evident from the first of the two equivalent definitions of a Poincar\'e hyperplane, the reference point $\bp\in H_{\boldsymbol{w},\boldsymbol{p}}$ resembles the ``bias term'' for Euclidean hyperplane.

\textbf{SVM in Hyperbolic Spaces}: Given a Poincar\'e hyperplane as defined in (\ref{eq:poin_hyp}), one can formulate linear classification problems in the Poincar\'e disc model analogous to their counterparts in Euclidean spaces. Our focus is on hyperbolic SVMs, since they have convex formulations and broad applicability ~\citep{pan2023provably}. As shown in~\citep{pan2023provably}, solving the SVM problem over $\mathbb{B}_k^2$ is equivalent to solving the following convex problem: 
\begin{align}\label{eq:hard-margin-svm-convex}
    \min_{\boldsymbol{w}{\in}\mathbb{R}^2} \frac{1}{2}\|\boldsymbol{w}\|^2\quad \text{s.t.}\ y_j\langle \log_{\bp}(\bx_j), \boldsymbol{w}\rangle \geq 1\;\forall j\in [N].
\end{align}
The hyperbolic SVM formulation in (\ref{eq:hard-margin-svm-convex}) is well-suited to two well-separated clusters of data points. However, in practice, clusters may not be well-separated, in which case one needs to solve the convex soft-margin SVM problem
\begin{align}\label{eq:soft-margin-svm-convex}
    \min_{\boldsymbol{w}{\in}\mathbb{R}^2} \frac{1}{2}\|\boldsymbol{w}\|^2\ + {\lambda}\sum_{j=1}^N\max\left(0, 1-y_j\langle \log_{\bp}(\bx_j),\boldsymbol{w}\rangle\right).
\end{align}

For solving the optimization problems~(\ref{eq:hard-margin-svm-convex}) and~(\ref{eq:soft-margin-svm-convex}), one requires the reference point $\bp$. However, the reference point is not known beforehand. Hence, one needs to estimate a ``good''  $\bp$ for a given data set before solving the optimization problem that leads to the normal vector $\boldsymbol{w}$. To this end, for linearly separable data sets with binary labels, it can be shown that the optimal Poincar\'e hyperplane that can correctly classifier all data points must correspond to a reference point $\bp$ that does not fall within either of the two convex hulls of data classes, since the hyperplane always passes through $\bp$. Convex hulls in Poincar\'e disc can be defined by replacing lines with geodesics in the definition of Euclidean convex hulls~\citep{ratcliffe2006foundations}. 

To estimate the reference point $\bp$ in practice, a heuristic adaptation of the Graham scan algorithm~\citep{graham1972efficient} for finding convex hulls of points in the Poincar\'e disc has been proposed in~\citep{pan2023provably}. Specifically, the reference point is generated by first constructing the Poincar\'e convex hulls of the classes labeled by $+1$ and $-1$, and then choosing as the geodesic midpoint of the closest pair of extreme points (wrt the hyperbolic distance) in the two convex hulls. The normal vector $\boldsymbol{w}$ is then determined using (\ref{eq:hard-margin-svm-convex}) or (\ref{eq:soft-margin-svm-convex}). 

Next, we proceed to introduce the problem of federated hyperbolic SVM classification, and describe an end-to-end solution for the problem.
\section{Federated Classification in Hyperbolic Spaces: Problem Formulations and Solutions}
\label{sec:hfc}
\label{sec:fl_gen}

As described in Section \ref{sec:intro}, our proposed approach is motivated by genomic/multiomic data analysis, since such data often exhibits a hierarchical structure and is traditionally stored in a distributed manner. Genomic data repositories are subject to stringent patient privacy constraints and due to the sheer volume of the data, they also face problems due to significant communication and computational complexity overheads.

Therefore, in the federated learning setting corresponding to such scenarios, one has to devise a distributed classification method for hyperbolic spaces that is privacy-preserving and allows for efficient client-server communication protocols. Moreover, due to certain limitations of (local) differential privacy (DP), particularly with regards to biological data, new privacy constraints are required. DP is standardly ensured through addition of privacy noise, which is problematic because biological data is already highly noisy and adding noise significantly reduces the utility of the inference pipelines. Furthermore, the proposed privacy method needs to be able to accommodate potential ``label switching'' problems, which arise due to inconsistent class labeling at the clients, preventing global reconciliation of the clients' data sets at the server side.

More formally, for the federated model at hand, we assume that there are $L\geq 2$ clients in possession of private hierarchical data sets. The problem of interest is to perform hyperbolic SVM classification over the union of the clients' data sets at the FL server via selection of a suitable reference point and the corresponding normal vector. For communication efficiency, we also require that the classifier be learned in a single round of communication from the clients to the server. We enforce this requirement since one-round transmission ensures a good trade-off between communication complexity and accuracy, given that the SVM classifiers only use quantized  convex hulls.

In what follows, we present the main challenges in addressing the aforementioned problem of interest, alongside an overview of our proposed solutions.
\begin{enumerate}[leftmargin=*]
\item \textbf{Data Model and Poincar\'e Embeddings.} 
We denote the data set of client $i$ which is of size $M_i$ by {tuples} $\bZ^{(i)}=\{\boldsymbol{z}_{i,j},y_{i,j}\}^{M_i}_{j=1}$ for $i\in\{1,\ldots,L\}$. Here, for $j\in\{1\ldots,M_i\}$, $\boldsymbol{z}_{i,j}\in\mathbb{R}^d$ stands for the data points of client $i,$ while $y_{i,j}$ denotes the corresponding labels. For simplicity, we assume the data sets to be nonintersecting (i.e., $\bZ^{(i)}\cap \bZ^{(j)}=\emptyset,\,\forall i \neq j$, {meaning that no two clients share the same data with the same label}) and generated by sampling without replacement from an underlying global data set $\bZ$. 

Client $i$ first embeds $\bZ^{(i)}$ into $\mathbb{B}_k^2$ (see Figure \ref{fig:fc_framework}-(a)) to obtain $\bX^{(i)}=\{\boldsymbol{x}_{i,j}\in \mathbb{B}_k^2\}^{M_i}_{j{=}1}$. {To perform the embedding, one can use any of the available methods described in~\citep{sarkar2012low,sala2018representation,skopek2020mixed,klimovskaia2020poincare,khrulkov2020hyperbolic,sonthalia2020tree,lin2023hyperbolic}}. However, an independent procedure for embedding data points at each client can lead to geometric misalignment across clients, as the embeddings are rotationally invariant in the Poincar\'e disc. To resolve this problem, one can perform a Procrustes-based matching of the point sets $\bX^{(i)}, i\in\{1,\ldots,L\}$ in the hyperbolic space~\citep{tabaghi2021procrustes}. Since Poincar\'e embedding procedures are not the focus of this work, we make the simplifying modelling assumption that the point sets $\bX^{(i)}=\{\boldsymbol{x}_{i,j}\in \mathbb{B}_k^2\}^{M_i}_{j=1}, i\in\{1,\ldots,L\}$ are sampled from a global embedding over $\cup_{i=1}^L \bZ^{(i)}.$ Although such an embedding uses the whole data set and may hence not be private, it mitigates the need to include the Procrustes processing into the learning pipeline.

\item \textbf{Classification at the Client Level.} When performing federated SVM classification in Euclidean spaces, each client trains its own SVM classifier over its local data set; the server then ``averages'' the local models to obtain the global classifier \citep{FL}. It is not straightforward to extend such a solution to hyperbolic spaces. Particularly, the averaging procedure of local models is not as simple as FedAvg in Euclidean spaces due to the nonaffine operations involved in the definition of a Poincar\'e hyperplane~(\ref{eq:poin_hyp}).

We remedy the problem of aggregating classifier models at the server side by sending \textit{minimal} convex hulls of classes instead of classifier models. A minimal convex hull is defined as follows.
\begin{definition}
\label{def:min_cx_hll}
\textbf{(Minimal Poincar\'e convex hull)} Given a point set $\mathcal{D}$ of $N$ points in the Poincar\'e disc, the minimal convex hull $CH(\mathcal{D})\subseteq \mathcal{D}$ is the smallest nonredundant collection of data points whose convex hull contains all points in $\mathcal{D}$.
\end{definition}
In this setting, client $i$, for $i\in\{1,\ldots,L\}$, computes the minimal convex hulls $CH_{+}^{(i)}$ and $CH_{-}^{(i)}$ of $\bX_{+}^{(i)}$ and $\bX_{-}^{(i)}$ respectively, as illustrated in Figure \ref{fig:fc_framework}-(b). Here, $\bX_{+}^{(i)}$ and $\bX_{-}^{(i)}$ denote the point sets of the client's local classes with labels $+1$ and $-1$ respectively. To find such convex hulls, we devise a new form  of the Poincar\'e Graham scan algorithm and prove its correctness, and in the process establish computational complexity guarantees as well. For detailed result statements, see Section~\ref{sec:PGS}. This approach is motivated by the fact that for the server to compute a suitable global reference point, it only needs to construct the convex hulls of the global clusters. 

In the ideal scenario where all local labels are in agreement with the global labels and where no additional processing is performed at the client's side (such as quantization), the minimal convex hulls of the global clusters equals $CH(\cup_{i=1}^L \bX_{+}^{(i)})$ and $CH(\cup_{i=1}^L \bX_{-}^{(i)})$. Furthermore, the minimal convex hulls of the global and local clusters are related as $CH(\cup_{i=1}^L \bX_{+}^{(i)})=CH(\cup_{i=1}^L CH( \bX_{+}^{(i)}))$ and $CH(\cup_{i=1}^L \bX_{-}^{(i)})=CH(\cup_{i=1}^L CH( \bX_{-}^{(i)}))$. As a result, each client can simply communicate the extreme points of the local convex hulls to the server; the server can then find the union of the point sets belonging to the same global class, construct the global minimal convex hulls and find a global reference point $\boldsymbol{p}$. 

This ideal setting does not capture the constraints encountered in practice since a) the extreme points of the convex hulls have to be quantized to ensure efficient transmission; b) the issue of label switching across clients has to be resolved; and c) secure aggregation and privacy protocols have to be put in place before the transmission of local client information. 

\noindent \textbf{Poincar\'e Quantization.} To reduce the communication complexity, extreme points of local convex hulls are quantized before transmission to the server. In Euclidean spaces, one can perform grid based quantization, which ensures that the distance between pairs of points from arbitrary quantization bins are uniformly bounded. To meet a similar constraint, we describe next the quantization criterion for $\mathbb{B}_k^2$. 
\begin{definition}
    \label{def:poin_quant}
\textbf{($\epsilon$-Poincar\'e quantization)} For $\epsilon>0$, quantization scheme for $\mathbb{B}_k^2$ is said to be a $\epsilon$-Poincar\'e quantization scheme if for any pair of points $\boldsymbol{x}$ and $\boldsymbol{y}$ that lie in the same quantization bin, their hyperbolic distance $d_k(\boldsymbol{x},\boldsymbol{y})$ as defined in (\ref{eq:vectordistance}) satisfies $d_k(\boldsymbol{x},\boldsymbol{y})\leq\epsilon$. 
\end{definition}

To facilitate the transmission of extreme points, we develop {a novel }$\epsilon$-Poincar\'e quantization scheme, which we subsequently refer to as ${PQ_{\epsilon}}(\cdot)$ (for a mathematical analysis of the quantization scheme, see Section~\ref{sec:PBDQ}). The ${PQ_{\epsilon}}(\cdot)$ method uses radial and angular information (akin to polar coordinates in Euclidean spaces). To bound the bins in the angular domain, we partition the circumference such that the hyperbolic length of any quantization bin along the circumference is bounded from above by $\epsilon/2$. Similarly, we choose to partition the radial length so that all bins have hyperbolic radial length equal to $\epsilon/2$. This ensures that the quantization criteria in Definition~\ref{def:poin_quant} is satisfied. The parameter $\epsilon$ determines the total number of the quantization bins, and thus can be chosen to trade the quantization error with the number of bits needed for transmitting the convex hulls. Furthermore, we choose the ``bin centroids (centers)'' accordingly.  Using the ${PQ_{\epsilon}}(\cdot)$ protocol, each client $i$ quantizes its minimal Poincar\'e convex hulls $CH_{+}^{(i)}$ and $CH_{-}^{(i)}$ to create two quantized convex hull point sets $Q_{+}^{(i)}=\{{PQ_{\epsilon}}(\boldsymbol{x}):\,\boldsymbol{x}\in  CH_{+}^{(i)}\}$ and $Q_{-}^{(i)}=\{{PQ_{\epsilon}}(\boldsymbol{x}):\, \boldsymbol{x}\in  CH_{i}^{(i)}\}$. The quantized convex hulls (i.e., extreme points) are also used for measuring the information leakage of the hyperbolic classification approach, as outlined in what follows.   

\textbf{Privacy Leakage.} A major requirement in FL is to protect the privacy of individual client data~\citep{bonawitz2017practical}. One of the most frequently used privacy guarantees are that the server cannot learn the local data statistics and/or identity of a client based on the received information (for other related privacy criteria in federated clustering, please refer to ~\citep{pan2023machine}). However, such a privacy criteria is quite restrictive for classification tasks using $Q_{+}^{(i)}$ and $Q_{-}^{(i)}$, for $i\in[L]$, as the server needs to reconstruct the hulls for subsequent processing. Hence, we consider a different notion of privacy, where some restricted information pertaining to local data statistics (specifically, $Q_{+}^{(i)}$ and $Q_{-}^{(i)}$, for $i\in[L]$), is allowed to be revealed to the server, while retaining full client anonymity. 

More precisely, to quantify the information leakage introduced by our scheme, we make use of the so-called $\epsilon$-minimal Poincar\'e convex hull, defined next.
\begin{definition}
    \label{def:min_quant_cx_hll}
\textbf{($\epsilon$-Minimal Poincar\'e convex hull)} Given $\epsilon>0$, a quantization scheme ${PQ_{\epsilon}}(\cdot)$ for $\mathbb{B}_k^2$, and a set $\mathcal{D}$ of $N$ points in the Poincar\'e disc, let $Q=\{{PQ_{\epsilon}}(\boldsymbol{x}):\, \boldsymbol{x}\in CH(\mathcal{D})\}$ be the point set obtained by quantizing the extreme points of the convex hull of $\mathcal{D}$. The $\epsilon$-minimal Poincar\'e convex hull $\hat{CH}(\mathcal{D})\subseteq Q$ is the smallest nonredundant collection of points whose convex hull contains all points in $Q$.
\end{definition}

To avoid notational clutter, we henceforth use ``quantized convex hull'' of a given point set $\mathcal{D}$ to refer to the $\epsilon$-minimal Poincar\'e convex hull of $\mathcal{D}$. Client $i$ computes two quantized convex hulls $\hat{CH}( \bX_{+}^{(i)})$ and $\hat{CH}( \bX_{-}^{(i)})$ by finding the minimal convex hulls of $Q_{+}^{(i)}$ and $Q_{-}^{(i)},$ respectively. We use the number of extreme points of the quantized convex hull (i.e., the complexity of the convex hull), defined next, as a surrogate function that measures privacy leakage.

\begin{definition}
    \label{def:priv_leak}
\textbf{($\epsilon$-Convex hull complexity)} Given $\epsilon>0$, a $\epsilon$-Poincar\'e quantization scheme ${PQ_{\epsilon}}(\cdot)$, and a set $\mathcal{D}$ of $N$ points in the Poincar\'e disc, let $Q=\{{PQ_{\epsilon}}(\boldsymbol{x}):\, \boldsymbol{x}\in CH(\mathcal{D})\}$ be the point set obtained by quantizing the extreme points in the convex hull of $\mathcal{D}$. The $\epsilon$-convex hull complexity is the number of extreme points in the convex hull of $Q$, i.e., the size of $\hat{CH}(\mathcal{D})$.
\end{definition}

We would like to point out that our notion of privacy does not rely on (local) differential privacy (DP) or information theoretic privacy. Instead, it may be viewed as a new form of ``combinatorial privacy,'' enhanced by controlled data quantization-type obfuscation. This approach to privacy is suitable for computational biology applications, where DP noise may be undesirable as it significantly reduces the utility of learning methods.

In addition to compromising accuracy, DP has practical drawbacks for other practical applications \citep{kifer2011no,bagdasaryan2019differential,zhang2022correlated,kan2023seeking}. The work~\cite{bagdasaryan2019differential} established that DP models are more detrimental for underrepresented classes. More precisely, fairness disparity is significantly pronounced in DP models, as the accuracy drop for minority (outlier) classes is more significant than for other classes. Additionally, in many practical scenarios, meeting requirements for DP, which rely on particular assumptions about the data probability distribution, might prove challenging \citep{kifer2011no,zhang2022correlated,kan2023seeking}. For instance, when database records exhibit strong correlations, ensuring robust privacy protection becomes difficult. All of the above properties that cause issues are inherent to biological (genomic) data. First, genomic data is imbalanced across classes. Second, as it derives from human patients, it is highly correlated due to controlled and synchronized activities of genes within cells of members of the same species.

To overcome the aforementioned concerns, we require that 1) the clients reveal a small as possible number of data points that are quantized (both for the purpose of reducing communication complexity and further obfuscating information about the data); 2) the server collects data without knowing its provenance or the identity of the individual clients that contribute the data through the use of secure aggregation of the quantized minimal convex hulls of the client information. To address the first requirement, we introduce a new notion of privacy leakage in Definition \ref{def:priv_leak}. If the number of quantized extreme points is small and the quantization regions are large, the $\epsilon$-convex hull complexity is small and consequentially the privacy leakage is ``small'' (i.e, the number of points whose information is leaked is small). For bounds on the $\epsilon$-convex hull complexity in hyperbolic spaces (and, consequently, the privacy leakage), the reader is referred to Section \ref{sec:cx_cmplexty}.

To address the second requirement, we adapt standard secure aggregation protocols to prevent the server from inferring which convex hull points belong to which client, thus hiding the origin of the data.

\begin{figure}[htb]
    \centering
    \includegraphics[width=0.8\linewidth]{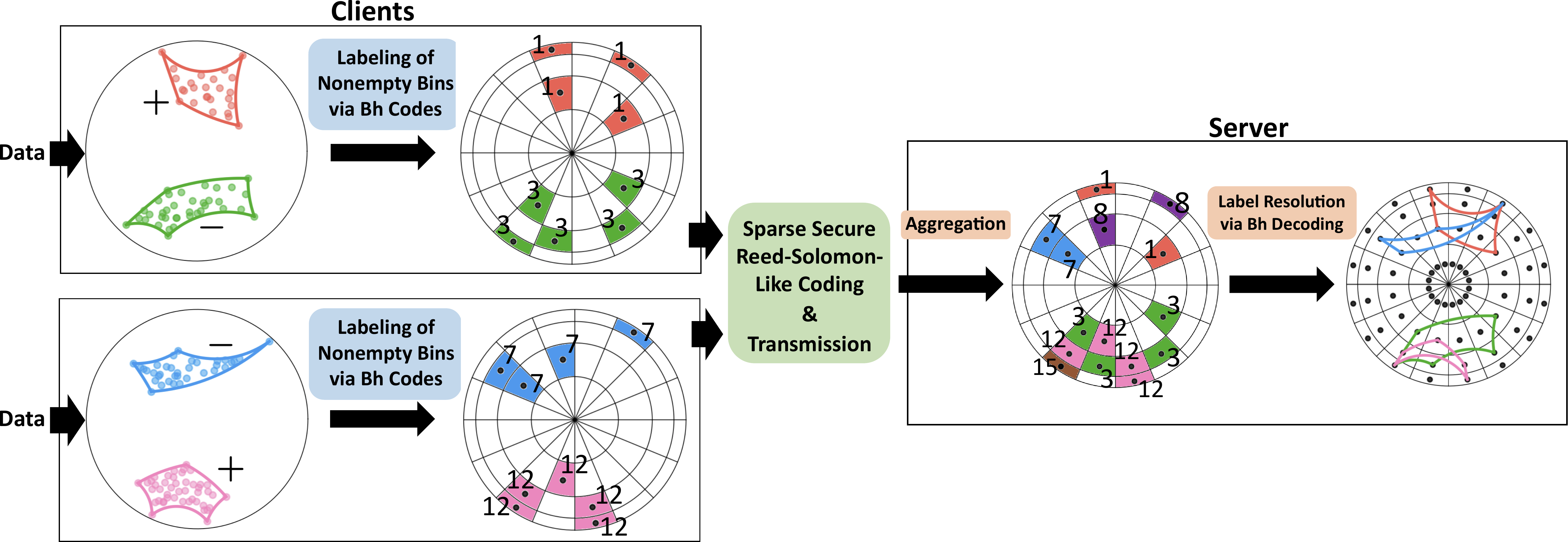}
    \caption{Proposed solution for the label switching problem. Each client has its own data set with its own set of binary labels. Upon obtaining the convex hull, the extreme points are quantized to their nearest bin-centers, and the corresponding bins are labeled with one of two integers from the $B_2$ sequence assigned to each client. At the server side, the values in the nonempty bins are first aggregated. Next, by the property of the $B_h$ sequence that the sum of any subset of size $\leq h$ of elements in the sequence is unique, all individual labels for the bins that are shared across multiple convex hulls (such as the bins colored in purple and brown) are decoded. Therefore, the server is able to recover individual (quantized) local convex hulls, which are further processed for global labeling and reference point computation (see Figure \ref{fig:fc_framework}).}
    \label{fig:label_switching}
\end{figure}

 

\textbf{Label Switching.} Another major challenge faced by the clients is ``label switching.'' When performing classification in a distributed manner, unlike for the centralized case, one has to resolve the problem of local client labels being potentially switched compared to each other and/or the ground truth. Clearly, the mapping from the local labels at the client level to the global ground truth labels is unknown both to the client and the server.

The label switching problem is illustrated in Figure \ref{fig:label_switching}, where clients use mismatched choices for assigning the binary labels $\{-1,+1\}$ to their local data points. To enable federated classification, the label switching problem has to be addressed in a private and communication efficient manner, which is challenging. Our proposed solution is shown in Figure~\ref{fig:label_switching}, and it is based on so-called $B_h$ sequences. In a nutshell, $B_h$ sequences are sequences of positive integers with the defining property that any sum of $\leq h$ possibly repeated terms of the sequence uniquely identifies the summands (see Section~\ref{sec:labelencoding} for a rigorous exposition). Each client is assigned two different integers from the $B_h$ sequence to be used as labels of their classes so that neither the server nor the other clients have access to this information. Since secure aggregation is typically performed on sums of labels, and the sum uniquely identifies the constituent labels that are associated with the client classes, all labels can be matched up with the two ground truth classes. As an illustrative example, the integers $1, 3, 7, 12, 20, 30, 44, \ldots$ form a $B_2$ sequence, since no two (not necessarily distinct) integers in the sequence produce the same sum. Now, if one client is assigned labels $1$ (red) and $3$ (green), while another is assigned $7$ (blue) and $12$ (pink), then a securely aggregated label $1+7=8$ (purple) for points in the intersection of their convex hulls uniquely indicates that these points belonged to both the cluster labeled $1$ at one client, and the cluster labeled $7$ at another client (see Figure~\ref{fig:label_switching}, right panel). Note that there are many different choices for $B_h$ sequences to be used, which prevents potential adversaries to infer information about label values and assignments.

Note that a different notion of label misalignment termed  ``concept shift'' has been considered in prior works, including ~\citep{sattler2020clustered,bao2023optimizing}. In the referenced settings, label misalignment is a result of client-specific global conditional data distributions (i.e., distributions of global labels given local feature). Label misalignments are therefore handled by grouping clients based on the similarity of their conditional data distributions; furthermore, a different global model is trained for each group of clients. In contrast, we actually align labels of clients in the presence of switched labels, since for our problem, the conditional global ground truth distributions are the same. {Another related line of works is that of ``label noise'' \cite{petety2020attribute,song2022learning}, where effectively, the global label for each individual data point is assumed to be noisy, thus differing from our setting with misalignments across clients. }

\item \textbf{Secure Transmission.} 
As shown in Figure~\ref{fig:fc_framework}-(d), the number of nonempty quantization bins labeled by $B_h$ integers is typically significantly smaller than the total number of quantization bins. As the client only needs to communicate to the server the number of extreme points in nonempty quantization bins, sparse coding protocols and secure aggregation protocols are desirable. In particular, the server should be able to collect data without knowing its provenance or the individual clients/users that contributed the data through the use of secure aggregation of the quantized minimal convex hulls of the client data, while ensuring that the secure transmission method is also communication efficient.
We address this problem by leveraging a method proposed in ~\citep{pan2023machine} termed Secure Compressed Multiset Aggregation (SCMA). The key idea is to use Reed-Solomon-like codes to encode information about the identity of nonempty quantization bins and the number of extreme points in them through evaluations of low-degree polynomials over prime finite field. The bin indices represent distinct elements of the finite field (since all bins are different), while the number of extreme points in a bin represents the coefficient of the term corresponding to the element of the finite field encoding the bin (the number of points in different bins may be the same). 

In our FL method, we follow a similar idea, with the difference that we use the local labels (encoded using $B_h$ sequences) associated with the clusters to represent the coefficients of the terms, instead of using the number of extreme points. For this, we first index the quantization bins as $1, 2, 3, \ldots$, starting from the innermost angular bins and moving counter-clockwise (the first bin being at angle $0$). For the quantization bins in Figure ~\ref{fig:label_switching}, the bins are indexed $1,\ldots,64$. For example, the four nonempty bins corresponding to label $1$ are indexed by $18, 21, 51, 53$, while the five non-empty green bins are indexed by $27, 31, 44, 46, 59$. The SCMA secure transmission procedure is depicted in Figure ~\ref{fig:scma_example}, where the clients send the weighted sum of bin indices to the server so that the weights are labels of the clusters and the server aggregates the weighted sums. From the weighted sums, the server decodes the labels of all nonempty bins. The underlying field size is chosen to be greater than the number of bins. A mathematically rigorous exposition of the results is provided in Section~\ref{sec:sectran}. 

\begin{figure}[htb]
    \centering
    \includegraphics[width=0.8\linewidth]{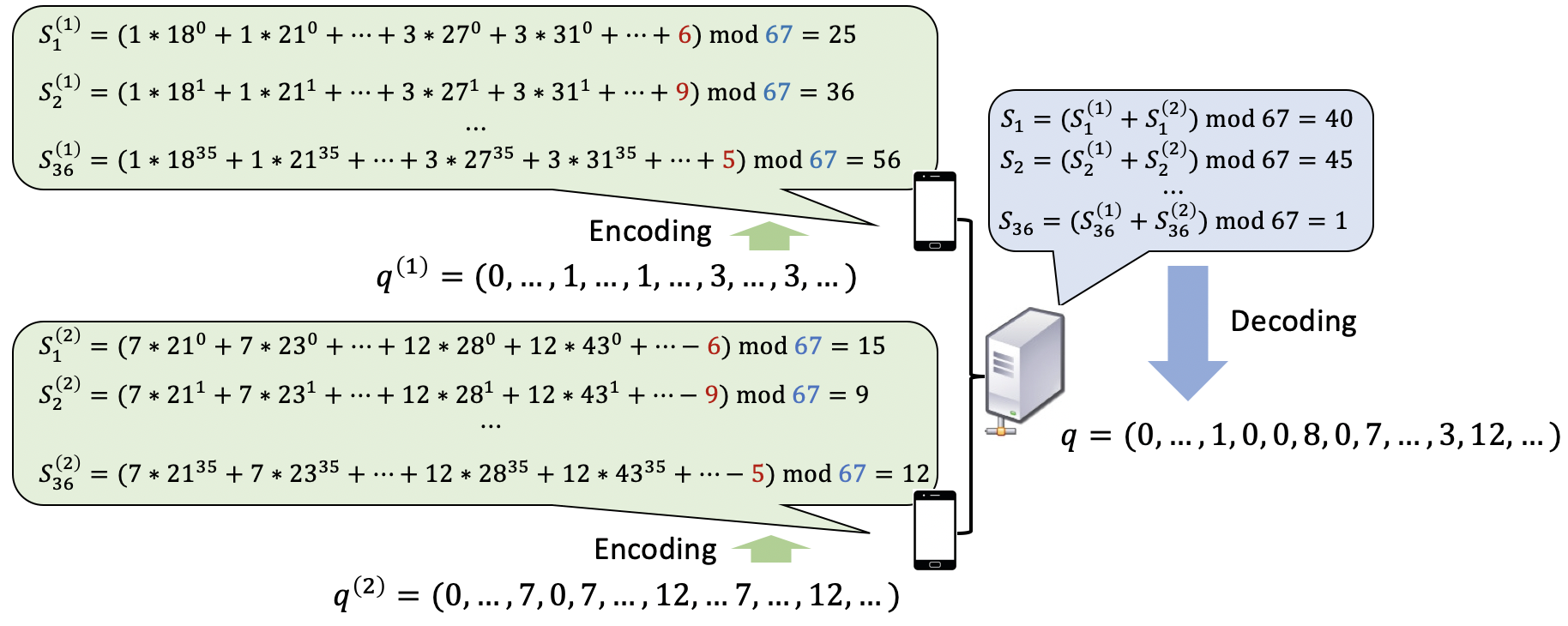}
    \caption{The encoding and decoding procedure used in SCMA for transmitting the data sets shown in Figure ~\ref{fig:label_switching}.}
    \label{fig:scma_example}
\end{figure}

\item \textbf{Classification at the Server.}
Upon receiving the information about quantized extreme points of the data classes communicated by the clients, the server performs the following steps to construct a global SVM classifier.  

\noindent \textbf{Label Decoding.} The server uses the $B_h$ labels of nonempty quantization bins, which equal the aggregate (sum) of the corresponding local labels used by clients that contain data points in that quantization bin (see Figure \ref{fig:label_switching} where shared nonempty bins are indexed by aggregated labels $8 (=1+7)$ and $15(=3+12)$ (purple and brown respectively)). The sums of the labels for each bin are acquired during the secure transmission step. By the defining property of $B_h$ sequences, the server correctly identifies the individual local labels  that contribute to a nonempty bin from their sum, as long as the parameter $h$ is larger than or equal to the number of different local labels that contribute to  the bin. Therefore, $h$ should be selected as the maximum number of classes across all clients that have at least one extreme point in the same bin. Therefore, $h \leq 2L$ in the worst case since there are at most $2L$ local classes in total. In practice, $h$ is significantly smaller (see an example for $h=2$ in Figure \ref{fig:label_switching}). Thus, through $B_h$ decoding, each extreme point in each quantized convex hull can be recovered; this implies that each individual convex hull can also be unambiguously recovered at the server side, as shown in Figure \ref{fig:fc_framework}-(e).  

\begin{remark}
    {The above steps allow the server to recover the quantized convex hulls of the local classes at each client, without learning anything about the data provenance. More precisely, the server cannot associate the identities of the clients to the recovered convex hulls. Furthermore, the server receives only a very small number of quantized extreme points of the convex hull of the data sets at the client side, and the quantization noise may be roughly viewed as a noise-privacy mechanism. \iffalse, although without the classical differential privacy guarantees. We believe that this approach is most suitable for biological applications, as the associated data sets are highly noisy and computational biologists do not fully embrace noise-adding mechanisms which tend to significantly reduce the performance of the models applied to them.\fi} 
\end{remark}
\noindent \textbf{Balanced Grouping of Convex Hulls.} After label decoding, the next server task involves clustering the $2L$ convex hulls into $2$ groups, where each group corresponds to a global ground truth label. To solve this problem, we propose a graph-theoretic approach using information about distances between pairs of point sets in the Poincaré disc. More precisely, let the convex hulls be labeled by $\{1,\ldots,2L\}$, and let $CH^{(i)}$ denote the $i$th convex hull for $i\in\{1,\ldots,2L\}$. We construct a weighted complete graph $\mathcal{G}(V)$ with $V=\{1,\ldots,2L\}$ representing the $2L$ convex hulls. For a pair of nodes $u,v\in V$, the edge-weight $w(\{u,v\})=1/d_{\mathcal{G}}(u,v)$, where $d_{\mathcal{G}}(u,v)$ is the average pairwise hyperbolic distance between points in the convex hulls $CH^{(u)}$ and $CH^{(v)}$, i.e., 
\begin{equation}
\label{eq:cx_hulls_distance}
    d_{\mathcal{G}}(u, v)=\frac{\sum_{\boldsymbol{x}_1\in CH^{(u)}}\sum_{\boldsymbol{x}_2\in CH^{(v)}}d_{{k}}(\boldsymbol{x}_1,\boldsymbol{x}_2)}{|CH^{(u)}||CH^{(v)}|}.
\end{equation}
Using the Kernighan-Lin balanced graph partitioning algorithm ~\citep{kernighan1970efficient}, we arrive at two groups of aggregated local convex hulls. Particularly, $V$ is partitioned into two parts, each of size $L$, so as to minimize the sum of the edge-weights between the two parts (i.e., the weight of the balanced cut). This in turn results in two global groups of convex hulls. These steps are shown in Figure \ref{fig:fc_framework}-(f). The formal description of our approach is provided in Section~\ref{sec:graph_partition}.

\noindent \textbf{Learning the Global SVM Classifier.} Upon completion of the graph partitioning step, the server assigns global binary labels to the two global point sets. Then, following the procedure described in Section~\ref{sec:hyp_cls}, the server constructs the global reference point $\boldsymbol{p}$. Due to quantization, the server treats the reconstructed global point sets as approximations/surrogates for the true global ground truth clusters, and solves the optimization problem (\ref{eq:hard-margin-svm-convex}) or (\ref{eq:soft-margin-svm-convex}) to obtain the normal vector $\boldsymbol{w}$ of the classifier. This step is shown in Figure~\ref{fig:fc_framework}-(g). Due to the distortions in the local hulls caused by Poincar\'e quantization, the estimated global hulls are also distorted, as shown in Figure \ref{fig:fc_framework}-(g). Hence, the performance of the trained classifier depends on the quantization size $\epsilon$: A smaller value of $\epsilon$ leads to smaller convex hull distortions but larger communication overhead. In Section~\ref{sec:exp}, we demonstrate through extensive simulations that the the global classifier learned using our framework performs well in practice for a relatively large range of $\epsilon$ values, both for synthetic and real data sets. 
\end{enumerate}


\section{Detailed Explanations and Main Results}
\label{sec:scheme}

In what follows, we provide rigorous algorithmic results and performance guarantees for every learning and computational step of the proposed one-shot federated SVM classification algorithm for hyperbolic spaces.   
\let\classAND\AND
\let\AND\relax
\let\algoAND\AND
\let\AND\classAND
\AtBeginEnvironment{algorithmic}{\let\AND\algoAND}

\begin{algorithm}
  \caption{Poincar\'e Graham Scan }
  \label{alg:PGS}
  \begin{algorithmic}[1]
    \STATE \textbf{Input:} Points $\mathcal{D}=\{\boldsymbol{x}_1,\cdots,\boldsymbol{x}_N\}$ in the Poincar\'e disc $\mathbb{B}_k^2$.
    \STATE $\boldsymbol{b}$ $\leftarrow$ the point in $\mathcal{D}$ at largest distance from the origin of $\mathbb{B}_k^2$, i.e., $\boldsymbol{o}=[0,0]$.  
    \STATE Normal vector $\boldsymbol{n}\leftarrow -\frac{\log_{\boldsymbol{b}}(\boldsymbol{o})}{\|\log_{\boldsymbol{b}}(\boldsymbol{o})\|}$, $\log_{\boldsymbol{b}}(\boldsymbol{o})$ is logarithmic map in (\ref{eq:log_map}).
    \STATE Tangent vector $\boldsymbol{t}\leftarrow \boldsymbol{R}\, \boldsymbol{n}$, where $\boldsymbol{R}=\begin{bmatrix} 0 & -1 \\ 1 & 0 \end{bmatrix}$ denotes the $\frac{\pi}{2}$ counter-clockwise rotation matrix. 
    \STATE Sorted data $\mathcal{V}$ $\leftarrow$ for $j\in[N]$, arrange points in $\mathcal{D}$ based on the principle angle of $\log_{\boldsymbol{b}}(\boldsymbol{x}_j)$ with respect to $\boldsymbol{t},$ in ascending order. For points with the same principle angle, keep only the one with the largest norm $\|\log_{\boldsymbol{b}}(\boldsymbol{x}_j)\|$.
    \STATE $\mathcal{V}$ $\leftarrow$ $\mathcal{V}-\boldsymbol{b}$.
    \STATE $m \leftarrow |\mathcal{V}|$.
    \STATE STACK $= [\boldsymbol{b}]$. 
    \FOR{$j \in [m]$}
        \WHILE{ len(STACK)>1 and CCW(STACK[-2], STACK[-1], $\mathcal{V}$[$j$])$\leq$0}
            \STATE STACK.pop()
        \ENDWHILE
        \STATE STACK.append($\mathcal{V}$[$j$])
    \ENDFOR
    \\
    \STATE \textbf{Return} STACK
  \end{algorithmic}
\end{algorithm}
\begin{algorithm}
  \caption{CCW\iffalse$(\boldsymbol{a},\boldsymbol{b},\boldsymbol{c})$\fi} 
  \label{alg:CCW}
  \begin{algorithmic}[1]
    \STATE \textbf{Input:} Points $\boldsymbol{x}$, $\boldsymbol{t}$, $\boldsymbol{z}$ in $\mathbb{B}_k^2$.
    \STATE $\boldsymbol{u}_1 \leftarrow \frac{\log_{\boldsymbol{x}}(\boldsymbol{t})}{\|\log_{\boldsymbol{x}}(\boldsymbol{t})\|}$.
    \STATE $\boldsymbol{u}_2 \leftarrow \frac{\log_{\boldsymbol{x}}(\boldsymbol{z})}{\|\log_{\boldsymbol{x}}(\boldsymbol{z})\|}$.
    \STATE \textbf{Return} $\boldsymbol{u}_1$[0]$\boldsymbol{u}_2$[1]$-\boldsymbol{u}_1$[1]$\boldsymbol{u}_2$[0].
  \end{algorithmic}
\end{algorithm}
\subsection{Poincaré Convex Hulls}
\label{sec:PGS}
The first step of the end-to-end solution is to construct $CH_{+}^{(i)}$ and $CH_{-}^{(i)}$, the sets of extreme points in the convex hulls of local clusters at the clients $i\in\{1,\ldots,L\}$, with local labels $+1$ and $-1$, respectively. For this purpose, we adapt the Graham scan algorithm~\citep{graham1972efficient} widely used for computing convex hulls in Euclidean space. Our Graham scan is described in Alg. \ref{alg:PGS}, while the performance guarantees and complexity of the method are summarized in the next theorem.

\begin{theorem}%
\label{thm:PGS}
    Given a set $\mathcal{D}$ of $N$ points in the Poincar\'e disc, Alg.~\ref{alg:PGS} returns $CH(\mathcal{D})$ in $O(N\log N)$ time, where $CH(\mathcal{D})$ denotes the set of extreme points in the minimal convex hull of ${D}$.
\end{theorem}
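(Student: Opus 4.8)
The plan is to establish two things: (i) correctness — that Algorithm~\ref{alg:PGS} returns exactly $CH(\mathcal{D})$, the set of extreme points of the minimal Poincar\'e convex hull of $\mathcal{D}$ — and (ii) the $O(N\log N)$ running time. The guiding principle is that the classical Euclidean Graham scan argument should transfer to the Poincar\'e disc once we pass everything through the logarithmic map at a well-chosen base point. Concretely, I would fix $\boldsymbol{b}$ to be the point of $\mathcal{D}$ farthest from the origin (as in line~2), observe that $\boldsymbol{b}$ is necessarily an extreme point of the hull (it lies on the boundary of the smallest disc centered at $\boldsymbol{o}$ containing $\mathcal{D}$, hence it is a vertex of any convex set containing $\mathcal{D}$), and then work in the tangent space $\mathcal{T}_{\boldsymbol{b}}\mathbb{B}_k^2$ via $\log_{\boldsymbol{b}}$.

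\textbf{Step 1: Geodesics become rays through the origin of the tangent space.} The key geometric fact I would invoke is that the logarithmic map $\log_{\boldsymbol{b}}$ sends geodesics through $\boldsymbol{b}$ to straight lines through the origin of $\mathcal{T}_{\boldsymbol{b}}\mathbb{B}_k^2$, and more importantly that it is a radial isometry: the hyperbolic distance $d_k(\boldsymbol{b},\boldsymbol{x})$ equals $\|\log_{\boldsymbol{b}}(\boldsymbol{x})\|$, and angles at $\boldsymbol{b}$ between geodesics are preserved. Consequently, the angular sorting of the points of $\mathcal{D}-\{\boldsymbol{b}\}$ as seen from $\boldsymbol{b}$ (line~5, sorting by principal angle of $\log_{\boldsymbol{b}}(\boldsymbol{x}_j)$ relative to the tangent reference direction $\boldsymbol{t}$) is exactly the cyclic order in which a geodesic hull-boundary walk starting at $\boldsymbol{b}$ would encounter them. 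The choice of $\boldsymbol{t}$ as the $\pi/2$ rotation of the inward normal $\boldsymbol{n}$ guarantees all other points have principal angle in $[0,\pi)$ relative to $\boldsymbol{t}$, so the sort is unambiguous, and the tie-breaking rule (keep only the farthest among collinear points) discards points that are interior to a hull edge.

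\textbf{Step 2: The CCW test detects geodesic convexity locally.} I would show that Algorithm~\ref{alg:CCW}, which normalizes $\log_{\boldsymbol{x}}(\boldsymbol{t})$ and $\log_{\boldsymbol{x}}(\boldsymbol{z})$ in $\mathcal{T}_{\boldsymbol{x}}\mathbb{B}_k^2$ and returns the sign of their $2\times2$ cross product, correctly reports whether the geodesic path $\boldsymbol{x}\to\boldsymbol{t}\to\boldsymbol{z}$ turns left (counter-clockwise). This is because the cross product of the unit tangent vectors at $\boldsymbol{x}$ pointing toward $\boldsymbol{t}$ and toward $\boldsymbol{z}$ has the sign of the oriented angle between those two geodesic directions, which is precisely the notion of "left turn" for a geodesic polygon. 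Given this, the stack maintenance in the for-loop of Algorithm~\ref{alg:PGS} is the standard Graham scan invariant: after processing the first $j$ sorted points, STACK holds, from bottom to top, the vertices of the minimal convex hull of $\{\boldsymbol{b}\}\cup\{$first $j$ points$\}$ in geodesic-boundary order. The proof of the invariant is by induction on $j$, using that a point is popped exactly when it ceases to be a hull vertex (it lies on the wrong side of, or on, the geodesic through its stack-neighbors), and that popped points can never reappear — here one needs that geodesic convexity in $\mathbb{B}_k^2$ is genuine convexity (a property of CAT($-k$) spaces: geodesics between points of a convex set stay in the set, intersections of convex sets are convex), which legitimizes calling $CH(\mathcal{D})$ a well-defined minimal generating set.

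\textbf{Step 3: Complexity.} Line~2 is $O(N)$; computing $\boldsymbol{n},\boldsymbol{t}$ is $O(1)$; the sort in line~5 is $O(N\log N)$; each log-map and CCW evaluation is $O(1)$ (closed-form formulas in~(\ref{eq:log_map}),~(\ref{eq:mobius_add})); and the for-loop performs at most $2N$ push/pop operations total by the standard amortized argument, hence $O(N)$. The total is $O(N\log N)$, dominated by the sort.

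\textbf{Main obstacle.} The routine bookkeeping of the Graham-scan invariant is not the hard part; the delicate point is Step~2 — rigorously justifying that the \emph{tangent-space} CCW test (which mixes three different tangent spaces: the sort in $\mathcal{T}_{\boldsymbol{b}}$, but each turn-test in $\mathcal{T}_{\boldsymbol{x}}$ at the middle vertex $\boldsymbol{x}$) faithfully tracks geodesic left-turns, and that "locally left-turning at every vertex" is equivalent to global geodesic convexity of the resulting polygon in $\mathbb{B}_k^2$. In Euclidean space this equivalence is elementary; in the Poincar\'e disc one must lean on the non-positive curvature / uniqueness-of-geodesics structure, and be careful that the angular order established at $\boldsymbol{b}$ remains consistent with the turn directions measured at the other vertices (essentially a statement that the hull boundary is "star-shaped" as seen from $\boldsymbol{b}$, which follows since $\boldsymbol{b}$ is the farthest point from $\boldsymbol{o}$ and the points were angularly sorted about it). I would isolate this as the core lemma and prove it using the comparison-triangle properties of $\mathbb{B}_k^2$.
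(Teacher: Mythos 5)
Your proposal is correct and follows essentially the same route as the paper's proof: it anchors the scan at the farthest-from-origin point $\boldsymbol{b}$, uses the half-plane/angular-range observation (the paper's Lemma~\ref{lma:pts_sameside}) to validate the sort, runs the standard Graham-scan induction with the tangent-space CCW test justified by a point-in-triangle characterization (the paper's Lemma~\ref{lma:p_in_triangle}), and closes with the usual amortized $O(N\log N)$ count. The only cosmetic difference is that you propose to ground the "local left-turn implies global convexity" step in CAT$(-k)$ comparison arguments, whereas the paper argues directly from the circle-arc description of Poincar\'e geodesics.
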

Based on this result, it is easy to see that Alg. \ref{alg:PGS} has the same complexity as the Graham Scan algorithm for Euclidean spaces.  The proof of Theorem ~\ref{thm:PGS} can be found in Appendix ~\ref{app:PGS}.

\subsection{Poincaré Quantization}
\label{sec:PBDQ}
Quantization is standardly used to trade-off communication overhead and accuracy of point representations. In our FL setting, quantization also enables the new label encoding and decoding procedures that resolve label switching, as well as secure aggregation during transmission.

The quantization approach for Poincar\'e discs exploits radial symmetry of the space. Furthermore, for a given quantization parameter $\epsilon > 0$, it ensures that the hyperbolic distance between any two points in a quantization bin is bounded by $\epsilon$. 

We assume that the data points are confined to a region with a Euclidean radius of $R$ on the Poincar\'e disc, and correspondingly within a hyperbolic radius of $R_{H}$. Given the curvature constant $k$, we let $s=1/\sqrt{k}$. We can then relate $R$ and $R_H$ as follows~\citep{hitchman2009geometry}:
\begin{equation}\label{eq:rtoR}
    R_H=s\lnb{\frac{s+R}{s-R}}
    \quad\text{and}\quad
    R=s\mleft(\frac{e^\frac{R_H}{s}-1}{e^\frac{R_H}{s}+1}\mright).
\end{equation}
Furthermore, since the Poincar\'e disc is conformal, it preserves angles. Hence, we directly perform quantization in the hyperbolic plane, and then map the points back to the Poincar\'e disc. Towards this end, we start by observing that the circumference of a circle centered at origin and with hyperbolic radius $r_H$ equals \citep{hitchman2009geometry}:
\begin{equation}
\label{eq:circum}
    Cir(r_H)=2\pi s \sinh\left(\frac{r_H}{s}\right) 
\end{equation}
We first partition the Poincar\'e disc in terms of concentric circles around the origin, and then for each such partition, we perform further radial partition. 

Let $N_{\Theta}$ denote the number of angular quantization bins. Consider two points at the outer boundary of the outermost quantization bin. Then, by (\ref{eq:circum}), the length along the outer boundary equals $Cir(R_H)/{N_{\Theta}}$. Bounding this length by $\epsilon/2$, we have $N_{\Theta}=2 Cir(R_H)/\epsilon$. Clearly, the inner boundaries of the radial bins will have a length $\leq \epsilon/2$.

Next, let $N_{R_H}$ denote the number of radial bins for any given angle. We create radial bins in such a way that their hyperbolic lengths are upper-bounded by $\epsilon/2$. Hence, the total number of quantization bins equals $N_{R_H}{=}2{R_H}/\epsilon$. We write $\phi_{n_1}{=}\frac{n_1}{N_{\Theta}}2\pi$, where $n_1{\in}[N_{\Theta}]$, and for $n_2\in[N_{R_H}]$ set
\begin{equation}
\label{eq:radiusquantization}
    h_{n_2}={\frac{n_2}{N_{R_H}}}{R_H}.
\end{equation}
Given this notation, a quantization bin $B(n_1,n_2)$ is completely characterized by $\phi_{n_1-1}$ and $\phi_{n_1}$ in the angular domain, and $h_{n_1-1}$ and $h_{n_1}$ in the radial domain, where $\phi_0{=}0$ and $h_0{=}0$. Furthermore, any point in the bin $B(n_1,n_2)$ is mapped to the bin-center $bc(n_1,n_2)$, defined as the point that partitions the bin into four parts of the same hyperbolic radial length and angularly symmetric. As a result, the angular $\phi_{bc(n_1,n_2)}$ and radial $h_{bc(n_1,n_2)}$ coordinates of the bin-center equal
\begin{equation}
\label{eq:quant_bin_center}
    \phi_{bc(n_1,n_2)}=\frac{\phi_{n_1-1}+\phi_{n_1}}{2}
    \quad\text{and}\quad
    h_{bc(n_1,n_2)}=\frac{h_{n_2-1}+h_{n_2}}{2}.
\end{equation}
To map the results from the hyperbolic plane to the  Poincar\'e disc, the Euclidean radius of the corresponding bin-center is obtained using (\ref{eq:rtoR}) and its 2D projection is obtained using $\phi_{bc(n_1,n_2)}$. 

As an example, consider a point $\boldsymbol{x}\in \mathbb{B}_k^2$ within a bin $B(n_1,n_2)$. It is quantized to
\begin{equation}
\label{eq:quant_poin}
\hat{\boldsymbol{x}}=\left(\alpha \cos(\zeta), \alpha \sin(\zeta)\right),  
\end{equation}
where $\zeta=\phi_{bc(n_1,n_2)}$, $\alpha=s({e^\frac{\tau}{s}-1})/({e^\frac{\tau}{s}+1})$ and $\tau=h_{bc(n_1,n_2)}$.  

Using Alg.~\ref{alg:PGS}, a client $i\in\{1,\ldots,L\}$ quantizes the extreme points in the minimal convex hulls $CH_{+}^{(i)}$ and $CH_{-}^{(i)}$ to obtain the quantized point sets $Q_{+}^{(i)}$ and $Q_{-}^{(i)}$. Finally, to obtain the quantized convex hulls $\hat{CH}_{+}^{(i)}$ and $\hat{CH}_{-}^{(i)}$ (see Definition \ref{def:min_quant_cx_hll}), the client applies Alg.~\ref{alg:PGS} on $Q_{+}^{(i)}$ and $Q_{-}^{(i)}$. This ensures that the quantized convex hulls remain convex.

\subsection{Convex Hull Complexity} 
\label{sec:cx_cmplexty}
Since the convex hull complexity depends on the distribution of the data and characterizing convex hulls in hyperbolic space is complicated, we do not have a general complexity analysis for arbitrary (quantized) data distribution. Instead, we analyze the convex hull complexity assuming that the data is uniformly distributed on the Poincar\'e disc. It can be shown that the expected convex hull complexity, i.e., the expected number of extreme points in the convex hull of any collection of $N$ points sampled independently and uniformly at random from the Poincar\'e disc equals $O(N^{\frac{1}{3}})$ (see Appendix~\ref{sec:convexhullcomplexity}). In comparison, for a two-dimensional Euclidean space, the expected convex hull complexity also equals $O(N^{\frac{1}{3}}),$ provided that the points are sampled independently and uniformly at random ~\cite{har2011expected}.  Furthermore, note that our scheme uses quantized extreme points of the convex hull, so that we need to find an upper bound on the $\epsilon$-convex hull complexity (see Definition \ref{def:priv_leak}), assuming that the data is uniformly distributed over the Poincar\'e disc. Yet, rather than directly analyzing the $\epsilon$-convex hull complexity, which is a result of constructing convex hulls, quantizing, constructing convex hulls, we have the following result that characterizes the average number of points in the minimal convex hull of $N$ quantized points sampled independently and uniformly at random from the Poincar\'e disc. This result serves as an upper bound on the $\epsilon$-convex hull complexity. The proof of the theorem is available in Appendix~\ref{sec:quantizedconvexhull}. Recall that this complexity value also captures the privacy leakage of each client.
\begin{theorem}\label{thm:quantizedconvexhull}
Let $\boldsymbol{x}_1,\ldots,\boldsymbol{x}_N$ be $N$ points uniformly distributed over a Poincar\'e disc of radius $R<s$, where $N$ is sufficiently large. Let $\hat{\boldsymbol{x}}_1,\ldots,\hat{\boldsymbol{x}}_N$ be quantized values of $\boldsymbol{x}_1,\ldots,\boldsymbol{x}_N$, obtained using the quantization rule described in~(\ref{eq:quant_poin}), with parameter $\epsilon=O(N^{-c}),$ for some constant $c>0$. Then, the expected number of extreme points of the minimal convex hull of $\hat{\boldsymbol{x}}_1,\ldots,\hat{\boldsymbol{x}}_N$ is at most $O(N^{c})$ when $c< \frac{1}{2}$, at most $O(N^{1-c})$ when $\frac{1}{2}\le c\le \frac{2}{3}$, and at most $O(N^{\frac{1}{3}})$ when $\frac{2}{3}< c$. 
\end{theorem}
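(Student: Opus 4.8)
The plan is to reduce the analysis of the quantized convex hull to a combination of (i) the known expected convex-hull complexity for uniform points in a planar convex body, and (ii) a counting argument over the quantization bins that the extreme points can land in. I would proceed as follows.

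First, I would set up notation so that we track two competing sources of vertices of $CH(\hat{\boldsymbol{x}}_1,\ldots,\hat{\boldsymbol{x}}_N)$. Every extreme point of the quantized hull is the bin-center of some nonempty bin, so a crude bound on the complexity is the number of nonempty bins near the boundary of the disc, which is $O(N_\Theta) = O(Cir(R_H)/\epsilon) = O(1/\epsilon) = O(N^c)$; this gives the $c < \tfrac12$ regime essentially for free. Second, I would observe that the quantized hull vertices are a subset of $\{\text{quantized images of the } O(N^{1/3}) \text{ vertices of the unquantized hull}\}$ is \emph{not} quite true (quantization can create new extreme points), so instead I would argue the other direction: the quantized hull is contained in the quantized image of the \emph{unquantized} hull's vertex set union with a thin annular strip, and bound the number of bins meeting that strip. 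The width of the annulus relevant to the hull (the ``cap'' depth that a uniform sample reaches near the boundary) scales like $N^{-2/3}$ in the relevant coordinate, which is the source of the $2/3$ threshold: when $\epsilon \gtrsim N^{-2/3}$, quantization is coarser than the natural fluctuation scale of the hull, so the quantized hull inherits the $O(N^{1/3})$ bound; when $\epsilon$ is finer, the bin count along the boundary dominates.

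Concretely, for the middle regime $\tfrac12 \le c \le \tfrac23$, I would fix an angular sector of width $\Theta(\epsilon)$ (one angular bin), and bound the expected number of quantized hull vertices in that sector. Within the sector, a point becomes a hull vertex only if it is ``visible'' — i.e. its radial coordinate (in the hyperbolic-plane chart) is within roughly the cap depth $\delta$ of the maximal radius, where $\delta$ is determined by requiring that an expected $O(1)$ of the $N/N_\Theta$ points in the sector fall in the cap; a standard computation (area of a cap of depth $\delta$ at angular width $\epsilon$ scaling like $\epsilon\,\delta$, total area $\Theta(1)$, so $N\epsilon\delta \asymp N_\Theta^{-1}\cdot N \asymp$ ... ) gives $\delta \asymp N^{-2/3}\epsilon^{-1/3}$ up to constants depending on $R, s$. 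The number of \emph{radial} bins of width $\epsilon$ inside a strip of width $\delta$ is $\max(1, \delta/\epsilon) = \max(1, N^{-2/3}\epsilon^{-4/3})$. Multiplying by the number of angular bins $N_\Theta \asymp \epsilon^{-1} \asymp N^{c}$ and simplifying with $\epsilon = \Theta(N^{-c})$ yields $N^c \cdot \max(1, N^{-2/3+4c/3})$, which is $O(N^{1-c})$ on $[\tfrac12,\tfrac23]$ and $O(N^{1/3})$ for $c > \tfrac23$; I would double-check these exponent arithmetic steps carefully since that is where the three regimes separate. For the $c > \tfrac23$ case one also wants to confirm the bound cannot exceed the unquantized $O(N^{1/3})$, which follows because quantization to a grid coarser than the cap scale can only merge hull vertices, not create more than $O(1)$ extra per occupied boundary cap.

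The main obstacle I anticipate is making the ``cap depth'' and ``visibility'' arguments rigorous in the hyperbolic setting rather than the Euclidean one. In Euclidean convex-hull probability, the $O(N^{1/3})$ bound for the disc rests on the classical fact that the expected number of vertices equals (up to constants) the expected number of ``$k$-sets on the boundary'' / integral of a cap-probability; transferring this to the Poincar\'e disc requires using that, after mapping to the hyperbolic-plane polar chart via \eqref{eq:rtoR}, the uniform measure on $\mathbb{B}^2_k$ pushes forward to a density that is bounded above and below by constants (depending on $R<s$) times the area element $\sinh(r_H/s)\,dr_H\,d\phi$, so comparability with a Euclidean uniform distribution on an annulus holds, and geodesic convexity versus Euclidean convexity of the bin-center point set must be reconciled (Alg.~\ref{alg:PGS} and Theorem~\ref{thm:PGS} guarantee the output is a genuine Poincar\'e convex hull, but for the \emph{counting} bound one may pass to Euclidean convex hulls of the same points since the Euclidean hull of a point set contains at least as many vertices — so an upper bound on the Euclidean-hull complexity suffices). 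Getting the constants in $\delta \asymp N^{-2/3}\epsilon^{-1/3}$ right, and confirming the claimed breakpoints at $c = \tfrac12$ and $c = \tfrac23$ rather than some other values, is the part I would treat most carefully; everything else is bookkeeping with \eqref{eq:circum}, \eqref{eq:rtoR}, and the definition of the bins.
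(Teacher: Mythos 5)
Your high-level structure (a boundary-bin counting bound for coarse quantization, and a cap/area argument for fine quantization) is reasonable and matches the paper's split into regimes, but the core computation for the regimes $c\ge\tfrac12$ does not work. Track your own expression: you arrive at $N^{c}\cdot\max\bigl(1,\,N^{-2/3+4c/3}\bigr)$, and for $c\ge\tfrac12$ the second term dominates, giving $N^{7c/3-2/3}$. At $c=\tfrac12$ this is $N^{1/2}=N^{1-c}$, but at $c=\tfrac23$ it is $N^{8/9}$ (not the claimed $N^{1/3}$), and at $c=1$ it is $N^{5/3}>N$, which is impossible since there are only $N$ points. So the asserted simplification to $O(N^{1-c})$ on $[\tfrac12,\tfrac23]$ and $O(N^{1/3})$ beyond is false for the quantity you computed. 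The root cause is the localization to an angular sector of width $\Theta(\epsilon)$: a point at the hull-relevant depth $t\asymp N^{-2/3}$ is a vertex only if a supporting chord of angular span $\Theta(\sqrt{t})\asymp N^{-1/3}$ clears all $N$ points, and this span is much wider than $\epsilon=N^{-c}$ for $c>\tfrac13$, so counting "visible" radial bins sector-by-sector at scale $\epsilon$ both misidentifies the cap depth ($N\epsilon\delta\asymp 1$ gives $\delta\asymp N^{c-1}$, not your $N^{-2/3}\epsilon^{-1/3}$) and overcounts vertices. The paper avoids this entirely: for $c\ge\tfrac12$ it proves a quantized Efron-type lemma (Lemma~\ref{lem:efronsquantized}), bounding the expected number of vertices by $N$ times the expected area fraction \emph{not} covered by the hull, and shows via a perturbation argument (quantization moves each point by at most $\epsilon$, so the index of the radially outermost occupied subsector shifts by only $O(1)$) that this fraction is $O(N^{-c})$ when the disc is cut into $N^{1-c}$ sectors $\times$ $N^{c}$ equal-area annuli, with the geodesic-chord correction $O(1/m^2)$ forcing the fallback to the $N^{1/3}\times N^{2/3}$ partition when $c>\tfrac23$.

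Two further points. First, your reduction "the Euclidean hull has at least as many vertices, so a Euclidean upper bound suffices" goes the wrong way: since geodesics in the Poincar\'e disc bend toward the origin, a point that is Euclidean-extreme is automatically geodesically extreme, so the Poincar\'e hull can have \emph{more} extreme points than the Euclidean one, and a Euclidean upper bound does not transfer. Second, in the $c<\tfrac12$ regime, "number of nonempty bins near the boundary" is not by itself an upper bound on the hull complexity (vertices need not lie near the boundary when outer bins are empty); you need the paper's step that with probability $1-O(N^{2c}e^{-O(N^{1-2c})})$ \emph{every} bin is occupied, in which case the hull is exactly the outer ring of $N_\Theta=O(N^c)$ centroids --- this occupancy computation is also what explains why the threshold sits at $c=\tfrac12$.
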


\subsection{Label Encoding}\label{sec:labelencoding}
Due to the problem of label switching, different classes at different clients may have the same label and be confused at the server. To avoid this issue, we propose encoding the class labels using $B_h$ sequences, defined as follows.

\begin{definition} $B_h$ sequences (or Sidon sets of order $h$) are sets (sequences) of nonnegative integers such that the sums of any $h$ (with repetitions allowed) integers from the set are all different. Formally, a sequence $A=\{a_1,\ldots,a_m\}$, $0\le a_1<a_2<\ldots<a_m,$ is a $B_h$ sequence if for any $i_1,\ldots,i_h\in\{1,\ldots,m\}$, the multiset $\{i_1,\ldots,i_h\}$ can be uniquely determined based on the sum $a_{i_1}+\ldots+a_{i_h}$. 
\end{definition}

$B_h$ sequences for $h=2$ (Sidon sequences) have found numerous applications in error-correction coding {\citep{milenkovic2006shortened,kovavcevic2018codes}}. For $B_h$ sequences, the question of interests is how small the $m$th term, $a_m$, can be, or conversely, how large $m$ can be given a bound on $a_m$. It can be shown that $a_m$ is at least $\frac{m^h}{h}$ since there are $m^h$ possible sums of $h$ integers in $A$ and the maximum sum $a_{m-h+1}+a_{m-h+2}+\ldots+a_m$ is at least $m^h$. Therefore, we have $ha_m\ge m^h$. This simple lower bound on $a_m$ shows that $a_m$ scales as $O(m^h)$ for fixed $h$. 
On the other hand, 
the work in~\citep{bose1960theorems} provides constructions for $B_h$ sets $A$ where $a_m$ is at most $m^h$, and this is asymptotically optimal based on the lower bound on $a_m$.

Note that for any $B_h$ sequence $A=\{a_1,\ldots,a_m\},$ 
 the set $A'=\{a_2-a_1,\ldots,a_m-a_1\}$ is also a $B_h$ set {\citep{kovacevic2017improved}}. Moreover, the sums $a'_{i_1}+a'_{i_2}+\ldots+a'_{i_{h'}}$ are different for any $h'\le h$ and $i_1,\ldots,i_{h'}$, where $a'_i$, $i\in \{1,\ldots,m-1\}$ is an element in $A'$.  Therefore, in the following, we refer to $B_h$ sequences as 
 a nonnegative integer set $A'$ in which the sums of any collection of at most $h$ elements are different. 
 
Let $\mathbb{F}_q$ be a finite field, where $q\ge \max\{(L+1)^h,B\}$ is a prime and $B$ is the number of quantized bins. We construct a $B_{h}$ sequence $A'\subset\mathbb{F}_q^{2L}=\{a_1,\ldots,a_{2L}\}$ over the field $\mathbb{F}_q$ using the approach in ~\citep{bose1960theorems} (see  Appendix \ref{sec:bhconstruction}). To encode the data, we label the quantized bins applied on the quantized convex hull of the data with unique elements from $A'$. As before, let $\hat{CH}_{+}^{(i)}$ and $\hat{CH}_{-}^{(i)}$ be the quantized convex hulls of the local data at client $i$,  and Section~\ref{sec:PBDQ}.

Define a label vector $\boldsymbol{v}^{(i)}\in \mathbb{Z}^B$ based on the sets $\hat{CH}_{+}^{(i)}$ and $\hat{CH}_{-}^{(i)}$ according to  
\begin{align*}
    		v^{(i)}_j=\begin{cases}
			a_{2i}, & \mbox{if there is a $\hat{\boldsymbol{x}}^{i,j}\in \hat{CH}_{+}^{(i)}$ but no $\hat{\boldsymbol{x}}^{i,j}\in \hat{CH}_{-}^{(i)}$ within the $j$th bin,}\\
			a_{2i-1}, &\mbox{if there is a  $\hat{\boldsymbol{x}}^{i,j}\in \hat{CH}_{-}^{(i)}$ but no $\hat{\boldsymbol{x}}^{i,j}\in \hat{CH}_{+}^{(i)}$ within the $j$th bin,} 
            \\
            a_{2i}+a_{2i-1},&\mbox{if there is a $\hat{\boldsymbol{x}}^{i,j}\in \hat{CH}_{+}^{(i)}$ and a $\hat{\boldsymbol{x}}^{i,j}\in \hat{CH}_{-}^{(i)}$ within the $j$th bin,}\\
            0,&\mbox{otherwise.}
		\end{cases}
\end{align*}

Note that there is a one-to-one mapping between the vector $\boldsymbol{v}^{(i)}$ and the sets $\hat{CH}_{+}^{(i)}$ and $\hat{CH}_{-}^{(i)}$. To calculate $\boldsymbol{v}^{(i)}$ in a federated setting, the clients need an agreement on the client labels (so that the $i$th client uses $a_{2i-1}$ or $a_{2i}$ to label the convex hull). Note that this agreement on the client labels is not revealed to the server. Hence, the server cannot infer the identity of the client that uses labels $a_{2i-1}$ or $a_{2i}$. 
The agreement can be performed using our decentralized agreement protocol described in Appendix~\ref{sec:order_agree}.

\subsection{Secure Transmission}
\label{sec:sectran}

To securely communicate the quantized extreme points $\hat{CH}_{+}^{(i)}$ and $\hat{CH}_{-}^{(i)}$ at
each client $i\in[L]$ to the server, we leverage the SCMA scheme described in ~\citep{pan2023machine}, where the clients transmit the encoded version of their data such that the server only gets the aggregated data distribution of the union of local data sets, without leaking information about the identity of individual data. The SCMA scheme encodes data into Reed-Solomon type codes (Reed-Solomon codes are a class of rate-optimal error-correcting codes) and can be shown to be efficient both communication- and computation-wise~\citep{pan2023machine}.

Recall that the labels of data class at each server $i$ are described by a vector $\boldsymbol{v}^{(i)}$ based on $B_{h}$ sequences. The SCMA scheme is used to communicate the vectors $\boldsymbol{v}^{(i)}, i=1,\ldots,L$. More specifically, 
each client $i\in\{1,\ldots,L\}$ communicates $(S^{(i)}_1,\ldots,S^{(i)}_{2LK_{max}})$ to the server, where $K_{max}=\max_{i\in[L]}(|\hat{CH}_{+}^{(i)}|+|\hat{CH}_{-}^{(i)}|)$ and 
    $S^{(i)}_l=(\sum_{j\in [B]} v^{(i)}_j\cdot j^{l-1}+z^{(i)}_l) \text{ mod } q, i\in[2K_{max}L]$,  
     and $z^{(i)}_l$ is a random key uniformly distributed over the prime field $\mathbb{F}_q$ and hidden from the server. The keys $\{z^{(i)}_l\}_{i\in[L],l\in[2LK_{max}]}$ are generated offline using standard secure model aggregation so that $(\sum_{i\in[L]}z^{(i)}_l) \text{ mod } q = 0$. 

To decode, the server uses a Reed-Solomon-type  decoder to recover the sum of the label vectors $\boldsymbol{v}^{(i)}$, $i\in[L]$. The server first computes the sum $S_l=(\sum_{i\in[L]}S^{(i)}_l) \text{ mod } q $. 
    Given $S_l$, for $l\in[2LK_{max}]$, the server computes the sum of labels $H_b=\sum_{i\in[L]}v^{(i)}_{b}$ for each bin index $b\in[B]$ as follows. Since there are at most $K_{max}L$ non-zero entries $H_b$, the server computes the polynomial $g(z)=\prod_{b:H_b\ne 0}(1-b\cdot z)$ using the Berlekamp-Massey algorithm ~\citep{Berlekamp1968AlgebraicCT,massey1969shift}. Then, the server factorizes $g(z)$ using the algorithm in ~\citep{kedlaya2011fast}, and finds the set $\{b:H_b\ne 0\}$. Finally, the server solves the system of linear equations $S_l=\sum_{l:H_b\ne 0}H_b b^{l-1}$ for $l\in[2K_{max}L]$, by viewing the $H_b$s as unknown variables and $b^{l-1}$, where $H_b\ne 0$, as known coefficients. 

 Once the values $H_b=\sum_{i\in[L]}v^{(i)}_{b}$, $b\in[B]$, are obtained, 
 the server retrieves the unique set of indices $i_1,\ldots,i_{h'}$, $h'\le h$ such that $H_b=a_{i_1}+\ldots+a_{i_{h'}}$, using brute force, which can be done since $a_1,\ldots,a_{2L}$ is a $B_{h}$ sequence. 
 Finally, the server includes the bin centroid of the $b$th bin into $\hat{CH}_{-}^{(i)}$ or $\hat{CH}_{+}^{(i)}$ if $2i-1\in \{i_1,\ldots,i_{h'}\}$ or $2i\in \{i_1,\ldots,i_{h'}\}$, respectively. Therefore, the server recovers the convex hulls $\hat{CH}_{+}^{(i)}$ and $\hat{CH}_{-}^{(i)}$ for $i\in[L]$ without knowing the exact identity of the clients that contributed the points.
\begin{remark}
In the proposed framework, the server can determine the points in  $\hat{CH}_{+}^{(i)}$ and $\hat{CH}_{-}^{(i)}$ for $i\in[L]$. While this constitutes data privacy leakage, we emphasize again that the associated client identities for the recovered quantized boundary sets are hidden from the FL server, thus protecting the client identity. Furthermore, these shared convex hull sets contain quantized versions of the original extreme points, further reducing data privacy leakage. Additionally, as observed in Theorem ~\ref{thm:quantizedconvexhull} and observed from our real-world experiments, the convex hull complexity is small. 
\end{remark}

\textbf{Communication Complexity.} Using the SCMA transmission protocol from~\citep{pan2023machine}, each client $i\in[L]$ communicates $O(K^{(i)}L\max\{h\log L, \log B\})$ bits to the server, where $K^{(i)}$ denotes the total number of extreme points in the quantized convex hulls $\hat{CH}_{+}^{(i)}$ and  $\hat{CH}_{-}^{(i)}$ (convex hull complexity) of the two local clusters at client $i$, $h$ denotes the number of point collisions (the number of clients that share the same quantized point in the convex hulls of their local clusters), and $B$, as before, equals the total number of quantization bins. Note that 
the convex hull complexity $K^{(i)}$ tends to be small in practice (which is also corroborated by our experiments on different data sets). 
In addition, from our experiments we observe that small values of $h$ suffice to avoid collisions and guarantee successful decoding of the local clusters $CH^{(i)}_{+}$ and $CH^{(i)}_{-}$, $i\in[L]$, at the server side. 

\textbf{Computational Complexity.} The encoding algorithm for the convex hulls using the SCMA protocol has complexity $O((K^{(i)})^2L\log (M_iL))$, where, as before, $M_i$ denotes the number of data points at client $i$. 

The decoding of the aggregated labels $\sum_{i\in[L]}\boldsymbol{v}^{(i)}$ in the SCMA scheme has complexity $O(\max_{i\in[L]}[(K^{(i)}L)^{\frac{3}{2}}\log q+K^{(i)}L\log^2 q])$, where $q=\max\{B,h\log L\}.$ The brute-force decoding algorithm of the $B_h$ labels of the convex hulls of each cluster has complexity $O(L^{h})$. 

\subsection{Balanced Grouping of Convex Hulls at the Server}
\label{sec:graph_partition}
After the label decoding phase, the FL server can correctly recover the quantized convex hulls $\{\hat{CH}_{+}^{(i)}\}_{i\in[L]}$ and $\{\hat{CH}_{-}^{(i)}\}_{i\in[L]}$, albeit without knowing the global ground truth labels of these point sets. The convex hulls have to be grouped correctly to match the ground truth labels. For notational convenience, we denote the set of all quantized convex hulls by $\{{CH}^{(i)}\}_{i\in[2L]}$, where each ${CH}^{(i)}$ for $i\in[2L]$ is one of $\{CH_{+}^{(i)}\}_{i\in[L]}$ or $\{CH_{-}^{(i)}\}_{i\in[L]}$. We construct a weighted complete undirected graph over $\{CH^{(l)}\}_{l\in[2L]}$ as follows. Let $\mathcal{G}(V)$ be a complete undirected graph with node set $V=\{1,\ldots,2L\}$, where for simplicity, we use $v\in V$ to represents the convex hull $CH^{(v)}$. For a pair of nodes $u,v\in V$, the edge-weight $w(\{u,v\})=1/d_{\mathcal{G}}(u,v)$, where $d_{\mathcal{G}}(u,v)$ is the average pairwise hyperbolic distance between points in the convex hulls $CH^{(u)}$ and $CH^{(v)}$, as described in (\ref{eq:cx_hulls_distance}). 
The grouping of the clusters $\{CH^{(i)}\}_{i\in[2L]}$ is determined by finding the minimum balanced cut on the weighted graph $G$, i.e.,
\begin{align}\label{eq:mincut}
S^*=\argmin_{S\subset V,|S|=L}\sum_{u\in S}\sum_{v\in V\backslash S}w(\{u,v\}),
\end{align}
for which we use the algorithm in ~\citep{kernighan1970efficient}. 
The point sets $PS_1=\bigcup_{i\in S^*} CH^{(i)}$ and $PS_2=\bigcup_{i\in [2L]\backslash S^*} CH^{(i)}$ are treated as the two global data point sets, and are assigned, without loss of generality, the global ground truth labels $+1$ and $-1$.

The grouping method for the convex hulls of different clusters by their global labels has complexity $O(L^2\max_{i_1,i_2\in[L],i_1\ne i_2}K^{(i_1)}K^{(i_2)})$  for the graph $G$ construction and complexity $O(I L^2\log L)$  for computing the balanced minimum cut on $G$, where $I$ denotes the number of iterations of the partitioning procedure ~\citep{kernighan1970efficient}. We note that due to distortions introduced in the local convex hulls due to quantization, and the heuristic graph partitioning procedure used in obtaining the global clusters, provable guarantees for the correctness of the grouping procedure are not easy to derive. Nevertheless, our proposed solution works well in practice, as demonstrated by our simulation studies of SVM classifier, Section~\ref{sec:exp}.

\subsection{Learning the SVM Classifier at the Server}
As the final step, the server learns a hyperbolic SVM classifier over the labelled global point sets $PS_1$ and $PS_2$, using the procedure described in Section ~\ref{sec:hyp_cls}. The global reference point $\boldsymbol{p}$ is computed as the geodesic mid-point of the closest pair of extreme points in the convex hulls. The normal vector $\boldsymbol{w}$ is obtained by solving the optimization problem in (\ref{eq:hard-margin-svm-convex}) or (\ref{eq:soft-margin-svm-convex}), as applicable. 

{We remark that we primarily focus on the linear SVM model. While numerous non-linear kernel SVM models have been thoroughly investigated for Euclidean classification tasks, and adapting the centralized hyperbolic SVM algorithm~\citep{chien2021highly,pan2023provably} to accommodate non-linear kernels is straightforward, since we have access to all data points; in the federated setting, the problem becomes significantly more complex. Here, the server is restricted from directly accessing individual data points. Instead, the server only receives aggregated local convex hulls from clients. While kernel SVM remains feasible at the server, relying solely on convex hulls might compromise the performance of the kernel SVM algorithms. Therefore, effectively integrating non-linear kernels within our current framework that relies on convex hull approaches may be challenging. On the other hand, if one could dispose of the use of convex hulls, then potentially non-linear kernels could be potentially be used, but then we would not know how to resolve the underlying data privacy issues. We believe that the combination of convex hulls and SVM represents the best current option in terms of the trade-off between privacy and the model utility.}
\section{Experiments}
\label{sec:exp}
We present in what follows results from our numerical studies involving multiple data sets. First, we describe the baseline algorithms used for comparative simulations. Then, we consider binary classification with synthetic data sets that are generated using a newly proposed procedure for sampling uniformly at random from the Poincar\'e hyperbolic disc. The most important part of the study is to demonstrate how our proposed framework can be applied for multi-label classification tasks on three real-world biological data sets. The procedure for extending our proposed framework to multi-label scenario is described in Appendix ~\ref{app:multiclass}. Additionally, we provide insights into the impact of quantization bin size on classification accuracy and convex hull complexity. Due to space constraints, some of the details are delegated to Appendix~\ref{app:exp}.

\subsection{Baselines}  We consider the following centralized schemes for comparative simulations. (a) Centralized Poincar\'e (CP) SVM: We train the linear Poincar\'e SVM classifier~\ref{sec:hyp_cls} from~\citep{pan2023provably} in a centralized way; (b) Centralized Euclidean (CE): We train the SVM classifier in a centralized way. We also consider the following federated classification baselines: (c) Federated Poincar\'e (FLP): We use our method from Section~\ref{sec:hfc}; (d) Federated Euclidean (FLE): We perform at the server side Euclidean SVM classification over the aggregated global convex hulls.

\subsection{Synthetic Data Sets}
\textbf{Data generation.} Our proposed pipeline comprises three main steps. First, $N$ data points are obtained using our proposed Poincar\'e (radius $R$) uniform sampling procedure described in Alg. \ref{alg:PUS} of Appendix \ref{sec:pus}. Second, a separating hyperplane is constructed by first sampling a reference point $\boldsymbol{p}$ uniformly at random at a Euclidean distance of $\|\boldsymbol{p}\|=\mu R$ from the origin, and then sampling the normal vector $\boldsymbol{w}$. Here, $0<\mu<1$ denotes a scaling parameter. Any point that is within a given hyperbolic distance of $\gamma>0$ from the hyperplane is removed from the data set. Third, points are labeled as positive (+) or negative (-), depending on which side of the sampled hyperplane they are on. One such example synthetic data set is shown in Figure \ref{fig:synth_data_gen_illus}. 

\begin{figure}[H]
    \centering
    \includegraphics[width=0.59\linewidth]{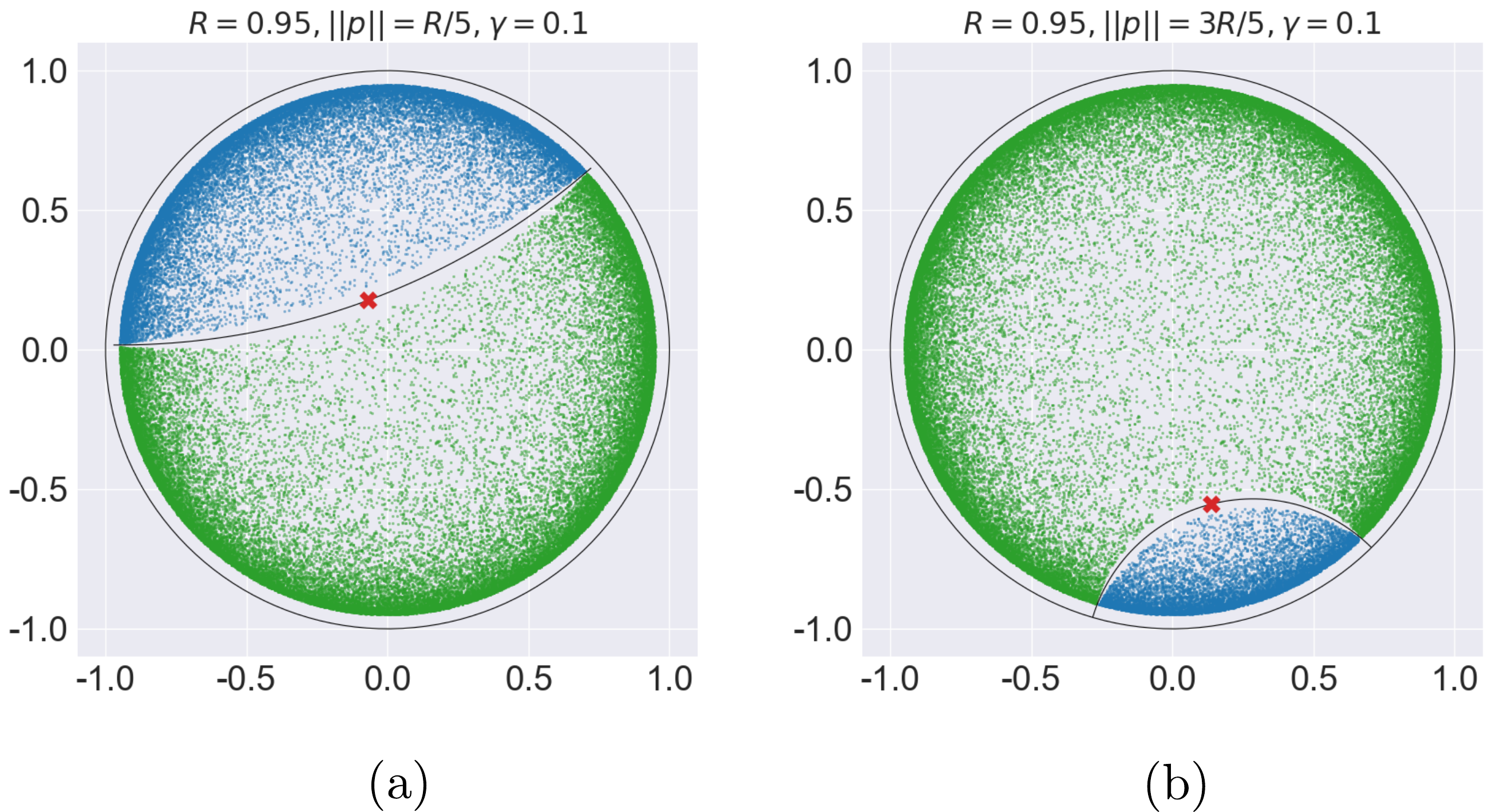}
    \caption{Synthetic data classes constructed as described in the Data generation subsection. The red point denotes the sampled reference point $\boldsymbol{p}$.  The geodesic through the red point is the ground truth hyperplane that corresponds to the sampled normal vector. In (a), we set $N=20,000$, while in (b), we set $N=60,000$.}
    \label{fig:synth_data_gen_illus}
\end{figure}

\begin{figure}[t!]
    \centering
   \includegraphics[width=1.00\linewidth]{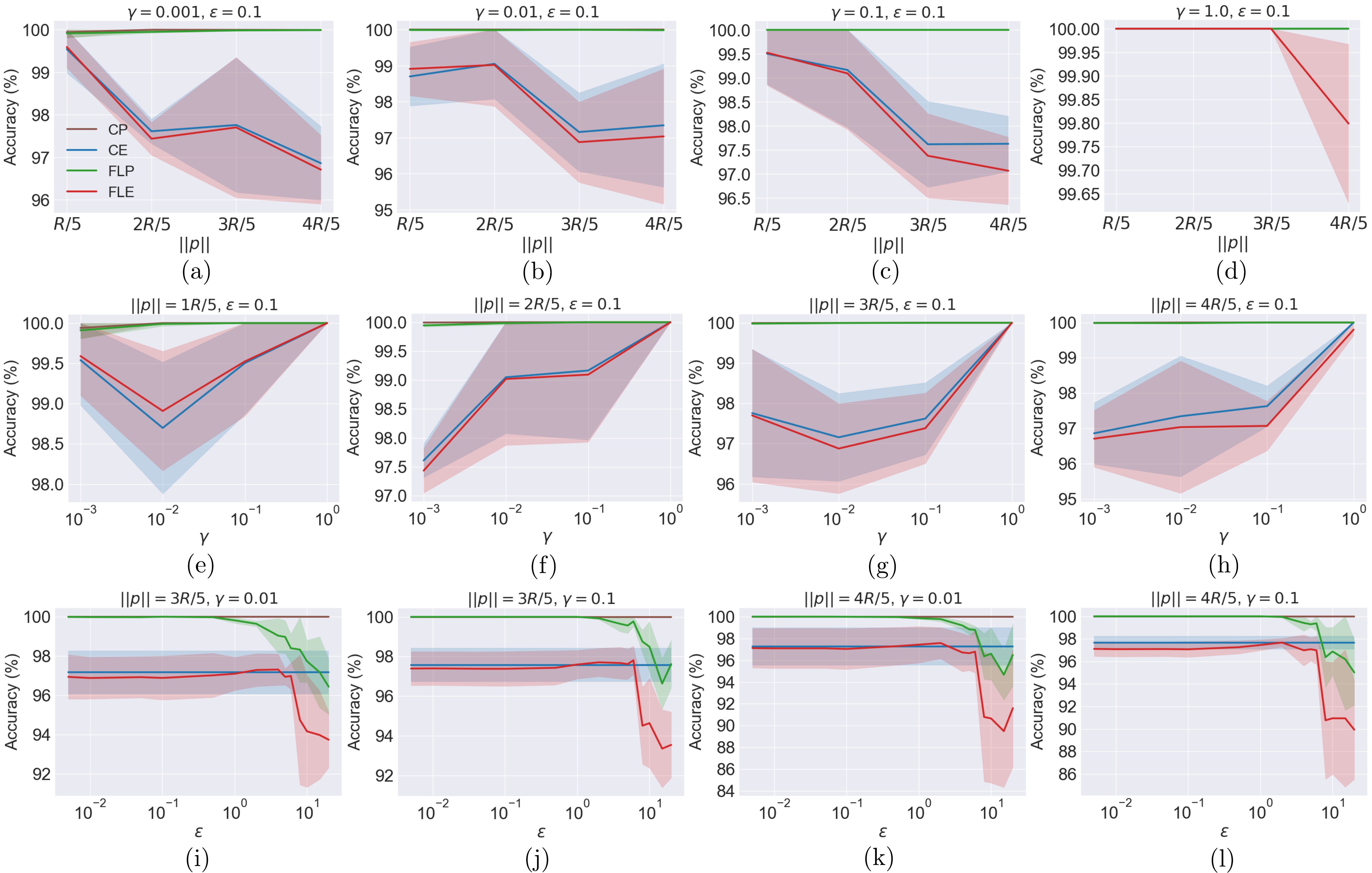}
    \caption{Classification accuracy results for the synthetic data sets. The shaded areas represent the $95\%$ confidence interval for $10$ independent trials. (a)-(d) Influence of the parameter $\lVert{\boldsymbol{p}}\rVert$ on the classification accuracy. (e)-(h) Influence of the margin parameter $\gamma$ on the  classification accuracy. (i)-(l) Influence of the Poincar\'e quantization parameter $\epsilon$ on classification accuracy.}
    \label{fig:synth_all}
\end{figure}

\textbf{Results.} The results are shown in Figure~\ref{fig:synth_all}. In the first row, we consider the impact of the choice of reference point used to generate the synthetic data on classification accuracy. As $\|\boldsymbol{p}\|$ increases, i.e., as the reference point is further away from the origin, the ground truth separating hyperplane is more curved in the Euclidean sense (see the ground truth hyperplanes in Figure \ref{fig:synth_data_gen_illus} for comparison). Hence, by increasing $\|\boldsymbol{p}\|$, the performance of both CE and FLE degrades, while the Poincar\'e baselines achieve near $100\%$ accuracy consistently. In the second row, we report the impact of varying the margin parameter  $\gamma$ on classification accuracy. As the margin increases, it becomes easier to learn a Euclidean classifier as the data points corresponding to different labels gradually become linearly separable in the Euclidean sense as well. Hence, the performance of CE as well as FLE improves with increasing $\gamma$. Furthermore, we observe that the federated baselines are almost as good as their centralized counterparts, both for the settings in the first and the second row, demonstrating the effectiveness of our global hulls aggregation method, and supporting the intuition that for SVM classification, extreme points are highly informative. Finally, in the third row, we report the impact of the quantization parameter $\epsilon$ on classification accuracy. As expected, {when $\epsilon$ is small, the quantization regions are small, and hence the local quantized convex hulls that are sent from each client closely approximate the true local convex hulls, resulting in low quantization distortion and no impact on the utility.} As $\epsilon$ increases, the increased distortion of the convex hulls affects the classification accuracy significantly, for both FLE and FLP schemes. In particular, the overlap between the quantized convex hulls corresponding to different labels increases as $\epsilon$ increases, making it difficult for the server to learn an SVM classifier, even in the FLP mode. 

\begin{table}[h!]
\setlength{\tabcolsep}{4pt}
\centering
\scriptsize
\caption{Results for biological data sets, including mean accuracy ($\%$) and $95\%$ confidence interval of the baselines for different data sets. The results are based on $10$ independent trials for each setting. The FL method with better mean accuracy is plotted in bold-faced letters.}
\vspace{0.1in}
\label{tab:bio_svm_acc_results}
\begin{tabular}{@{}c|c|c|c|c|c@{}}
\toprule
Data set & Labels & CP ($\%$) & CE ($\%$)& FLP ($\%$) & FLE ($\%$) \\ \midrule
\multirow{1}{*}{Olsson}  & \multicolumn{1}{c|}{0-7} & {$79.17 \pm 0.00$} & {$68.75\pm 0.00$} & {$\mathbf{86.04 \pm 1.41}$} & {$75.00 \pm 3.51$} \\
\midrule
\multirow{6}{*}{Lung-Human}  & \multicolumn{1}{c|}{0-4} & {${72.11\pm 0.00}$} & {$65.99\pm 0.00$} & {$\mathbf{69.52 \pm 2.82}$} & {$63.54 \pm 2.24$} \\
& \multicolumn{1}{c|}{[0, 1, 2, 3]} & {${61.94\pm 0.00}$} & {${61.94\pm 0.00}$} & {$\mathbf{61.42 \pm 3.19}$} & {$60.22 \pm 3.34$} \\
& \multicolumn{1}{c|}{[0, 1, 2, 4]} & {${78.52\pm 0.00}$} & {$70.37\pm 0.00$} & {$\mathbf{74.15 \pm 5.23}$} & {$67.19 \pm 5.98$} \\ 
& \multicolumn{1}{c|}{[0, 1, 3, 4]} & {${70.99\pm 0.00}$} & {$61.07\pm 0.00$} & {$\mathbf{66.87 \pm 8.42}$} & {$60.99 \pm 6.96$} \\ 
& \multicolumn{1}{c|}{[0, 2, 3, 4]} & {$73.33\pm 0.00$} & {$67.41\pm 0.00$} & {$\mathbf{75.19 \pm 1.44}$} & {$71.70 \pm 3.86$} \\ 
& \multicolumn{1}{c|}{[1, 2, 3, 4]} & {${69.64\pm 0.00}$} & {$66.07\pm 0.00$} & {$\mathbf{65.00 \pm 2.10}$} & {$64.11 \pm 3.73$} \\ 
\midrule
\multirow{5}{*}{UC-Stromal}  & \multicolumn{1}{c|}{0-3} & {$73.20\pm 0.00$} & {${73.54\pm 0.00}$} & {$\mathbf{70.23 \pm 4.88}$} & {$68.21 \pm 4.83$} \\
& \multicolumn{1}{c|}{[0, 1, 2]} & {${74.65\pm 0.00}$} & {$73.94\pm 0.00$} & {$63.96 \pm 4.66$} & {$\mathbf{64.24 \pm 4.92}$} \\
& \multicolumn{1}{c|}{[0, 1, 3]} & {${88.47\pm 0.00}$} & {$87.39\pm 0.00$} & {$\mathbf{87.77 \pm 1.51}$} & {$87.57 \pm 1.24$} \\ 
& \multicolumn{1}{c|}{[0, 2, 3]} & {$80.64\pm 0.00$} & {${82.29\pm 0.00}$} & {$\mathbf{79.67 \pm 2.64}$} & {$77.86 \pm 3.41$} \\
& \multicolumn{1}{c|}{[1, 2, 3]} & {${80.33\pm 0.00}$} & {${80.33\pm 0.00}$} & {$\mathbf{79.71  \pm 4.80}$} & {$79.46 \pm 5.18$} \\
\bottomrule
\end{tabular}
\end{table}
\subsection{Biological Data Sets}

\textbf{Data sets.} We consider multi-label SVM classification for three biological data sets: Olsson’s scRNA-seq data set~\citep{olsson2016single}, UC-Stromal data set~\citep{smillie2019intra}, and Lung-Human data set~\citep{vieira2019cellular}. Our simulations are performed on the Poincar\'e embeddings of these data sets, which can be obtained using methods described in ~\citep{klimovskaia2020poincare,skopek2020mixed}. We illustrate the embeddings in Figure ~\ref{fig:bio_visual}, which is also included in Appendix ~\ref{app:exp}. For simulating the FL multi-label classification tasks, we use subsets of data corresponding to different combinations of labels, for both the UC-Stromal and the Lung-Human data sets. Since the Olsson data set is quite small (it contains only $319$ points from $8$ classes), we consider the entire data set for multi-label classification. Detailed information about the data sets and experimental settings is available in Appendix ~\ref{app:exp}.  

\textbf{Results.} We present our results in Table ~\ref{tab:bio_svm_acc_results}, which reports mean accuracy ($\%$) and the $95\%$ confidence interval over $10$ independent trials for all baseline settings. We observe that FLP almost always outperforms FLE up to $\sim11\%$, demonstrating that learning a hyperbolic SVM classifier is preferred to simply learning a Euclidean SVM classifier. However, for UC-Stromal (Labels: $[0, 1, 2]$) data, FLE performs slightly better than FLP. This may be attributed to the distortion of the convex hull or particular selection of the reference point. We also observe that for the Olsson (Labels: $0-7$) and Lung-Human (Labels: $[0, 2, 3, 4]$) data sets, the federated baselines have better mean accuracy than their centralized counterparts \footnote{We find that training SVM classifiers on extreme points of the convex hulls of the classes instead of the entire classes improves the performance of  centralized training, both for Euclidean and Poincar\'e methods. We believe this to be due to quantization, which may act like a denoising step and improve the performance compared to centralized benchmarks. The results are included in Appendix~\ref{app:exp}.}. Furthermore, the performance of the federated baselines demonstrates that our convex hulls aggregation and label resolution methods indeed perform well in practice, although some of the steps do not have analytical performance guarantees. Finally, we note that for UC-Stromal (Labels: $[0, 1, 2]$) data, federated baselines have a large gap in accuracy compared to their centralized counterparts. This is because the points lying near the decision boundaries of SVM classifiers are not well represented through the local quantized convex hulls.

\begin{figure}[h!]
    \centering
   \includegraphics[width=0.99\linewidth]{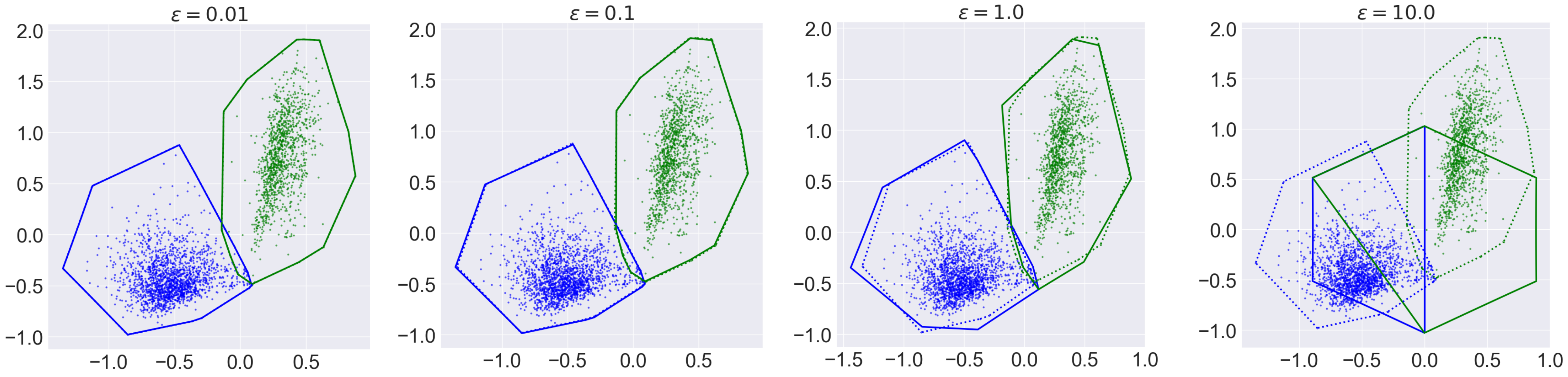}
    \caption{Analysis of the impact of the quantization parameter $\epsilon$ on the convex hull distortion and complexity. The blue and green points correspond to labels $0$ and $3$ in the UC-Stromal data set. The solid lines denote the quantized convex hulls, while the dotted lines denote the actual convex hulls without quantization.}
    \label{fig:cx_quant_effect}
\end{figure}

\begin{figure}[h!]
    \centering   \includegraphics[width=0.95\linewidth]{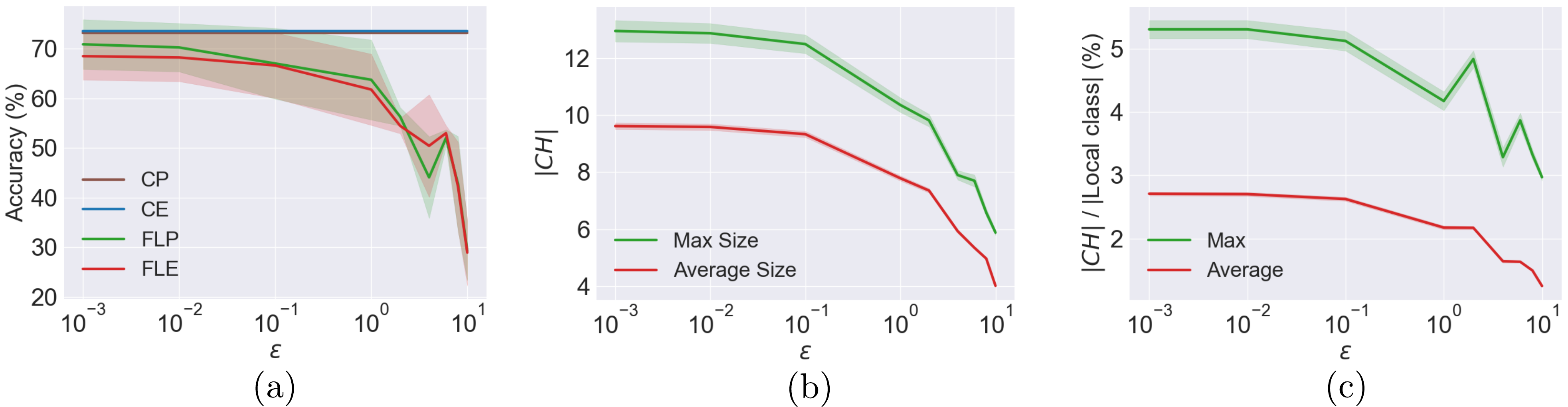}
    \caption{Impact of quantization parameter $\eps$ on the accuracy and convex hull complexity of the UC-Stromal data set for the multi-label classification setting with labels $0-3$). Here, as before, $CH$ denotes the quantized convex hull of a client. Average as well as maximum complexities are calculated over all clients and across all local quantized convex hulls. The shaded areas represent the $95\%$ confidence interval from $10$ independent trials.}
    \label{fig:uc_stromal_eps}
\end{figure}

\textbf{Choosing the quantization parameter $\epsilon$.}
We use UC-Stromal data set as an example to illustrate the impact of the choice of $\epsilon$ on the performance of the global classifier. In Figure \ref{fig:cx_quant_effect}, we show how quantization affects the shape of the quantized convex hulls by considering all the data points corresponding to labels $0$ and $3$. As expected, when $\epsilon$ increases, the shapes of the quantized convex hulls are increasingly distorted with respect to the original convex hulls. To more precisely examine the impact of quantization, we consider our prior experimental setup for labels $0-3$ and analyze how the accuracy and convex hull complexity changes with $\epsilon$. The results are shown in Figure~\ref{fig:uc_stromal_eps}. In (a), we observe that federated schemes can suffer a drop in performance when $\epsilon$ increases above $0.1$. In (b), we consider the complexity of the quantized convex hulls at the clients corresponding to all the four labels, and plot the average and maximum convex hull complexities across all clients and labels. As expected, the convex hull complexity decreases when increasing $\epsilon$. In (c), we present the ratio of the convex hull complexity and the class size across clients and labels for different choices of the quantization parameter $\epsilon$.

\section{Conclusion}
We introduced a novel approach to federated learning in hyperbolic spaces, thereby addressing challenges in processing hierarchical and tree-like data in distributed and privacy-preserving settings. Specifically, we developed an end-to-end framework for federated learning of SVM classifiers in the Poincaré disc. The key idea behind the approach it to leverage securely aggregated convex hulls for information transfer from the clients to the server. The complexity of convex hulls is analyzed to assess data leakage, and a simple quantization method is proposed for efficient data communication. We also considered detrimental label switching issues and resolved them with a new method based on number-theoretic and coding-theoretic ideas. Additionally, we introduced a novel approach for aggregating client convex hulls using balanced graph partitioning. Experimental results on multiple single-cell RNA-seq data show improved classification accuracy compared to Euclidean counterparts, establishing the utility of privacy-preserving learning in hyperbolic spaces.

Our work also introduced many potentially important new research questions. While information sharing via quantized convex hulls is efficient and secure, extreme points of the quantized convex hulls can contain important identifiable information about outliers, and addressing this limitation is an important problem for future work. Characterizing the impact of quantization on privacy leakage and generalizing the privacy-preserving and communication efficient federated learning protocols to other ML tasks are other questions of interest.

\subsubsection*{Acknowledgments}
This work was supported by NSF CIF grant 1956384 and the CZI grant DAF2022-249217.

\bibliography{main,references}
\bibliographystyle{tmlr}

\appendix
\section{Poincar\'e Graham Scan}
\label{app:PGS}

\subsection{Proof of Correctness}\label{sec:proofofcorrectness}

\begin{theorem}
    Given a set of $N$ points $\mathcal{D}$ in the Poincar\'e disc, Alg.~\ref{alg:PGS} returns $CH(\mathcal{D})$ in $O(N\log N)$ time.
\end{theorem}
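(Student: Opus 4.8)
The plan is to verify that Alg.~\ref{alg:PGS} is the hyperbolic transcription of the Euclidean Graham scan in which the bottom-most pivot, the polar-angle sort, and the orientation test are each replaced by their geodesic analogues, and then to check that every property used in the classical correctness proof survives the substitution. I would split the argument into: (i)~the chosen pivot $\boldsymbol{b}$ is a vertex of $CH(\mathcal{D})$ and all remaining points are seen from $\boldsymbol{b}$ within an angular span $<\pi$; (ii)~the primitive of Alg.~\ref{alg:CCW} correctly decides on which side of a geodesic a point lies; (iii)~a loop invariant stating that the stack always equals the vertex set of the convex hull of the points processed so far, in counter-clockwise boundary order; and (iv)~reading off the output at termination and bounding the running time.

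For (i): a hyperbolic ball centred at the origin of $\mathbb{B}_k^2$ equals, as a point set, a Euclidean disc centred at the origin, and no geodesic segment lies along its bounding circle (two distinct circles, or a circle and a line, meet in at most two points), so this ball is strictly geodesically convex. Since $\boldsymbol{b}$ has the largest norm in $\mathcal{D}$, all of $\mathcal{D}$ lies in the closed ball $\{\boldsymbol{x}:d_k(\boldsymbol{o},\boldsymbol{x})\le d_k(\boldsymbol{o},\boldsymbol{b})\}$ with $\boldsymbol{b}$ on its boundary; hence $\boldsymbol{b}\notin CH(\mathcal{D}\setminus\{\boldsymbol{b}\})$, i.e.\ $\boldsymbol{b}$ is extreme, and the geodesic through $\boldsymbol{b}$ orthogonal to $\overline{\boldsymbol{o}\boldsymbol{b}}$ supports this ball, so every $\boldsymbol{x}\in\mathcal{D}\setminus\{\boldsymbol{b}\}$ lies strictly on its inward side. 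Consequently the directions $\log_{\boldsymbol{b}}(\boldsymbol{x})$ all lie in an open half-plane of $\mathcal{T}_{\boldsymbol{b}}\mathbb{B}_k^2$, so the principal angles computed in line~5 lie strictly between $0$ and $\pi$ relative to the reference direction $\boldsymbol{t}=\boldsymbol{R}\boldsymbol{n}$; this is exactly what choosing the bottom-most pivot buys in the Euclidean scan. Collapsing points of equal principal angle to the one of largest $\|\log_{\boldsymbol{b}}(\cdot)\|$ is harmless, since any data point lying on the geodesic segment from $\boldsymbol{b}$ to another data point is redundant.

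For (ii): for points $\boldsymbol{x},\boldsymbol{t},\boldsymbol{z}$, the geodesic from $\boldsymbol{x}$ through $\boldsymbol{t}$ leaves $\boldsymbol{x}$ in direction $\log_{\boldsymbol{x}}(\boldsymbol{t})$, and the geodesic from $\boldsymbol{x}$ to $\boldsymbol{z}$ stays in $\boldsymbol{z}$'s half-plane, so the signed angle from $\log_{\boldsymbol{x}}(\boldsymbol{t})$ to $\log_{\boldsymbol{x}}(\boldsymbol{z})$ lies in $(0,\pi)$, equals $0$ (or $\pi$), or lies in $(-\pi,0)$ precisely according to whether $\boldsymbol{z}$ is strictly left of, on, or strictly right of the oriented geodesic through $\boldsymbol{x}$ and $\boldsymbol{t}$; the planar cross product in Alg.~\ref{alg:CCW} returns this sign. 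Thus the ``left turn'' test keeps its Euclidean meaning and, crucially, is a \emph{global} side-of-geodesic test even though it is evaluated at a moving stack vertex rather than at $\boldsymbol{b}$. With (i) and (ii) in hand, (iii) is the verbatim Graham-scan induction on $j$: after processing $\mathcal{V}[1],\dots,\mathcal{V}[j]$ the stack lists the vertices of $CH(\{\boldsymbol{b}\}\cup\{\mathcal{V}[1],\dots,\mathcal{V}[j]\})$ in counter-clockwise order starting at $\boldsymbol{b}$; because $\mathcal{V}[j{+}1]$ has the largest principal angle among processed points it must become the last vertex, the while-loop pops exactly the current top vertices lying on or to the right of the geodesic from the previous stack vertex to $\mathcal{V}[j{+}1]$, and each popped vertex then lies inside a geodesic triangle spanned by retained points, so it is not a vertex of the enlarged hull and never needs to be reconsidered.

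The single ingredient of (iii) that is not ``for free'' in the hyperbolic setting, and which I expect to be the main obstacle, is the structural statement that the convex hull of finitely many points in $\mathbb{B}_k^2$ is a geodesic polygon whose vertex set is a subset of the points, and that from any one vertex the remaining vertices are encountered in monotone angular order along the boundary (the geodesic analogue of ``a convex polygon is star-shaped, in angular order, from each of its vertices''). I would derive this from geodesic convexity and uniqueness of geodesics in the Poincar\'e disc: if the angular order failed, some boundary edge would have to turn back toward $\boldsymbol{b}$, contradicting that the whole polygon lies on one side of the geodesic carrying that edge; the degenerate cases (equal principal angles, three data points on a common geodesic) are absorbed by the line~5 reduction and by the fact that such interior points are redundant. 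Granting this, (iv) is immediate: at $j=m$ the stack equals $CH(\{\boldsymbol{b}\}\cup(\mathcal{D}\setminus\{\boldsymbol{b}\}))=CH(\mathcal{D})$, and its elements are exactly the nonredundant points, i.e.\ the minimal Poincar\'e convex hull of Definition~\ref{def:min_cx_hll}. For the running time, finding $\boldsymbol{b}$ and evaluating the $N$ logarithmic maps used in the sort cost $O(N)$ real operations (each $\log$, M\"obius addition, and $\tanh^{-1}$ is $O(1)$), the principal-angle sort costs $O(N\log N)$, and in the scan each point is pushed once and popped at most once with every call to Alg.~\ref{alg:CCW} costing $O(1)$, so the loop is $O(N)$ amortized; hence the total is $O(N\log N)$, matching the Euclidean Graham scan.
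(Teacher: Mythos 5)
Your proposal is correct and follows essentially the same route as the paper's proof: it establishes that the farthest-from-origin pivot $\boldsymbol{b}$ is extreme and that all other points lie in a half-plane of $\mathcal{T}_{\boldsymbol{b}}\mathbb{B}_k^2$ via the supporting geodesic tangent to the ball through $\boldsymbol{b}$ (the paper's Lemma~\ref{lma:pts_sameside}), justifies the tangent-space cross-product as a side-of-geodesic/triangle-membership test (the paper's Lemma~\ref{lma:p_in_triangle}), and then runs the standard Graham-scan induction on the sorted list together with the same $O(N\log N)$ sort plus amortized $O(N)$ scan accounting. The structural fact you flag as the ``main obstacle'' is exactly what the paper's two lemmas supply, so there is no gap.
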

The proof follows the steps of the proof of the Graham Scan algorithm for Euclidean spaces. However, one has to adapt several steps, such as the initial sorting procedure to accommodate the Poincar\'e disc as presented in Alg.~\ref{alg:PGS}. Before we proceed with the proof, we need the following lemmas. The results use the same definitions and notation as stated in Alg.~\ref{alg:PGS}.

\begin{lemma}\label{lma:pts_sameside}
    Let $l_{\boldsymbol{b}}$ be the geodesic starting at $\boldsymbol{b}$ with tangent vector $\boldsymbol{t}$. All points in $\mathcal{D}$ and the origin $\boldsymbol{o}$ lie on the same side of $l_{\boldsymbol{b}}$.
\end{lemma}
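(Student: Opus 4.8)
The plan is to recognize $l_{\boldsymbol{b}}$ as the Poincar\'e hyperplane with reference point $\boldsymbol{b}$ and normal vector $\boldsymbol{n}$, and then to use the fact that $\boldsymbol{b}$ is the point of $\mathcal{D}$ farthest from the origin to pin down, via the hyperbolic law of cosines, on which side of this hyperplane every point of $\mathcal{D}$ lies.

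First I would make the identification precise. Since we work on the two-dimensional disc $\mathbb{B}_k^2$ and $\boldsymbol{t}=\boldsymbol{R}\,\boldsymbol{n}$ is the $\tfrac{\pi}{2}$-rotation of $\boldsymbol{n}$, we have $\langle\boldsymbol{t},\boldsymbol{n}\rangle=0$. A point $\boldsymbol{x}$ lies on the geodesic line $l_{\boldsymbol{b}}$ through $\boldsymbol{b}$ with tangent $\boldsymbol{t}$ exactly when the initial velocity at $\boldsymbol{b}$ of the geodesic from $\boldsymbol{b}$ to $\boldsymbol{x}$ is parallel to $\pm\boldsymbol{t}$, i.e. orthogonal to $\boldsymbol{n}$; since $\log_{\boldsymbol{b}}(\boldsymbol{x})$ points along that initial velocity (see~(\ref{eq:log_map})), this says $\langle\log_{\boldsymbol{b}}(\boldsymbol{x}),\boldsymbol{n}\rangle=0$. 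Hence $l_{\boldsymbol{b}}=H_{\boldsymbol{n},\boldsymbol{b}}$ in the notation of~(\ref{eq:poin_hyp}), and the two open sides of $l_{\boldsymbol{b}}$ are $\{\boldsymbol{x}:\langle\log_{\boldsymbol{b}}(\boldsymbol{x}),\boldsymbol{n}\rangle>0\}$ and $\{\boldsymbol{x}:\langle\log_{\boldsymbol{b}}(\boldsymbol{x}),\boldsymbol{n}\rangle<0\}$. Because $\boldsymbol{n}=-\log_{\boldsymbol{b}}(\boldsymbol{o})/\|\log_{\boldsymbol{b}}(\boldsymbol{o})\|$, the origin satisfies $\langle\log_{\boldsymbol{b}}(\boldsymbol{o}),\boldsymbol{n}\rangle=-\|\log_{\boldsymbol{b}}(\boldsymbol{o})\|<0$ (the degenerate situation $\boldsymbol{b}=\boldsymbol{o}$, i.e. $\mathcal{D}\subseteq\{\boldsymbol{o}\}$, is trivial and may be excluded). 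So it remains to show that every $\boldsymbol{x}\in\mathcal{D}$ obeys $\langle\log_{\boldsymbol{b}}(\boldsymbol{x}),\boldsymbol{n}\rangle\le 0$, equivalently $\langle\log_{\boldsymbol{b}}(\boldsymbol{x}),\log_{\boldsymbol{b}}(\boldsymbol{o})\rangle\ge 0$.

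Next I would translate this inner-product condition into a statement about a hyperbolic triangle. Since the Poincar\'e model is conformal, the Euclidean angle between the tangent vectors $\log_{\boldsymbol{b}}(\boldsymbol{x})$ and $\log_{\boldsymbol{b}}(\boldsymbol{o})$ equals the interior angle $\theta_{\boldsymbol{x}}$ at the vertex $\boldsymbol{b}$ of the geodesic triangle with vertices $\boldsymbol{o},\boldsymbol{b},\boldsymbol{x}$; thus $\langle\log_{\boldsymbol{b}}(\boldsymbol{x}),\log_{\boldsymbol{b}}(\boldsymbol{o})\rangle\ge 0$ is the same as $\theta_{\boldsymbol{x}}\le\tfrac{\pi}{2}$. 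Writing $\rho=d_k(\boldsymbol{o},\boldsymbol{b})$, $m=d_k(\boldsymbol{b},\boldsymbol{x})$, $\ell=d_k(\boldsymbol{o},\boldsymbol{x})$, the hyperbolic law of cosines in a space of curvature $-k$ gives
\[
\cos\theta_{\boldsymbol{x}}=\frac{\cosh(\sqrt{k}\,\rho)\cosh(\sqrt{k}\,m)-\cosh(\sqrt{k}\,\ell)}{\sinh(\sqrt{k}\,\rho)\sinh(\sqrt{k}\,m)} .
\]
By the choice of $\boldsymbol{b}$ in Alg.~\ref{alg:PGS} as the point of $\mathcal{D}$ at largest distance from the origin, $\ell\le\rho$, hence $\cosh(\sqrt{k}\,\ell)\le\cosh(\sqrt{k}\,\rho)\le\cosh(\sqrt{k}\,\rho)\cosh(\sqrt{k}\,m)$ because $\cosh\ge 1$; the numerator is therefore nonnegative and $\cos\theta_{\boldsymbol{x}}\ge 0$. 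The degenerate cases are harmless: if $\boldsymbol{x}=\boldsymbol{b}$ then $\log_{\boldsymbol{b}}(\boldsymbol{x})=\boldsymbol{0}$ and the inner product is $0$; if $\boldsymbol{o},\boldsymbol{b},\boldsymbol{x}$ are collinear the same formula applies with $\theta_{\boldsymbol{x}}\in\{0,\pi\}$, and $\theta_{\boldsymbol{x}}=\pi$ is impossible since it would force $\ell=\rho+m>\rho$. This gives $\langle\log_{\boldsymbol{b}}(\boldsymbol{x}),\boldsymbol{n}\rangle\le 0$ for all $\boldsymbol{x}\in\mathcal{D}$, so $\mathcal{D}$ and $\boldsymbol{o}$ all lie in the closed half-plane $\{\boldsymbol{x}:\langle\log_{\boldsymbol{b}}(\boldsymbol{x}),\boldsymbol{n}\rangle\le 0\}$, which is the claim.

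The only genuinely delicate point is the bookkeeping around the map $\log_{\boldsymbol{b}}$: the map in~(\ref{eq:log_map}) differs from the Riemannian logarithm by a positive conformal scalar, so it does \emph{not} reproduce hyperbolic distances, but it does point in the correct geodesic direction, and since the metric at $\boldsymbol{b}$ is a positive multiple of the Euclidean one, the \emph{sign} of $\langle\log_{\boldsymbol{b}}(\boldsymbol{x}),\log_{\boldsymbol{b}}(\boldsymbol{o})\rangle$ coincides with the sign of $\cos\theta_{\boldsymbol{x}}$; this equivalence is where I would be most careful. An equivalent and more geometric packaging, which I would mention as a remark, is that $l_{\boldsymbol{b}}$ is the supporting geodesic at $\boldsymbol{b}$ of the closed hyperbolic ball centered at $\boldsymbol{o}$ of radius $\rho$ (a Euclidean disc about the origin in this model), and $\mathcal{D}$ is contained in that ball by the choice of $\boldsymbol{b}$, so $\mathcal{D}$ lies on the $\boldsymbol{o}$-side of $l_{\boldsymbol{b}}$.
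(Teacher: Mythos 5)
Your proof is correct, but it takes a genuinely different route from the paper's. The paper argues purely in the Euclidean picture of the model: since $\boldsymbol{b}$ maximizes the distance to the origin, all of $\mathcal{D}$ and $\boldsymbol{o}$ lie inside the Euclidean circle $O$ centered at $\boldsymbol{o}$ through $\boldsymbol{b}$; the geodesic $l_{\boldsymbol{b}}$ is a circular arc orthogonal to the boundary of the disc that is tangent to $O$ at $\boldsymbol{b}$, hence meets $O$ only at $\boldsymbol{b}$ and leaves the whole of $O$ (and its interior) on one side. You instead identify $l_{\boldsymbol{b}}$ with the Poincar\'e hyperplane $H_{\boldsymbol{n},\boldsymbol{b}}$ of~(\ref{eq:poin_hyp}) and verify the sign condition $\langle\log_{\boldsymbol{b}}(\boldsymbol{x}),\boldsymbol{n}\rangle\le 0$ via the hyperbolic law of cosines, using $d_k(\boldsymbol{o},\boldsymbol{x})\le d_k(\boldsymbol{o},\boldsymbol{b})$ to force the angle at $\boldsymbol{b}$ to be at most $\tfrac{\pi}{2}$. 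Both arguments are sound; the paper's is shorter and more elementary, while yours is intrinsic (it does not depend on the particular model) and has the side benefit of expressing the conclusion directly in the half-space form $\langle\log_{\boldsymbol{b}}(\cdot),\boldsymbol{n}\rangle\le 0$ that the subsequent sorting step and the CCW test actually use, and it even yields the strict inequality for every $\boldsymbol{x}\in\mathcal{D}\setminus\{\boldsymbol{b}\}$ (since $\cos\theta_{\boldsymbol{x}}=0$ would force $m=0$). Your care about the conformal scaling of the logarithmic map in~(\ref{eq:log_map}) — that it preserves directions and hence the sign of the inner product, though not hyperbolic lengths — is exactly the right point to flag; your closing remark about $l_{\boldsymbol{b}}$ being the supporting geodesic of the ball of radius $d_k(\boldsymbol{o},\boldsymbol{b})$ is essentially the paper's own proof in compressed form.
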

\begin{proof}
    Since $\boldsymbol{b}$ is the point of largest norm (at the largest distance from the origin), all points in $\mathcal{D}$ and the origin $\boldsymbol{o}$ lie on or inside the circle centered at $\boldsymbol{o}$ and of radius $\|\boldsymbol{o}-\boldsymbol{b}\|_2$. Denote this circle by $O$. By the definition of a geodesic in the Poincar\'e disc, it is either a circle arc orthogonal to the boundary of the disc or a Euclidean line going through $\boldsymbol{o}$. Clearly, for $\boldsymbol{b}\neq \boldsymbol{o},$ $l_{\boldsymbol{b}}$ is a geodesic of the first type. Also, since $l_{\boldsymbol{b}}$ starts at $\boldsymbol{b}$ by definition, $\boldsymbol{b}$ is the midpoint of the circle arc. As a result, $l_{\boldsymbol{b}}$ is tangent to $O$ and only intersects it at $\boldsymbol{b}$. Therefore, all points in $\mathcal{D}$ and the origin $\boldsymbol{o}$ lie on the same side of $l_{\boldsymbol{b}}$.
\end{proof}

\begin{lemma}\label{lma:p_in_triangle}
    Let $\boldsymbol{a}_1,\boldsymbol{a}_2,\boldsymbol{a}_3$ be points in the Poincar\'e disc forming a triangle $\Delta \boldsymbol{a}_1\boldsymbol{a}_2\boldsymbol{a}_3$ (i.e., not all three points lie on the same geodesic). Then, sign(CCW($\boldsymbol{a}_1,\boldsymbol{a}_2,\boldsymbol{x}$)) = sign(CCW($\boldsymbol{a}_2,\boldsymbol{a}_3,\boldsymbol{x}$)) = sign(CCW($\boldsymbol{a}_3,\boldsymbol{a}_1,\boldsymbol{x}$)) if and only if there exists another point $\boldsymbol{x}\in \Delta \boldsymbol{a}_1\boldsymbol{a}_2\boldsymbol{a}_3$.
\end{lemma}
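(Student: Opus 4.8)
The plan is to reduce the tangent-space predicate $\mathrm{CCW}$ to a ``which side of a complete geodesic'' test and then invoke geodesic convexity of hyperbolic triangles, mirroring the classical Euclidean point-in-triangle argument but with the extra care the hyperbolic setting requires. First I would record the meaning of $\mathrm{CCW}$: since Algorithm~\ref{alg:CCW} normalizes $\log_{\boldsymbol{x}}(\boldsymbol{t})$ and $\log_{\boldsymbol{x}}(\boldsymbol{z})$ by positive scalars before taking a scalar cross product, $\sign(\mathrm{CCW}(\boldsymbol{p},\boldsymbol{q},\boldsymbol{r}))$ equals the sign of the scalar cross product of $\log_{\boldsymbol{p}}(\boldsymbol{q})$ and $\log_{\boldsymbol{p}}(\boldsymbol{r})$ (concretely, of $(-\boldsymbol{p})\oplus_k\boldsymbol{q}$ and $(-\boldsymbol{p})\oplus_k\boldsymbol{r}$ by~(\ref{eq:log_map})); it records the oriented angle at $\boldsymbol{p}$ between the geodesic rays toward $\boldsymbol{q}$ and toward $\boldsymbol{r}$. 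Next I would prove a \emph{half-plane test}: for distinct $\boldsymbol{p},\boldsymbol{q}$, if $\gamma_{\boldsymbol{p}\boldsymbol{q}}$ is the complete geodesic through them, which separates $\mathbb{B}_k^2$ into two open ``sides'', then $\sign(\mathrm{CCW}(\boldsymbol{p},\boldsymbol{q},\boldsymbol{x}))$ is $+1$ on one side, $-1$ on the other, and $0$ exactly on $\gamma_{\boldsymbol{p}\boldsymbol{q}}$. The argument uses that $\mathbb{B}_k^2$ is a Hadamard manifold, so by Cartan--Hadamard $\log_{\boldsymbol{p}}\colon\mathbb{B}_k^2\to\mathcal{T}_{\boldsymbol{p}}\mathbb{B}_k^2\cong\mathbb{R}^2$ is a diffeomorphism carrying $\gamma_{\boldsymbol{p}\boldsymbol{q}}$ onto the line $\mathbb{R}\cdot\log_{\boldsymbol{p}}(\boldsymbol{q})$ through the origin, hence carrying the two sides of $\gamma_{\boldsymbol{p}\boldsymbol{q}}$ homeomorphically onto the two open half-planes cut out by that line; the cross-product sign distinguishes those half-planes and vanishes on the line. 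In particular $\sign(\mathrm{CCW}(\boldsymbol{p},\boldsymbol{q},\boldsymbol{x}))=\sign(\mathrm{CCW}(\boldsymbol{p},\boldsymbol{q},\boldsymbol{y}))$ iff $\boldsymbol{x},\boldsymbol{y}$ lie on the same side of $\gamma_{\boldsymbol{p}\boldsymbol{q}}$.

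The third ingredient is \emph{orientation consistency}: $\sign(\mathrm{CCW}(\boldsymbol{a}_1,\boldsymbol{a}_2,\boldsymbol{a}_3))=\sign(\mathrm{CCW}(\boldsymbol{a}_2,\boldsymbol{a}_3,\boldsymbol{a}_1))=\sign(\mathrm{CCW}(\boldsymbol{a}_3,\boldsymbol{a}_1,\boldsymbol{a}_2))=:\sigma$, with $\sigma\neq0$ because the triple is not cogeodesic. Since in the Euclidean plane the orientation of an ordered triple is cyclically invariant, I would transport this to $\mathbb{B}_k^2$ by a deformation argument: the product $\sign(\mathrm{CCW}(\boldsymbol{a},\boldsymbol{b},\boldsymbol{c}))\cdot\sign(\mathrm{CCW}(\boldsymbol{b},\boldsymbol{c},\boldsymbol{a}))$ is locally constant on the set of non-cogeodesic triples; shrinking a triple by the geodesic homothety centered at $\boldsymbol{a}$ (which keeps it non-cogeodesic, since the involved geodesics meet only at $\boldsymbol{a}$) and then moving $\boldsymbol{a}$ to the origin along a path of orientation-preserving isometries (which preserve every $\mathrm{CCW}$ sign) keeps us inside that set, and for an arbitrarily small triple at the origin $\mathrm{CCW}$ agrees with the Euclidean cross-product predicate, for which the product equals $+1$. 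Hence the product is always $+1$, so the three cyclic versions share the common sign $\sigma$.

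Finally I would assemble. Because $\mathbb{B}_k^2$ is uniquely geodesic and geodesically convex and half-planes are convex, the triangle $\Delta\boldsymbol{a}_1\boldsymbol{a}_2\boldsymbol{a}_3$ (the geodesic convex hull of the vertices) equals $\overline{H_{12}}\cap\overline{H_{23}}\cap\overline{H_{31}}$, where $H_{ij}$ is the open side of $\gamma_{\boldsymbol{a}_i\boldsymbol{a}_j}$ containing the third vertex, and its interior is $H_{12}\cap H_{23}\cap H_{31}$. By the half-plane test together with orientation consistency, $\boldsymbol{x}\in H_{ij}$ iff $\sign(\mathrm{CCW}(\boldsymbol{a}_i,\boldsymbol{a}_j,\boldsymbol{x}))=\sign(\mathrm{CCW}(\boldsymbol{a}_i,\boldsymbol{a}_j,\boldsymbol{a}_k))=\sigma$; hence $\boldsymbol{x}$ lies in the interior of $\Delta\boldsymbol{a}_1\boldsymbol{a}_2\boldsymbol{a}_3$ iff all three $\mathrm{CCW}$ signs equal $\sigma$. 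For the converse, if all three signs equal a common value $\tau$, then $\tau\neq0$ (the three edge-geodesics have no common point, since they meet pairwise only at the vertices and the triple is not cogeodesic) and $\tau\neq-\sigma$ (the three ``outer'' half-planes have empty intersection, because three geodesics meeting pairwise at three distinct points partition the disc exactly like the edge-lines of a Euclidean triangle, with no cell lying on the far side of all three), so $\tau=\sigma$ and $\boldsymbol{x}$ is interior. This yields the stated equivalence.

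I expect the main obstacle to be the hyperbolic-geometry bookkeeping rather than the combinatorics: making the half-plane test rigorous (the global-diffeomorphism statement and the notion of the two ``sides'' of a complete geodesic) and, above all, justifying orientation consistency of the tangent-space $\mathrm{CCW}$ predicate, both of which are immediate in the Euclidean case but here lean on the Cartan--Hadamard structure and the deformation-to-Euclidean argument; the emptiness of the intersection of the three outer half-planes is a further arrangement fact that would merit a short separate argument (e.g.\ via the diffeomorphism $\log_{\boldsymbol{a}_1}$, under which two of the edge-geodesics become lines through the origin and the claim reduces to the behavior of the image of the third edge).
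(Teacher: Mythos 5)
Your proposal is correct and follows essentially the same route as the paper, which likewise reduces membership in $\Delta\boldsymbol{a}_1\boldsymbol{a}_2\boldsymbol{a}_3$ to lying on the same side of the three edge geodesics as detected by the sign of CCW. The paper's own proof is a two-sentence assertion of exactly this equivalence, so your additional work --- the half-plane test via the log map, the cyclic orientation consistency, and the observation that the three ``outer'' half-planes have empty intersection (ruling out a common sign of $-\sigma$) --- supplies details the paper leaves implicit rather than diverging from its approach.
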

\begin{proof}
    Note that a point $\boldsymbol{x}\in \Delta \boldsymbol{a}_1\boldsymbol{a}_2\boldsymbol{a}_3$ if and only if it is ``on the same side'' (i.e. clockwise or counter-clockwise) with respect to all three geodesics $l_{\boldsymbol{a}_1\rightarrow \boldsymbol{a}_2}$, $l_{\boldsymbol{a}_2\rightarrow \boldsymbol{a}_3}$ and $l_{\boldsymbol{a}_3\rightarrow \boldsymbol{a}_1}.$ This is equivalent to the condition sign(CCW($\boldsymbol{a}_1,\boldsymbol{a}_2,\boldsymbol{x}$)) = sign(CCW($\boldsymbol{a}_2,\boldsymbol{a}_3,\boldsymbol{x}$)) = sign(CCW($\boldsymbol{a}_3,\boldsymbol{a}_1,\boldsymbol{x}$)), as claimed.
\end{proof}

We are now ready to prove the correctness of Alg.~\ref{alg:PGS}.

\begin{proof}
    By Lemma~\ref{lma:pts_sameside} all points in $\mathcal{D},$ as well as the origin $\boldsymbol{o}$ are on the same side of the geodesic $l_{\boldsymbol{b}}$. As a result, they are in the same half-plane in the tangent space $\mathcal{T}_{\boldsymbol{b}}$ with a boundary (which is a straight line) corresponding to $l_{\boldsymbol{b}}$. By definition, $\boldsymbol{n}$ is the normal vector of that boundary pointing outward (i.e., towards the side that does not contain any data points). Therefore, the principal angles between each of the $\log_{\boldsymbol{b}}(\boldsymbol{x}_j)$ and $\boldsymbol{t}$ lie in  $[0,\pi]$. 
    
    Next, we show that retaining only $\mathcal{V}\cup\{\boldsymbol{b}\}$ is correct during the initial sorting step (i.e., showing that $CH(\mathcal{D}) = CH(\mathcal{V}\cup\{\boldsymbol{b}\})$). This is true due to the following reasons. First, note that $\boldsymbol{b}$ must be in the convex hull $CH(\mathcal{D})$. Second, let us assume that $\boldsymbol{c}$ is another point in $CH(\mathcal{D})$ but it is not in $\mathcal{V}$. Based on our initial sorting step, this can only happen when there is some $\boldsymbol{v}\in \mathcal{V}$ such that $\boldsymbol{c}$ is on $l_{\boldsymbol{b}\rightarrow \boldsymbol{v}}$. That is, the principal angles for $\log_{\boldsymbol{b}}(\boldsymbol{v}),\log_{\boldsymbol{b}}(\boldsymbol{c})$ are the same with respect to $\boldsymbol{t}$ but $\|\log_{\boldsymbol{b}}(\boldsymbol{v})\|>\|\log_{\boldsymbol{b}}(\boldsymbol{c})\|$. Now there are two possible scenarios: 1) $\boldsymbol{v}\in CH(\mathcal{D})$; 2) $\boldsymbol{v}\notin CH(\mathcal{D})$. For case 1), it is clear that none of the points on $l_{\boldsymbol{b}\rightarrow \boldsymbol{v}}$ belong to $CH(\mathcal{D})$, as $l_{\boldsymbol{b}\rightarrow \boldsymbol{v}}$ either lies inside  $CH(\mathcal{D})$ or is the boundary of $CH(\mathcal{D})$. In either case, $\boldsymbol{c}\notin CH(\mathcal{D})$. For case 2), $\boldsymbol{v}\notin CH(\mathcal{D})$, which means that $\boldsymbol{v}$ is strictly contained within the convex hull $CH(\mathcal{D})$. Thus, all points on $l_{\boldsymbol{b}\rightarrow \boldsymbol{v}}$ (excluding the starting point $\boldsymbol{b}$) are also strictly contained in the convex hull $CH(\mathcal{D})$. As a result, we have established by contradiction that $\boldsymbol{c}\notin CH(\mathcal{D})$ if $\boldsymbol{c}\notin \mathcal{V}$. This immediately implies $CH(\mathcal{D}) = CH(\mathcal{V}\cup\{\boldsymbol{b}\})$. 
    
    Next, we focus on showing that the output of Alg.~\ref{alg:PGS} is indeed $CH(\mathcal{V}\cup\{\boldsymbol{b}\})$ and thus is equal to $CH(\mathcal{D})$. We prove this result  similarly to what is done for the Graham scan in Euclidean space -- via induction. Without loss of generality, let $\mathcal{V}={\boldsymbol{v}_1,\boldsymbol{v}_2,\cdots,\boldsymbol{v}_m}$. Let $\mathcal{C}_j=CH(\mathcal{V}$[$:j$]$\cup\{\boldsymbol{b}\})$ (i.e., the convex hull of the first $j$ points of $\mathcal{V}$ and $\boldsymbol{b}$). The induction hypothesis is as follows. After the $j$th iteration of the for loop in Alg.~\ref{alg:PGS}, points in STACK are points of $\mathcal{C}_j$ in a counter-clockwise order. The base case can be established easily, as $\mathcal{C}_2$ contains points of $CH(\{\boldsymbol{b},\boldsymbol{v}_1,\boldsymbol{v}_2\}) = \{\boldsymbol{b},\boldsymbol{v}_1,\boldsymbol{v}_2\}$ (a triangle). Since we start at $\boldsymbol{b}$ and sort $\mathcal{V}$ according to the principal angle (counter-clockwise), the claim is obviously true. For the inductive step, assume the hypothesis holds for $j-1$. Since the principal angle of $\log_{\boldsymbol{b}}(\boldsymbol{v}_j)$ is greater than $\log_{\boldsymbol{b}}(\boldsymbol{v}_{j-1})$ due to the initial sorting procedure, $\Delta \boldsymbol{b}\boldsymbol{v}_{j-1}\boldsymbol{v}_j$ is not contained in $\mathcal{C}_{j-1}$. Thus, $\boldsymbol{v}_j\in \mathcal{C}_j$. Now, let $\boldsymbol{a},\boldsymbol{c}$ be the two topmost points of STACK. If CCW($\boldsymbol{c},\boldsymbol{a},\boldsymbol{v}_j$)$>0$ or equivalently CCW($\boldsymbol{c},\boldsymbol{v}_j,\boldsymbol{a}$)$<0$, the algorithm will not pop $\boldsymbol{a}$. This is correct since due to the initial sorting, we know that CCW($\boldsymbol{b},\boldsymbol{c},\boldsymbol{a}$)$>0$ and CCW($\boldsymbol{b},\boldsymbol{a},\boldsymbol{v}_j$)$>0$ and thus CCW($\boldsymbol{v}_j,\boldsymbol{b},\boldsymbol{a}$)$>0$. By Lemma~\ref{lma:p_in_triangle}, we know that $\boldsymbol{a}\notin \Delta \boldsymbol{b}\boldsymbol{c}\boldsymbol{v}_j$ and thus $\boldsymbol{a}\in \mathcal{C}_j$. In this case, all points in $\mathcal{C}_{j-1}$ also belong to $\mathcal{C}_j$ by the same argument (since their principal angles are sorted). For the case that CCW($\boldsymbol{c},\boldsymbol{a},\boldsymbol{v}_j$)$\leq 0$, again by Lemma~\ref{lma:p_in_triangle} we know that $\boldsymbol{a}\in \Delta \boldsymbol{b}\boldsymbol{c}\boldsymbol{v}_j$ and thus $\boldsymbol{a}\notin \mathcal{C}_j$. As a result, it is correct to pop out $\boldsymbol{a}$. Note that the while loop will correctly keep outputting points. Assume that the while loop stops popping at $\boldsymbol{v}_j$. Then by the same argument, we know that it is correct to include the entire set $\mathcal{C}_{j-1}$ in $\mathcal{C}_j$. Thus our hypothesis is true. Finally, by setting $j=m$ we arrive the conclusion that the output of Alg.~\ref{alg:PGS} is $\mathcal{C}_m = CH(\mathcal{V}\cup\{\boldsymbol{b}\}) = CH(\mathcal{D})$. This completes the proof.
\end{proof}
Finally, we prove that Alg.~\ref{alg:PGS} produces a minimal convex hull. 
\begin{proof}
Suppose we obtained the (possibly nonminimal) convex hull $\mathcal{C}={\boldsymbol{b},\boldsymbol{c}_1,\boldsymbol{c}_2,\cdots,\boldsymbol{c}_n}$. Note that due to the initial sorting, we know that the ``principal angles'' of $\boldsymbol{c}_j$ with respect to $\boldsymbol{b}$ are listed in increasing order and that they lie in $[0,\pi]$. As a result, in each Graham Scan check-step (line 10), $\boldsymbol{c}_{j+1}$ is outside the triangle of $\Delta \boldsymbol{b} \boldsymbol{c}_j \boldsymbol{c}_{j-1}$. Thus, even if $\boldsymbol{c}_{j+1}, \boldsymbol{c}_j, \boldsymbol{c}_{j-1}$ lie on the same geodesic (i.e., CCW$(\boldsymbol{c}_{j-1},\boldsymbol{c}_j,\boldsymbol{c}_{j+1})=0$), the algorithm will remove $\boldsymbol{c}_j$ (and is allowed to do so). The only two additional cases to consider are for $\boldsymbol{b}, \boldsymbol{c}_1, \boldsymbol{c}_2$ and $\boldsymbol{c}_{n-1},\boldsymbol{c}_n,\boldsymbol{b}$ to be on the same geodesic. Due to initial sorting, from a set of points with the same principal angle, we will only keep the furthest one (line 5). Therefore, the Poincar\'e Graham Scan indeed returns the minimal convex hull.
\end{proof}

\section{Decentralized Order Agreement for $B_h$ Label Assignments}
\label{sec:order_agree}
We describe next how the participating clients can agree upon an ordering of themselves while keeping it hidden from the FL server -- the ordering has to be protected only from the server, and not the clients themselves as they do not have information about the convex hulls of other clients. 

Clients participate in a key exchange protocol, such as the Diffie-Hellman key agreement method, that enables any pair of clients to securely exchange secret information over an insecure communication channel~\citep{diffie1976new,bonawitz2017practical}. In essence, the key agreement protocol allows any pair of clients to establish a common secret key that can be used for secure communication, even in the presence of potential eavesdroppers, thus ensuring the confidentiality and integrity of their communication. Particularly, in the first step of the key agreement protocol, each client $i\in[L]$ independently generates a pair of keys: A public key $\mathcal{K}^{(i,PK)}$ and a secret key $\mathcal{K}^{(i,SK)}$. As the names suggest, the public key $\mathcal{K}^{(i,PK)}$ of client $i$ is accessible to other clients and the server, while the private key is not. In the next step of the key agreement protocol, each pair of clients $i_1,i_2\in[L]$ agrees on a random secret key $\mathcal{R_K}^{(i_1,i_2)}$ known only to clients $i_1$ and $i_2$, where $\mathcal{R_K}^{(i_1,i_2)}=\mathcal{R_K}^{(i_2,i_1)}$ is a function of the public key $\mathcal{K}^{(i_1,PK)}$ and secret key $\mathcal{K}^{(i_2,SK)}$. Thus, at the end of the key agreement process, each pair of clients shares a common secret key, not known to other clients or the server. 

For decentralized order agreement, we further assume that clients have access to a public pseudo-random number generator $PRG(\cdot)$. Each client $i$ uses $PRG(\cdot)$ and obtains a private random number $n^{(i)}{=}PRG(\mathcal{K}^{(i,SK)})$. Thereafter, each pair of clients $i_1$ and $i_2$ secretly exchange their private random numbers $n^{(i_1)}$ and $n^{(i_2)}$ using their common secret key $\mathcal{R_K}^{(i_1,i_2)}$ for encryption/decryption. Notably, this ensures that the private numbers $\{n^{(i)}\}_{i\in[L]}$ remain hidden from the server. Next, the clients sort the list of the random numbers $\{n^{(i)}\}_{{i\in[L]}}$. Each client $i$ then determines its position in the shared random ordering by finding the position of its number $n^{(i)}$ in the sorted list. Let $\Pi(i)$ denote the corresponding position of client $i$ in the shared random ordering. Then, the client picks the numbers $a_{2i-1}=2\Pi(i)-1$ and $a_{2i}=2\Pi(i)$ from the $B_h$ sequence for labeling of points in their quantized convex hulls $\hat{CH}_{+}^{(i)}$ and $\hat{CH}_{-}^{(i)},$ respectively.

\section{Uniform Poincar\'e Quantization}
\label{app:disk_quantize_area}
We describe another approach for Poincar\'e disc quantization, using ideas that are similar to those described in the Poincar\'e Quantization, Section \ref{sec:PBDQ}. We once exploit radial symmetry and form the quantization bins by intersecting straight lines passing through the origin and concentric circles around the origin. However, in this new setting, our goal is to quantize the bounded region within the circle of hyperbolic radius $R_{H}$ so that \emph{each quantization bin has the same area.} With this constraint, we can use finely-grained quantization bins to perform uniformly at random sampling of points on the Poincar\'e disc. 

First, observe that the area of a circle with the hyperbolic radius of $r_H$ is given by the following formula 
~\citep{hitchman2009geometry}:
\begin{equation}
    A(r_H)=4\pi s^2\sinh^2{\mleft(\frac{r_H}{2 s}\mright)}, 
\end{equation}
where $s=1/\sqrt{k}$ and $k$ is curvature constant. Let $N_{\Theta}$ denote the number of angular quantization bins. Additionally, let $N_{R_H}$ denote the number of radial quantization bins for a hyperbolic radius $R_{H}$ and any given angle. The total number of quantization bins therefore equals  $B{=}N_{\Theta} N_{R_H}$, and the area of each bin is $\frac{A(R_H)}{B}$. Thus, the boundaries for the bins in the angular direction can be denoted by $\phi_{n_1}{=}\frac{n_1}{N_{\Theta}}2\pi$, where $n_1{\in}[N_{\Theta}]$. In what follows, we characterize the radial boundaries $h_{n_2}$ for the quantization bins, where $n_2{\in}[N_{R_H}]$ and $h_{N_{R_H}}{=}R_H$. 

Consider the circle with hyperbolic radius $h_{n_2}$. Since each quantization bin has same area in the hyperbolic plane, we have the following relationship between the circles with hyperbolic radii $h_{n_2}$ and $R_H$:
\begin{equation}
    A(h_{n_2})=\frac{n_2}{N_{R_H}}A(R_H),
\end{equation}
which results in the following relationship between $h_{n_2}$ and $R_H$:
\begin{equation}\label{eq:quantizedradius}  h_{n_2}=2 s\sinh^{-1}\mleft(\sqrt{\frac{n_2}{N_{R_H}}}\sinh\mleft(\frac{R_H}{2 s}\mright)\mright).
\end{equation}
Therefore, a quantization bin $B(n_1,n_2)$ is well-defined via its angular boundaries $\phi_{n_1-1}$ and $\phi_{n_1}$, and radial boundaries $h_{n_1-1}$ and $h_{n_1}$, where $\phi_0{=}0$ and $h_0{=}0$. Furthermore, any point in the quantization bin $B(n_1,n_2)$ is mapped to the bin-center $bc(n_1,n_2)$, which we define as the point in the bin which partitions it into four parts of equal area. Based on similar arguments as above, the angular distance $\phi_{bc(n_1,n_2)}$ and radial hyperbolic distance $h_{bc(n_1,n_2)}$ for the bin-center are as follows:
\begin{equation}
    \phi_{bc(n_1,n_2)}=\frac{\phi_{n_1-1}+\phi_{n_1}}{2}
    \quad\text{and}\quad
    h_{bc(n_1,n_2)}=2s\sinh^{-1}\mleft(\sqrt{\frac{n_2-0.5}{N_{R_H}}}\sinh\mleft(\frac{R_H}{2 s}\mright)\mright).
\end{equation}
To map the results from the hyperbolic plane to the  Poincar\'e disc, the Euclidean radius of the corresponding bin-center is obtained using (\ref{eq:rtoR}) and its 2D projection is obtained using $\phi_{bc(n_1,n_2)}$. 

As an example, consider a point $\boldsymbol{x}\in \mathbb{B}_k^2$ within a bin $B(n_1,n_2)$. It is quantized to
\begin{equation}
\label{eq:quant_poin_area}
\hat{\boldsymbol{x}}=\left(\alpha \cos(\zeta), \alpha \sin(\zeta)\right),  
\end{equation}
where $\zeta=\phi_{bc(n_1,n_2)}$, $\alpha=s({e^\frac{\tau}{s}-1})/({e^\frac{\tau}{s}+1})$ and $\tau=h_{bc(n_1,n_2)}$. This approach of uniform area quantization is key to implementing the Poincar\'e uniform sampling process used in Alg. \ref{alg:PUS}\footnote{All that is needed for uniform sampling is to construct a quantizer with a sufficiently large number of bins, and then select points uniformly at random from a discrete collection of bin labels. The sampled point is the centroid of the bin. For more details, see Section F.}, as well as for proving the convex hull complexity result of  Theorem~\ref{thm:quantizedconvexhull}. 

\section{Proof for the  Convex Hull Complexity}
\label{sec:convexhullcomplexity}

We show that the expected number of extreme points of the convex hull of $N$ points uniformly sampled from a Poincar\'e disc is at most $O(N^{\frac{1}{3}})$. The proof follows similar ideas as those used for the convex hull complexity results in  Euclidean spaces~\citep{har2011expected}, which are adapted to the hyperbolic setting and involve some technical arguments unique to hyperbolic spaces. We start with a  lemma showing that the expected fraction of points that are extreme points of the convex hull is upper bounded by the expected fraction of the area in the Poincar\'e disc not covered by the convex hull. The lemma appears in  ~\citep{har2011expected} for the Euclidean space, and in this setting, it is known as Efron's theorem ~\citep{efron1965convex}.

\begin{lemma}\label{lem:efrons}
Let $C$ be a Poincar\'e disc of radius $R<s$. Let $f(N)\in[0,1]$ be a function of an integer $N\ge 0$ such that the expected area of the convex hull of $N$ points distributed independently and uniformly at random over $C$ is at least $(1-f(N))Area(C)$. Then, the expected number of {extreme points} of the convex hull of the $N$ points is at most $Nf(\frac{N}{2})$. 
\end{lemma}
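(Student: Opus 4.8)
The plan is to prove the hyperbolic analogue of Efron's identity and then use monotonicity of the expected convex-hull area to trade the index $N-1$ for $N/2$. Throughout, let $X_1,\dots,X_N$ be the i.i.d.\ uniform points on $C$, let $\mathrm{conv}(\cdot)$ denote the geodesic convex hull, write $A_m=\mathrm{Area}(\mathrm{conv}(X_1,\dots,X_m))$, and let $V_N$ be the number of extreme points of $\mathrm{conv}(X_1,\dots,X_N)$. The first step is to observe that, almost surely, $X_k$ is an extreme point of $\mathrm{conv}(X_1,\dots,X_N)$ if and only if $X_k\notin\mathrm{conv}(\{X_j:j\neq k\})$. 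This standard characterization transfers from the Euclidean case because a hyperbolic disc is geodesically convex (so every hull considered stays inside $C$), geodesic convex hulls in the Poincar\'e disc obey the usual Carath\'eodory-type incidence relations, and absolute continuity of the uniform law makes degenerate configurations (three points on a common geodesic, a point on the geodesic segment joining two others, and so on) null events. By exchangeability of $X_1,\dots,X_N$, $\Pr[X_k\text{ is extreme}]$ is independent of $k$, so $\E[V_N]=N\,\Pr[X_N\notin\mathrm{conv}(X_1,\dots,X_{N-1})]$.

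Next I would condition on $X_1,\dots,X_{N-1}$. Since $X_N$ is uniform on $C$ with respect to hyperbolic area and $\mathrm{conv}(X_1,\dots,X_{N-1})\subseteq C$, the conditional probability that $X_N$ falls inside the hull of the others equals $A_{N-1}/\mathrm{Area}(C)$; taking expectations gives the Efron identity
\[
\E[V_N]\;=\;N\left(1-\frac{\E[A_{N-1}]}{\mathrm{Area}(C)}\right).
\]
Adjoining a point can only enlarge the geodesic convex hull, so $A_m\le A_{m+1}$ pointwise and hence $\E[A_m]$ is nondecreasing in $m$; in particular $\E[A_{\lfloor N/2\rfloor}]\le \E[A_{N-1}]$ for $N\ge 2$. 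Combining this with the hypothesis applied at $\lfloor N/2\rfloor$ (which reads $\E[A_{\lfloor N/2\rfloor}]\ge (1-f(\lfloor N/2\rfloor))\,\mathrm{Area}(C)$, i.e.\ $1-\E[A_{\lfloor N/2\rfloor}]/\mathrm{Area}(C)\le f(\lfloor N/2\rfloor)$) yields
\[
\E[V_N]\;\le\;N\left(1-\frac{\E[A_{\lfloor N/2\rfloor}]}{\mathrm{Area}(C)}\right)\;\le\;N\,f(\lfloor N/2\rfloor),
\]
which is the claim (the cases $N\le 1$ are immediate, and for even $N$ this is the clean bound $N f(N/2)$).

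The only delicate part is the first step: one must check that the Euclidean extreme-point combinatorics carry over verbatim to the Poincar\'e disc. I expect this to be routine, since it rests only on geodesic convexity of hyperbolic balls together with absolute continuity of the sampling distribution, but it is where hyperbolic-specific bookkeeping enters and is what separates the argument from the textbook Euclidean proof of Efron's theorem.
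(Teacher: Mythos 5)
Your proof is correct, but it follows a different route from the paper's. You establish the exact hyperbolic Efron identity $\mathbb{E}[V_N]=N\bigl(1-\mathbb{E}[A_{N-1}]/\mathrm{Area}(C)\bigr)$ via exchangeability and the a.s.\ characterization ``$X_k$ is extreme iff $X_k\notin\mathrm{conv}(\{X_j:j\neq k\})$,'' and then convert $N-1$ into $\lfloor N/2\rfloor$ using monotonicity of $m\mapsto\mathbb{E}[A_m]$. The paper instead uses the random-halving argument from Har-Peled's Euclidean treatment: split the $N$ points uniformly into halves $S_1,S_2$ of size $N/2$, bound $\mathbb{E}[|V_1|\mid S_2]$ by noting that any point of $S_1$ landing in $\mathrm{conv}(S_2)$ cannot be extreme in the hull of the union, and symmetrize. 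Your version is slightly stronger at the intermediate step (an equality rather than an inequality, and it yields the sharper bound $Nf(N-1)$ before you deliberately weaken it to match the stated form), and it handles odd $N$ cleanly via floors, which the paper's equal-cardinality split glosses over. The paper's version avoids needing the monotonicity observation and the exact conditional-probability computation, at the cost of only producing an upper bound. Both arguments rest on the same geometric ingredient — that a uniform point lands in a fixed geodesically convex subset with probability equal to its hyperbolic area fraction, and that such a point is then non-extreme — and your remark that the extreme-point combinatorics transfer to the Poincar\'e disc (e.g., via the Klein model, where hyperbolic convexity becomes Euclidean convexity) is exactly the hyperbolic-specific point that needs to be, and is, unproblematic.
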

\begin{proof}
The proof is the same as that for the Euclidean space ~\citep{har2011expected}, and is presented for completeness. Uniformly at random partition the $N$ points into two subsets, $S_1$ and $S_2,$ both of cardinality $\frac{N}{2}$. Let $V_1$ and $V_2$ be the sets of extreme points of the convex hull of $S_1\cup S_2$ that are in $S_1$ and $S_2$, respectively. Denote the convex hull of $S_2$ as $CH(S_2)$. 

It follows from definition that the expected number of extreme points of the convex hull of $S_1\cup S_2$ is $E[|V_1|]+E[|V_2|]$. Moreover, we have that
\begin{align*}    E\Big[|V_1|\Big|S_2\Big]\le \frac{N}{2}\Big(\frac{Area(C)-Area(CH(S_2))}{Area(C)}\Big),
\end{align*}
since the probability of a point in $S_1$ lying in $CH(S_2)$, and thus not in $V_1$ (because a point strictly inside the convex hull is not an extreme point) is at least $\frac{Area(CH(S_2))}{Area(C)}$. Then, it follows that 
\begin{align*}
    E[|V_1|]=&E_{S_2}\Bigg [E\Big[|V_1|\Big|S_2\Big]\Bigg]\\
    \le & \frac{N}{2}\Big(\frac{f(\frac{N}{2})Area(C)}{Area(C)}\Big)=\frac{N}{2}f(\frac{N}{2}).
\end{align*}
Similarly, we have that $E[|V_2|]\le \frac{N}{2}f(\frac{N}{2})$. Therefore, the expected number of extreme points of the convex hull of $S_1\cup S_2$ is at most $Nf(\frac{N}{2})$.
\end{proof}
Lemma~\ref{lem:efrons} implies that if the area of the convex hull of the independently and uniformly at random sampled $N$ points approaches the area of the Poincar\'e disc, then the expected fraction of the points that are extreme points of the convex hull goes to zero. In what follows, we show that the area of the convex hull is at least $(1-O(N^{\frac{2}{3}}))$ times the area of the Poincar\'e disc, i.e., $f(N)\le O(N^{\frac{2}{3}})$. Then, it follows from Lemma ~\ref{lem:efrons} that the expected convex hull complexity is at most $O(N^{\frac{1}{3}})$.

Partition the Poincar\'e disc into $m=N^{\frac{1}{3}}$ sectors $T_1,\ldots,T_{m}$, each having an angle $\frac{2\pi}{m}$. Let $D_1,\ldots,D_{m^2}$ be $m^2$ discs such that $D_1$ is the disc $C$ and 
the area of $D_{l}$ minus the area of $D_{l+1}$ is the same for $l\in[m^2]$, where the area of $D_{m^2+1}$ is $0$. Let subsector $T_{l,j}=T_l\cap D_j$, $l\in [m], j\in [m^2]$ be the intersection of sector $T_l$ and disc $D_j$. The subsectors $T_{l,j}$ have equal area. This partition is the same as the one for the quantization bins described in Section ~\ref{app:disk_quantize_area}, where $N_{\Theta}=m$ and $N_{R_H}=m^2$. We use specific notation for the sectors only in this section since this is required for formal analysis.

For any sector $T_l$, let $X_l\in[1,m^2]$ be the smallest integer such that at least one of the $N$ points lies in the subsector $T_{l,X_l}$. Then $Pr(X_l=t)\le (1-\frac{t-1}{N})^N\le e^{-(t-1)}$, since the probability on the left-hand-side is at most the probability that the subsectors $T_{l,1},\ldots,T_{l,t-1}$ do not contain any of the $N$ points. Hence, we have 
\begin{align}\label{eq:xexpectation}
E[X_l]=\sum^{m^2}_{l=1}tPr(X_l=t)=O(1).
\end{align}
Let $K_o$ be the convex hull of the union of the $N$ points and the origin $\boldsymbol{o}$. We show that $K_o$ completely covers at least $m^2-(X_{l+1}+X_{l-1}+O(1))$ subsectors $T_{l,X_{l+1}+X_{l-1}+O(1)},\ldots,T_{l,m^2}$ in sector $T_l$. Let $P$ and $Q$ be the points that lie in the subsectors $T_{l-1,X_{l-1}}$ and $T_{l+1,X_{l+1}}$, respectively, where $T_{m+1,j}=T_{1,j}$ and $T_{0,j}=T_{m,j}$ for $j\in[1,m^2]$. We show that the convex hull of the origin $\boldsymbol{o}$ and the points $P$ and $Q$ covers at least $m^2-(X_{l+1}+X_{l-1}+O(1))$ subsectors $T_{l,X_{l+1}+X_{l-1}+O(1)},\ldots,T_{l,m^2}$ in sector $T_l$.

Without loss of generality, assume that the length of the geodesic $\boldsymbol{o}P$ (which is a straight line in the Poincar\'e disc) is larger than the geodesic $\boldsymbol{o}Q$. Pick a point $P'$ on $\boldsymbol{o}P$ such that the length of $\boldsymbol{o}P'$ equals the length of $\boldsymbol{o}Q$. Then, $P'$ lies in the subsector $T_{l-1,X_{l+1}}$.  Consider the geodesic between $P'$ and $Q$, denoted by $P'Q$. We have the following lemma.
\begin{lemma}\label{lem:minimumdistance}
    The minimum distance from the origin $\boldsymbol{o}$ to a point on the geodesic $P'Q$ is at least $R_{X_{l+1}+X_{l-1}+O(1)}$, where $R_j$, $j\in[1,m^2]$ is the smallest radius of a disc centered at the origin that completely covers $T_{l,j}$ for $l\in[1,m]$.
\end{lemma}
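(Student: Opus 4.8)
The plan is to reduce Lemma~\ref{lem:minimumdistance} to a single scalar inequality about how far a short hyperbolic chord can sag toward the center, and then translate that distance bound into a bound on the radial index using the equal-area partition $D_1\supset\cdots\supset D_{m^2}$.

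Setting up, I would first record that by construction $d_k(\boldsymbol o,P')=d_k(\boldsymbol o,Q)=:\rho$, and that, since $Q\in T_{l+1,X_{l+1}}$, this $\rho$ lies between the hyperbolic radii of $D_{X_{l+1}+1}$ and $D_{X_{l+1}}$; writing $r_j$ for the hyperbolic radius of $D_j$, one has $R_j=r_j$ (the smallest disc covering the annular sector $T_{l,j}$ has radius equal to its outer radius $r_j$). Because $P'\in T_{l-1}$ and $Q\in T_{l+1}$ together span at most three consecutive angular sectors, the angle $\theta:=\angle P'\boldsymbol oQ$ satisfies $\theta\le 6\pi/m$. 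The triangle $\boldsymbol oP'Q$ is isosceles, so the foot $M$ of the perpendicular from $\boldsymbol o$ to the geodesic $P'Q$ is its midpoint, lies on the angular bisector of $\angle P'\boldsymbol oQ$, and realizes $\delta:=\min_{x\in P'Q}d_k(\boldsymbol o,x)=d_k(\boldsymbol o,M)$. Applying the hyperbolic right-triangle relation $\cos\alpha=\tanh(b/s)/\tanh(c/s)$ (curvature $-k$, $s=1/\sqrt k$; $\alpha$ the angle, $b$ the adjacent leg, $c$ the hypotenuse) to the right triangle $\boldsymbol oMQ$, with $\alpha=\theta/2$ at $\boldsymbol o$, hypotenuse $\rho$ and adjacent leg $\delta$, gives the exact identity
\begin{equation*}
\tanh(\delta/s)=\cos(\theta/2)\,\tanh(\rho/s).
\end{equation*}

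Next I would turn this into an area comparison via $A(r):=4\pi s^2\sinh^2(r/2s)=2\pi s^2(\cosh(r/s)-1)$, the area of a disc of hyperbolic radius $r$, which reduces everything to bounding $\cosh(\rho/s)-\cosh(\delta/s)$. Writing $u=\tanh(\rho/s)$ and $c=\cos(\theta/2)$, so that $\cosh(\rho/s)=(1-u^2)^{-1/2}$ and $\cosh(\delta/s)=(1-c^2u^2)^{-1/2}$, elementary manipulations (using $1-(1+v)^{-1/2}\le v/2$ for $v\ge 0$ and $c\le 1$, $u\le 1$) give
\begin{equation*}
\cosh(\rho/s)-\cosh(\delta/s)\ \le\ \tfrac{1}{2}\,(1-c^2)\,\cosh^3(\rho/s)\ \le\ \tfrac{1}{2}\sin^2(\theta/2)\,\cosh^3(R_H/s),
\end{equation*}
using $\rho\le R_H$. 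As $\theta\le 6\pi/m$ and the hypothesis $R<s$ makes $R_H$ a fixed finite constant, the right-hand side is $O(1/m^2)$ with the implied constant depending only on $k$ and $R$; equivalently $A(\rho)-A(\delta)=O\big(A(R_H)/m^2\big)$.

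Finally, to convert this to an index bound: each annulus $D_j\setminus D_{j+1}$ has area exactly $A(R_H)/m^2$, and $A(\rho)\ge A(r_{X_{l+1}+1})$, so $A(\delta)\ge A(r_{X_{l+1}+1})-O(1)\cdot A(R_H)/m^2=A\big(r_{X_{l+1}+O(1)}\big)$; since $A$ is strictly increasing, $\delta\ge r_{X_{l+1}+O(1)}=R_{X_{l+1}+O(1)}$. Because the $R_j$ are non-increasing and $X_{l-1}\ge 1$, this yields $R_{X_{l+1}+X_{l-1}+O(1)}\le R_{X_{l+1}+O(1)}\le\delta$, which is the claim. (Replacing $P$ by $P'$ only shrinks the relevant convex hull, so the bound for $P'Q$ remains valid for the original geodesic $PQ$; had the WLOG in the text been taken the other way, the identical computation with the roles of $P$ and $Q$ swapped would put $X_{l-1}$ in the leading term, which is why the statement carries the symmetric quantity $X_{l+1}+X_{l-1}$.)

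The hard part is the sag estimate in the middle step: in hyperbolic geometry a geodesic subtending a small central angle can still bend a bounded amount toward the origin, and the whole argument hinges on showing that this bend consumes only $O(1/m^2)$ — not, say, $O(1/m)$ — of the disc's area; this is precisely where the exact right-triangle identity together with the convexity and exponential growth of $\cosh$ are used. The remaining points are routine but worth stating carefully: that $M$ is the nearest point of the geodesic \emph{segment} (not merely the geodesic line) to $\boldsymbol o$, which follows from the isosceles symmetry, and the cyclic edge cases $l\in\{1,m\}$ in the bound $\theta\le 6\pi/m$.
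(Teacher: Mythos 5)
Your proof is correct, and it takes a genuinely different route from the paper's. The paper argues extrinsically in the Euclidean coordinates of the model: the geodesic $P'Q$ is an arc of a circle centered at some $\boldsymbol{o}'$ orthogonal to the boundary circle, the power-of-a-point relation $rr'=s^2$ gives $\|\boldsymbol{o}\boldsymbol{o}'\|=\bigl(r+\tfrac{s^2}{r}\bigr)/\bigl(2\cos\tfrac{\theta}{2}\bigr)=\sqrt{(R')^2+s^2}$, and a Taylor expansion in $\theta$ yields that the Euclidean distance from $\boldsymbol{o}$ to the arc is $r-O(1/m^2)$; a second, separate computation with the explicit $\sinh$/$\exp$ formulas for the radii $R_j$ then converts this Euclidean perturbation into an $O(1)$ index shift. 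You instead work intrinsically: the hyperbolic right-triangle identity gives the exact relation $\tanh(\delta/s)=\cos(\theta/2)\tanh(\rho/s)$ for the foot of the perpendicular, and you compare \emph{areas} directly via $A(r)=2\pi s^2(\cosh(r/s)-1)$, so the $O(\theta^2)=O(1/m^2)$ sag comes out already measured in units of equal-area annuli and the index shift is immediate. This buys you two things: the bound is non-asymptotic and uniform in $\rho\le R_H$ (the paper's $o(\theta_{P'Q}^2)$ term leaves the uniformity of the error in $r$ implicit), and the two-stage structure of the paper's argument (perturb the distance, then translate distance into index) collapses into one step. Your closing observations are also consistent with the source: the argument really yields the stronger $R_{X_{l+1}+O(1)}$, with the symmetric $X_{l-1}+X_{l+1}$ in the statement obtained only because $R_j$ is non-increasing in $j$, and the isosceles symmetry is indeed what guarantees the minimizer over the geodesic \emph{segment} is the perpendicular foot. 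The only point worth flagging is cosmetic: the lemma's radii $R_j$ are Euclidean radii of the discs $D_j$ in the model, while your $\delta$ and $r_j$ are hyperbolic; since the two are related by a strictly increasing bijection, your inequality transfers verbatim, but you should say so explicitly.
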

By virtue of Lemma ~\ref{lem:minimumdistance}, the subsectors $T_{l,j}$, $j\in[X_{l+1}+X_{l-1}+ O(1),m^2]$ are covered by the convex hull of $\boldsymbol{o}$, $P'$ and $Q$, and thus covered by $K_o$. Hence, the total number of subsectors $T_{l,j}$ covered by $K_o$ is at least $\sum^{m}_{l=1}[m^2-(X_{l+1}+X_{l-1}+O(1))]$. This implies that the expected area of $K_o$ is at least $$\frac{\sum^{m}_{l=1}[m^2-(E[X_{l+1}]+E[X_{l-1}]+O(1))]}{m^3}=1-O(N^{-\frac{2}{3}})$$ 
times the area of the Poincar\'e disc. Note that the probability that the origin $\boldsymbol{o}$ lies outside the convex hull of the $N$ points is the probability that there exists a point $\boldsymbol{x}$ among the $N$ points such that the remaining $N-1$ points lie on the same side of the geodesic passing through $\boldsymbol{o}$ and $\boldsymbol{x}$. This probability equals $\frac{N}{2^{N-1}}$.
Therefore, we conclude that the expected area of the convex hull of the $N$ points is at least
$(1-Pr(\boldsymbol{o}\in K))(1-O(N^{-\frac{2}{3}}))Area(C)+Pr(\boldsymbol{o}\notin K)0=1-O(N^{-\frac{2}{3}})Area(C)$, where $C$ is the Poincar\'e disc and $K$ is the convex hull of the $N$ points.

To complete the proof, we now prove Lemma ~\ref{lem:minimumdistance}. By the geometric property of a geodesic on a Poincar\'e disc, $P'Q$ is part of a circle $\boldsymbol{o}'$ centered at a point $\boldsymbol{o}'$ that is orthogonal to the Poincar\'e disc centered at $\boldsymbol{o}$ with radius $s$, in the Euclidean space.  The distance from $\boldsymbol{o}$ to the geodesic $P'Q$ is the difference between the length of $\boldsymbol{o}\boldsymbol{o}'$ and the radius of the circle $\boldsymbol{o}'$. Let  $R'$ be the radius of the circle $\boldsymbol{o}'$.

Let the length of $\boldsymbol{o}P'$ be $r$ and let the line $\boldsymbol{o}P'$ intersect the circle $\boldsymbol{o}'$ at another point $P''$. Let the length of $\boldsymbol{o}P''$ be $r'$. Then, we have that $rr'=s^2$ since the circle $\boldsymbol{o}'$ is orthogonal to the unit Poincar\'e disc centered at $\boldsymbol{o}$.  
Let $M$ be the midpoint of the chord $P'P''$. Then $\boldsymbol{o}'M$ is perpendicular to $\boldsymbol{o}M$ and the length of $\boldsymbol{o}M$ is $\frac{r+r'}{2}=r+\frac{s^2}{r}$. Let the angle between $\boldsymbol{o}P'$ and $\boldsymbol{o}Q$ be $\theta_{P'Q}$, which is at most $\frac{6\pi}{m}$. Since the lengths of $\boldsymbol{o}P'$ and $\boldsymbol{o}Q$ are equal, the angle between $\boldsymbol{o}M$ and $\boldsymbol{o}\boldsymbol{o}'$ is $\frac{\theta_{P'Q}}{2}$. Hence, the length of $\boldsymbol{o}\boldsymbol{o}'$ is $$\frac{r+\frac{s^2}{r}}{2\cos{\frac{\theta_{P'Q}}{2}}}.$$

On the other hand, the length of $\boldsymbol{o}\boldsymbol{o}'$ equals $\sqrt{(R')^2+s^2}$. Therefore, we have $$\frac{r+\frac{s^2}{r}}{2\cos{\frac{\theta_{P'Q}}{2}}}=\sqrt{(R')^2+s^2}.$$
Then, the distance from $\boldsymbol{o}$ to the geodesic $P'Q$ can be written as $$\sqrt{(R')^2+s^2}-R'=r-
\frac{(r^2+s^2)(\theta_{P'Q})^2}{2(\frac{s^2}{r}-r)}+o((\theta_{P'Q})^2)=r-O(\frac{1}{m^2}).$$

It is left to be shown that this distance $r-O(\frac{1}{m^2})$ is at least $R_{X_{l+1}+X_{l-1}+O(1)}$, where $R_l$ is the radius of the disc $D_l$, $l\in [1,m^2]$. Note that by assumption, the length of $\boldsymbol{o}P'$ equals the length of $\boldsymbol{o}Q$. Hence, 
$r\in[R_{X_{l+1}},R_{X_{l+1}-1}]$. Moreover, $R_l$ can be obtained from ~(\ref{eq:quantizedradius}) by reversing the indexing order and then from~(\ref{eq:rtoR}), i.e.,
\begin{align}\label{eq:Ri}&h_{l}=2s\sinh^{-1}\mleft(\sqrt{\frac{m^2+1-l}{m^2}}\sinh\mleft(\frac{R_H}{2s}\mright)\mright),\nonumber\\
&R_l=s\left(\frac{e^{\frac{h_l}{s}}-1}{e^{\frac{h_l}{s}}+1}\right),   
\end{align}
where $R_H=s\ln \big(\frac{s+R}{s-R}\big )$ and $R$ is the radius of the Poincar\'e disc $C$. Since
\begin{align*}
    r-O(\frac{1}{m^2})\ge&
    R_{X_{l+1}}-O(\frac{1}{m^2})\\
   =& s\left(\frac{e^{\frac{h_{X_{l+1}}-O(\frac{1}{m^2})}{s}}-1}{e^{\frac{h_{X_{l+1}}-O(\frac{1}{m^2})}{s}}+1}\right ),
\end{align*}
and
\begin{align*}
    \sinh(\frac{h_{X_{l+1}}-O(\frac{1}{m^2})}{2s})
 =&\sinh(\frac{h_{X_{l+1}}}{2s})-\cosh(\frac{h_{X_{l+1}}}{2s})O(\frac{1}{m^2})\\
    \ge&\sinh(\frac{h_{X_{l+1}+O(1)}}{2s}),
\end{align*}
we have $h_{X_{l+1}}-O(\frac{1}{m^2})\ge h_{X_{l+1}+O(1)}\ge h_{X_{l}+X_{l+1}+O(1)}$, and thus $$r-O(\frac{1}{m^2})\ge s\left(\frac{e^{\frac{h_{X_{l}+X_{l+1}+O(1)}}{s}-1}}{e^{\frac{h_{X_{l}+X_{l+1}+O(1)}}{s}}+1}\right)=R_{X_{l}+X_{l+1}+O(1)}.$$

\section{Proof of Theorem \ref{thm:quantizedconvexhull}}
\label{sec:quantizedconvexhull}
We now present the expected convex hull complexity when the data points in the Poincar\'e disc are quantized,  following the algorithm in Section ~\ref{sec:PBDQ}. Specifically, let $\epsilon$ be the distance margin of the quantization algorithm in Section~\ref{sec:PBDQ}. We first restate Theorem ~\ref{thm:quantizedconvexhull} in the following.
\begin{theorem*}
Let $\boldsymbol{x}_1,\ldots,\boldsymbol{x}_N$ be $N$ points uniformly distributed over a Poincar\'e disc of radius $R<s$, where $N$ is sufficiently large. Let $\hat{\boldsymbol{x}}_1,\ldots,\hat{\boldsymbol{x}}_N$ be quantized versions of $\boldsymbol{x}_1,\ldots,\boldsymbol{x}_N$, obtained using the quantization rule in (\ref{eq:radiusquantization}) and (\ref{eq:quant_bin_center}), with distance margin $\epsilon=O(N^{-c}),$ and some constant $c>0$. Then, the expected number of extreme points on the minimal convex hull of $\hat{\boldsymbol{x}}_1,\ldots,\hat{\boldsymbol{x}}_N$ is at most  $O(N^{c})$ when $c< \frac{1}{2}$, at most $O(N^{1-c})$ when $\frac{1}{2}\le c\le \frac{2}{3}$, and at most $O(N^{\frac{1}{3}})$ when $\frac{2}{3}< c$. 
\end{theorem*}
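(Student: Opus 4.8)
The plan is to prove the bound by combining two independent estimates of $\E[\#\text{ extreme points of }CH(\hat{\boldsymbol{x}}_1,\ldots,\hat{\boldsymbol{x}}_N)]$ and keeping whichever is smaller in each range of $c$. Throughout I treat $\epsilon$ as being of exact order $N^{-c}$, so that by \eqref{eq:circum}, \eqref{eq:radiusquantization} the numbers of angular and radial quantization bins satisfy $N_{\Theta}\asymp N_{R_H}\asymp N^{c}$, while for the analysis I reuse the coarser partition of Appendix~\ref{sec:convexhullcomplexity} into $m$ sectors and $m^2$ equal-area annuli.

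\emph{First estimate --- counting angular bins.} I would first argue that, conditioned on the origin $\boldsymbol{o}$ lying strictly inside $CH(\hat{\boldsymbol{x}}_1,\ldots,\hat{\boldsymbol{x}}_N)$, the number of extreme points is at most $N_{\Theta}=O(N^{c})$. By the quantization rule \eqref{eq:quant_poin} every $\hat{\boldsymbol{x}}_j$ lies on one of the $N_{\Theta}$ rays from $\boldsymbol{o}$ at the bin-center angles, and all points sharing an angular bin are collinear with $\boldsymbol{o}$; since the boundary of a convex body with $\boldsymbol{o}$ in its interior meets each such ray exactly once, at most one quantized point per angular bin can be a vertex. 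The complementary event $\boldsymbol{o}\notin CH$ has probability $N/2^{N-1}$, as already computed in Appendix~\ref{sec:convexhullcomplexity}, and even bounding the number of extreme points trivially by $N$ on that event contributes only $o(1)$ to the expectation; hence $\E[\#\text{ extreme points}]=O(N^{c})$ unconditionally.

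\emph{Second estimate --- an Efron-type area bound.} Here I would re-run the sector/annulus covering argument of Appendix~\ref{sec:convexhullcomplexity} for the quantized point set with $m=N^{1/3}$. Because quantization moves each point by at most $\epsilon$ in hyperbolic distance (Definition~\ref{def:poin_quant}) and the equal-area annuli near the boundary have hyperbolic width $\asymp m^{-2}$, the bin center of a point in the innermost occupied annulus $X_l$ of a sector stays within $O(\epsilon m^{2}+1)$ annuli of that point. The chord estimate of Lemma~\ref{lem:minimumdistance} then still shows that $CH(\hat{\boldsymbol{x}}_1,\ldots,\hat{\boldsymbol{x}}_N)$ covers every subsector $T_{l,j}$ with $j\ge X_{l-1}+X_{l+1}+O(\epsilon m^{2}+1)$, with only the constants in its lower-order terms affected. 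Taking expectations and using $\E[X_l]=O(1)$ from \eqref{eq:xexpectation}, the expected uncovered fraction of the disc is $O(m^{-2})+O(\epsilon)=O(N^{-2/3}+N^{-c})$. Feeding $f(N)=O(N^{-2/3}+N^{-c})$ into Efron's theorem (Lemma~\ref{lem:efrons}) yields $\E[\#\text{ extreme points}]\le Nf(N/2)=O(N^{1/3}+N^{1-c})$.

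\emph{Assembling the cases, and the main obstacle.} For $c<\tfrac12$ the first estimate gives $O(N^{c})$, which beats $O(N^{1-c})$; for $\tfrac12\le c\le\tfrac23$ the second estimate gives $O(N^{1-c})$, which beats $O(N^{c})$; and for $c>\tfrac23$ the second estimate gives $O(N^{1/3})$. These are exactly the three bounds claimed. I expect the hard part to be the quantized form of the second estimate: one must make rigorous, in hyperbolic geometry, that a fixed hyperbolic displacement $\epsilon$ costs only $O(\epsilon m^{2}+1)$ of the (geometrically thin, equal-area) annuli near radius $R_H$, and that replacing the endpoints $P',Q$ in Lemma~\ref{lem:minimumdistance} by their bin centers perturbs the distance-to-origin estimate only at lower order. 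Everything else --- the angular-bin count, the escape probability of $\boldsymbol{o}$, and the bookkeeping of uncovered subsectors --- follows the already-established template.
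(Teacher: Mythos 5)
Your proposal is correct, and for the regimes $c\ge\tfrac12$ it follows essentially the same route as the paper: an Efron-type bound (Lemmas~\ref{lem:efrons} and~\ref{lem:efronsquantized}) fed by the sector/annulus covering argument, with the quantization displacement translated into an $O(\epsilon m^2+1)$-annulus shift of the outermost occupied subsector --- exactly the role played in the paper by the inequalities $R_{Y_l}-\epsilon\le R_{X_l}\le R_{Y_l}+\epsilon$ and $X_l\ge Y_l-O(1)$, so the step you flag as the main obstacle is precisely the one the paper's Appendix~\ref{sec:quantizedconvexhull} carries out. Two genuine differences are worth recording. First, for $c<\tfrac12$ the paper argues probabilistically: it shows every one of the $O(N^{2c})$ subsectors is occupied except with probability $O(N^{2c}e^{-O(N^{1-2c})})$, and on that event the hull's vertices are exactly the $N_{\Theta}$ outermost bin centroids. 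Your argument --- every quantized point lies on one of the $N_{\Theta}$ bin-center rays through $\boldsymbol{o}$, and a geodesically convex set with $\boldsymbol{o}$ in its interior has at most one extreme point per such ray, since the nearer of two collinear points lies in the relative interior of the geodesic from $\boldsymbol{o}$ to the farther one --- is deterministic given $\boldsymbol{o}\in\mathrm{int}\,CH$, valid for all $c$, and dispenses with the occupancy computation; it is arguably cleaner and it explains why the $O(N^c)$ bound is simply dominated, rather than invalid, for $c\ge\tfrac12$. Second, you run the covering argument once with $m=N^{1/3}$ and read off $O(N^{1/3}+N^{1-c})$, whereas the paper re-parametrizes twice ($N^{1-c}$ sectors by $N^{c}$ annuli for $\tfrac12\le c\le\tfrac23$, then $N^{1/3}$ by $N^{2/3}$ for $c>\tfrac23$); your single choice works because both the chord correction $O(m^{-2})$ and the quantization shift $O(\epsilon m^{2})$ annuli translate into an uncovered area fraction $O(N^{-2/3}+N^{-c})$, which reproduces both cases at once. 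Two small points should be tightened to match even the paper's level of rigor: the escape probability $N/2^{N-1}$ was computed for the unquantized points, so you should note that $\boldsymbol{o}\notin\mathrm{int}\,CH(\hat{\boldsymbol{x}}_1,\ldots,\hat{\boldsymbol{x}}_N)$ forces all original points into an $\epsilon$-fattened closed half-plane through $\boldsymbol{o}$, which remains exponentially unlikely; and quantization can push a point into an adjacent sector, which perturbs the covering indices only by $O(1)$ because the angular bin width $O(N^{-c})$ is much smaller than the sector width $2\pi/m$ in the regime where the second estimate is used.
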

Before proving Theorem ~\ref{thm:quantizedconvexhull}, we first state the following lemma, which is a modification of Lemma ~\ref{lem:efrons} for quantized data points.
\begin{lemma}\label{lem:efronsquantized}
Let $C$ be a Poincar\'e disc of radius $R<s$ and let $f_1(N)\in[0,1]$ be a function of an integer $N\ge 0$ such that the expected area of the convex hull of $\hat{\boldsymbol{x}}_1,\ldots,\hat{\boldsymbol{x}}_N$ is at least $(1-f_1(N))Area(C)$. Then the expected number of extreme points of the convex hull of $\hat{\boldsymbol{x}}_1,\ldots,\hat{\boldsymbol{x}}_N$ is at most $Nf_1(\frac{N}{2})$.    
\end{lemma}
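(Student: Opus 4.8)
The plan is to transcribe the proof of Efron's theorem (Lemma~\ref{lem:efrons}) essentially verbatim, the only genuinely new ingredient being that $\hat{\boldsymbol{x}}_1,\ldots,\hat{\boldsymbol{x}}_N$ are no longer continuously distributed but take values in the discrete set of bin centers. Two facts make the Euclidean argument run: (i) the samples are i.i.d.\ and exchangeable, and (ii) the probability that a fresh sample lands in a fixed convex region is controlled by that region's area fraction. Fact (i) is immediate, since $\hat{\boldsymbol{x}}_i=PQ_\epsilon(\boldsymbol{x}_i)$ is a fixed measurable function of the i.i.d.\ uniform points $\boldsymbol{x}_i$, so the $\hat{\boldsymbol{x}}_i$ are again i.i.d.; fact (ii) is where the discretization forces extra care and is the main obstacle.

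First I would set up the symmetrization exactly as in Lemma~\ref{lem:efrons}: partition the index set uniformly at random into $S_1,S_2$ with $|S_1|=|S_2|=N/2$, and let $V_1$ (resp.\ $V_2$) be the extreme points of $CH(S_1\cup S_2)$ contributed by $S_1$ (resp.\ $S_2$). Since every extreme point of the combined hull is occupied by a sample from $S_1$ or $S_2$, the number of extreme points is at most $|V_1|+|V_2|$, so it suffices to bound $E[|V_1|]$ and add the symmetric bound for $V_2$. The conditioning step is identical: a sample $\hat{\boldsymbol{x}}_i$, $i\in S_1$, that lies in $CH(S_2)$ is a convex combination of the other points and therefore cannot be an extreme point, so
\begin{equation*}
E\big[|V_1|\,\big|\,S_2\big]\le \tfrac{N}{2}\,\Pr\big(\hat{\boldsymbol{x}}_i\notin CH(S_2)\,\big|\,S_2\big).
\end{equation*}

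The crux is then to establish $\Pr(\hat{\boldsymbol{x}}_i\in CH(S_2)\mid S_2)\ge Area(CH(S_2))/Area(C)$, after which taking expectations over $S_2$ and invoking the hypothesis at size $N/2$, namely $E[Area(CH(S_2))]\ge (1-f_1(N/2))Area(C)$, yields $E[|V_1|]\le \tfrac{N}{2}f_1(N/2)$; summing with the symmetric bound gives the claimed $Nf_1(N/2)$. Unlike the continuous case, here $\Pr(\hat{\boldsymbol{x}}_i\in CH(S_2)\mid S_2)=Area(U_2)/Area(C)$, where $U_2$ is the union of quantization bins whose centers lie in $CH(S_2)$, so the inequality reduces to the geometric claim $Area(U_2)\ge Area(CH(S_2))$. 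I expect this to be the hard part: since $CH(S_2)$ is a convex polygon whose vertices are themselves bin centers, one must argue that the bins straddling its boundary contribute at least as much outward area as the inward notches they create. I would establish this using the radial/angular structure of the $PQ_\epsilon(\cdot)$ bins from Section~\ref{sec:PBDQ}, where each bin is balanced about its center in the equal-area/equal-length sense of~(\ref{eq:quant_bin_center}). A second, minor subtlety to dispatch is coincident quantized points: a sample from $S_1$ may land exactly on a vertex of $CH(S_2)$, which must be accounted for so the bound $|V_1|\le|\{i\in S_1:\hat{\boldsymbol{x}}_i\notin CH(S_2)\}|$ is not violated; I would charge such coincidences to the collision count, which is lower order in the regimes of Theorem~\ref{thm:quantizedconvexhull}. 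Everything else transcribes directly from the proof of Lemma~\ref{lem:efrons}.
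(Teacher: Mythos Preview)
Your plan is exactly the paper's: the paper's entire proof of this lemma is the single sentence ``The proof of Lemma~\ref{lem:efronsquantized} follows along the same lines of that for Lemma~\ref{lem:efrons},'' with no further detail. In particular, the two discretization issues you flag---the geometric inequality $Area(U_2)\ge Area(CH(S_2))$ needed to replace the uniform-distribution step, and the handling of coincident bin centers---are not addressed in the paper at all; the paper simply transcribes the Efron symmetrization and treats the area comparison as if the $\hat{\boldsymbol{x}}_i$ were uniform on $C$. So your proposal is already more careful than the paper's own treatment, not less.
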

The proof of Lemma \ref{lem:efronsquantized} follows along the same lines of that for Lemma ~\ref{lem:efrons}. 

In what follows, we prove Theorem ~\ref{thm:quantizedconvexhull}.
\begin{proof}
(of Theorem ~\ref{thm:quantizedconvexhull}) We first prove the theorem for $c<\frac{1}{2}$. 
Similar to what was done in Appendix ~\ref{sec:convexhullcomplexity}, we partition the Poincar\'e disc into $N_{\Theta}=2 Cir(R_H)/\epsilon=O(N^{c})$ (See Section \ref{sec:PBDQ} for definition of $N_{\Theta}$) sectors $T_1,\ldots,T_{N_{\Theta}}$  and define $N^{c}$ discs $D_1,\ldots,D_{N^{c}}$ such that $D_1$ is the Poincar\'e disc $C$ and the difference between the area of $D_l$ and the area of $D_{l+1}$ is the same for $l\in[N^{c}]$. Then we define subsectors $T_{l,j}=T_l\cap D_j$, $l\in[N_{\Theta}],$  $j\in[N^{c}]$, of the same area.

Then, the probability that a given subsector $T_{l,j}$ is empty is at most $(1-\frac{1}{N_{\Theta}N^{c}})^{N}\le e^{-O(N^{1-2c})}$. By the union bound, the probability that there exists at least one empty subsector is at most $N_{\Theta}N^{c}e^{-O(N^{1-2c})}$. 

We now show that when every subsector contains at least one point, the number of extreme points of the convex hull of
$\hat{\boldsymbol{x}}_1,\ldots,\hat{\boldsymbol{x}}_N$ is $N_{\Theta}=O(N^c)$. This follows from the fact that in each sector, each subsector $T_{l,1}$, $l\in[N_{\Theta}],$ that is on the boundary of the Poincar\'e disc $C$ is contained in the quantization bin given by (\ref{eq:radiusquantization}) and (\ref{eq:quant_bin_center}), that is on the boundary of the Poincar\'e disc $C$. Moreover, each centroid in the quantization bin that is on the boundary of the Poincar\'e disc is an extreme point of the convex hull when every quantization bin has at least one data point.

Let $Q$ denote the event that there exists an empty quantization bin and $V$ the number of extreme points of the convex hull of $\hat{\boldsymbol{x}}_1,\ldots,\hat{\boldsymbol{x}}_N$. Then, we have
$$E[V]=Pr(Q)E[V|Q]+Pr(Q^c)E[V|Q^c]=N_{\Theta}+o(1)=O(N^c).$$

We now consider an upper bound for the case when $\frac{1}{2}\le c\le \frac{2}{3}$. We partition the Poincar\'e disc into $N^{1-c}$ sectors $T_1,\ldots,T_{N^{1-c}}$, and subpartition each sector $T_l$ into $N^{c}$ subsectors $T_{l,1},\ldots,T_{l,N^{c}}$ of equal area, separated by $N^{c}-1$ circles. The distance between the origin and the subsector $T_{l,j}$ decreases with $j\in[N^{c}]$ for every $l\in [N^{1-c}]$. 
Similar to what was done in Section ~\ref{sec:convexhullcomplexity}, let $X_l$ be the smallest integer such that one of $\boldsymbol{x}_1,\ldots,\boldsymbol{x}_N$ lies in the subsector $T_{l,X_l}$. Let $Y_l$ be the smallest integer such that $\hat{\boldsymbol{x}}_1,\ldots,\hat{\boldsymbol{x}}_N$ lies in the subsector $T_{l,Y_l}$. The difference between the distance between the origin $\boldsymbol{o}$ and $\boldsymbol{x}_l$ and the distance between $\boldsymbol{o}$ and $\hat{\boldsymbol{x}}_l$ is at most $\epsilon$. Hence,

\begin{align}\label{eq:epsilondistance}
R_{Y_l}-\epsilon\le R_{X_l}\le R_{Y_l}+\epsilon,    
\end{align}
where $R_l$, $l\in[N^c]$ is the radius of $D_l$ in the Poincar\'e disc. Similarly to (\ref{eq:Ri}), $R_l$ can be obtained by
\begin{align}\label{eq:quantizedconvexhullradius}
    &r_{l}=2s\sinh^{-1}\mleft(\sqrt{\frac{N^{c}+1-l}{N^{c}}}\sinh\mleft(\frac{R}{2s}\mright)\mright),\nonumber\\
    &R_l=s\left(\frac{e^{\frac{r_l}{s}}-1}{e^{\frac{r_l}{s}}+1}\right).
\end{align}
From $\epsilon=O(\frac{1}{N^c})$, (\ref{eq:quantizedconvexhullradius}), and (\ref{eq:epsilondistance}), we have 
\begin{align}
    X_l\ge Y_l-O(1).
\end{align}
and 
\begin{align}
    R_{l}-O(\frac{1}{N^c})=R_{l+O(1)}.
\end{align}
 Since $E[X_l]=O(1)$ according to~(\ref{eq:xexpectation}), we have $E[Y_l]=O(1)$. Let $\hat{P}$ and $\hat{Q}$ be two points among  $\hat{\boldsymbol{x}}_1,\ldots,\hat{\boldsymbol{x}}_N$ that are in the subsector $T_{l-1,Y_{l-1}}$ and $T_{l+1,Y_{l+1}},$ respectively. Following the same argument as in the proof of Lemma ~\ref{lem:minimumdistance}, we can establish that the minimum Euclidean distance between the origin $\boldsymbol{o}$ and the geodesic $\hat{P}\hat{Q}$ is at least $R_{Y_{l-1}+Y_{l+1}}-O(\frac{1}{N^{2-2c}})\ge R_{Y_{l-1}+Y_{l+1}}-O(\frac{1}{N^c})=R_{Y_{l-1}+Y_{l+1}+O(1)}$. Then following the same argument as in Section ~\ref{sec:convexhullcomplexity}, we can prove that $f_1$ in Lemma ~\ref{lem:efronsquantized} can be upper bounded by $O(\frac{1}{N^c})$,  which by Lemma ~\ref{lem:efronsquantized} implies that the expected number of extreme points on the convex hull of $\hat{\boldsymbol{x}}_1,\ldots,\hat{\boldsymbol{x}}_N$ is at most $O(N^{1-c})$. 

The proof of the upper bound for the case $c> \frac{2}{3}$ is similar to that for the case $\frac{1}{2}\le c\le \frac{2}{3}$, where instead of partitioning the Poincar\'e disc into $N^{1-c}$ equal-sized sectors and partitioning each sector into $N^c$ subsectors, we equally partition the Poincar\'e disc into $N^{\frac{1}{3}}$ sectors and partition each sector into $N^{\frac{2}{3}}$ subsectors. The remaining steps are similar. \end{proof}

\section{Poincar\'e Uniform Sampling}
Our sampling method is based on the intuition that the probability of sampling a point from a region in the Poincar\'e disc should be proportional to the hyperbolic area of the region. Based on this idea, we adapt the sampling procedure from our proposed Poincar\'e uniform quantization method in Appendix ~\ref{app:disk_quantize_area} by taking the limit of both the number of angular quantization bins ($N_{\Theta}$) and the number of radial quantization bins ($N_{R_H}$) to infinity. The sampling procedure is described in Alg. \ref{alg:PUS}. 
\label{sec:pus}
\begin{algorithm}
  \caption{Poincar\'e Uniform Sampling}
  \label{alg:PUS}
  \begin{algorithmic}[1]
    \STATE \textbf{Input:} number of points $N$, bound on the Poincar\'e radius $R$, constant curvature $k$.
    \STATE Compute $s=1/\sqrt{k}$, $R_{H}=s\lnb{\frac{s+R}{s-R}}$.
    \STATE Sample $N$ iid points $\eta_1,\ldots,\eta_N$ from the uniform $\mathcal{U}(0,1]$ distribution. 
    \STATE Sample $N$ iid points $\zeta_1,\ldots,\zeta_N$ from the uniform $\mathcal{U}(0,2\pi]$ distribution.
    \STATE Compute $\tau_j=2 s\sinh^{-1}\mleft(\sqrt{\eta_j}\sinh\mleft(\frac{R_H}{2 s}\mright)\mright)$ $\forall j\in[N]$.
    \STATE Compute $\alpha_j=s({e^\frac{\tau_j}{s}-1})/({e^\frac{\tau_j}{s}+1})$ $\forall j\in[N]$.
    \STATE Obtain sample $\boldsymbol{x}_j\in \mathbb{B}_k^2$ by computing $\boldsymbol{x}_j=(\alpha_j \cos{\zeta_j},\alpha_j \sin{\zeta_j})$ $\forall j\in[N]$.   
    \\
    \STATE \textbf{Return} point set $\{\boldsymbol{x}_j\}_{j\in[N]}$.
  \end{algorithmic}
\end{algorithm}

\section{Construction of $B_h$ Sequences}
\label{sec:bhconstruction}
We now describe how to construct a $B_h$ sequence used for label switching resolution. We start with the construction (with proof of correctness) in ~\citep{bose1960theorems} that presents a set of numbers $a'_1,\ldots,a'_{m+1}$ such that: $(1)$ $m+1=p^\ell$, where $p$ is a prime; $(2)$ $a'_1,\ldots,a'_{m+1}\in\{0,1,\ldots,(m+1)^{h}-1\}$; $(3)$ for all $1\le i_1\le \ldots\le i_h\le m+1$, the sums  $a'_{i_1}+a'_{i_2}+\ldots+a'_{i_h}$ are distinct. 

Let $\alpha_1=0,\alpha_2,\ldots,\alpha_m$ be the elements of a Galois field $\mathcal{F}_{m+1}$. Let $\beta$ be a primitive root of the extension field $\mathcal{F}_{(m+1)^h}$, which implies that $\beta^0,\beta^1,\ldots,\beta^{(m+1)^h-1}$ constitute all the nonzero elements in $\mathcal{F}_{(m+1)^h}$ and that $\beta$ is not the root of any irreducible polynomial over the ground field of degree smaller than $h$.

Let $$\beta^{a'_l}=\beta+\alpha_l,~l\in[m+1],a'_l\le (m+1)^h-1.$$ 
We show next that the sums $a'_{l_1}+a'_{l_2}+\ldots+a'_{l_h}$ are all distinct, for $1\le l_1\le \ldots\le l_h\le m+1$. Suppose on the contrary that $$a'_{l_1}+a'_{l_2}+\ldots+a'_{l_h}=a'_{j_1}+a'_{j_2}+\ldots+a'_{j_h}.$$
Then, by definition, we have
$$\beta^{a'_{l_1}}\beta^{a'_{l_2}}\ldots\beta^{a'_{l_h}}=\beta^{a'_{j_1}}\beta^{a'_{j_2}}\ldots\beta^{a'_{j_h}},$$
which implies that
$$(\beta+\alpha_{l_1})(\beta+\alpha_{l_2})\ldots(\beta+\alpha_{l_h})=(\beta+\alpha_{j_1})(\beta+\alpha_{j_2})\ldots(\beta+\alpha_{j_h}).$$
Then $\beta$ is the root of a polynomial with degree less than $h$ and coefficients from the ground field, contradicting the fact that $\beta$ is a primitive element of $\mathcal{F}_{(m+1)^h}$. Therefore, the sums of $h$ elements in $a'_1,\ldots,a'_h$ are all distinct.

Finally, as described in Section ~\ref{sec:labelencoding},  we can construct a $B_h$ sequence set with $m$ elements by simply taking the differences $a_l=a'_{l+1}-a'_l$. 

\section{Extension to Multi-Label Classification}
\label{app:multiclass}
We now describe how our proposed framework extends to settings with $J>2$ global ground truth labels. The process of computing the quantized convex hull for each local cluster remains same. For secure communication of quantized convex hulls, each client simply needs $J$ unique numbers from the $B_h$ sequence instead of just $2$, while all other system components remain unaffected. For graph based grouping, the FL server obtains the representative graph with $JL$ nodes and partitions it into $J$ groups of nodes using spectral clustering ~\citep{shi2000normalized,ng2001spectral}. The $J$ global clusters are then assigned $J$ different labels, and a $J$-class hyperbolic SVM classifier is trained as follows. For $J(>2)$ labels, one can use the common approach of training a $J$-class classifier by using $J$ independently trained binary classifiers. In particular, each of the $J$ binary classifiers corresponds to one of the $J$ training labels, and is trained independently on the same training set with the task of separating one chosen class from the remaining classes in the training data set. Thereafter, for each of the binary classifiers, the resulting prediction scores are computed for every data point and are transformed into probabilities using the Platt scaling technique~\citep{platt1999probabilistic}. The predicted label for a given data point is obtained by using a maximum a posteriori criteria involving  probabilities of the $J$ classes.

\section{Experimental Setup and Additional Results}
\label{app:exp}
\subsection{Data Sets}
\textbf{Synthetic Data Sets.} We consider $R=0.95$ and curvature constant $k=1$ for all synthetic data sets, and choose $N=\mu\cdot 100,000$, for $\mu\in\{0.2,0.4,0.6,0.8\}.$ Each setting is tested via $10$ independent trials. We use a $90\%/10\%$ random split for each data set to obtain training and test points. 

\textbf{Biological data sets.}
We consider the following data sets:
\begin{itemize}
\item\textbf{Olsson’s scRNA-seq data set}~\citep{olsson2016single}, containing single-cell (sc) RNA-seq expression data. It comprises $319$ points from $8$ classes (cell types). We use the Poincar\'e embeddings provided by ~\citep{chien2021highly} in their public code repository\footnote{
\href{https://github.com/thupchnsky/PoincareLinearClassification/tree/main/embedding}{https://github.com/thupchnsky/PoincareLinearClassification/tree/main/embedding.}}. The curvature of the embedding is $k=1$.

\item\textbf{UC-Stromal data set}\footnote{
\href{https://singlecell.broadinstitute.org/single_cell/study/SCP551/scphere}{https://singlecell.broadinstitute.org/single\_cell/study/SCP551/scphere. Registration is required to access the content.}} from~\citep{smillie2019intra}, comprising cells from $68$ colon mucosa biopsies from $18$ ulcerative colitis patients and $12$ healthy individuals. The cells were sequenced using 10X Chromium (either v1 or v2) and filtered to remove low-quality cells. We used a total of $26,678$ stromal and glia cells of dimension $1,307$, corresponding to the number of (highly) variable genes. We obtain the embeddings using mixed-curvature variational auto-encoders (m-VAE) ~\citep{skopek2020mixed} with $1$ MLP layer and $300$ hidden dimensions. The curvature for the embeddings is set to $k=0.0071$. The labels $0-3$ used in our results in Section~\ref{sec:exp} correspond to labels Myofibroblasts, Pericytes, Inflammatory Fibroblasts, and Glia, respectively.   

\item\textbf{Lung-Human-ASK440 data set}\footnote{
\href{https://www.ncbi.nlm.nih.gov/geo/query/acc.cgi?acc=GSE130148}{https://www.ncbi.nlm.nih.gov/geo/query/acc.cgi?acc=GSE130148. Accessed with GEO: GSE130148.}} from~\citep{vieira2019cellular}, comprising human lung cells from asthma patients and healthy controls. A total of $3,314$ cells from the parenchymal lung tissue were sequenced using Drop-seq protocol. The total dimension of the data points, corresponding to the number of genes, is $3,377$. Similar to UC-Stromal, we use m-VAE with $1$ MLP layer and $300$ hidden dimensions to obtain the Poincar\'e embeddings. The curvature constant for the embeddings is set to $k=0.0015$. Furthermore, the labels $0-4$ correspond to labels Type-2, Type-1, T cell, NK cell, and Mast cell, respectively.  
\end{itemize}
For each biological data set, we consider a $85\%/15\%$ random split for the training and test points, and keep it fixed for all trials. Our implementation for the m-VAE is provided at \url{https://github.com/thupchnsky/sc_mvae/tree/main}.   
\begin{figure}[htb]
    \centering   \includegraphics[width=0.90\linewidth]{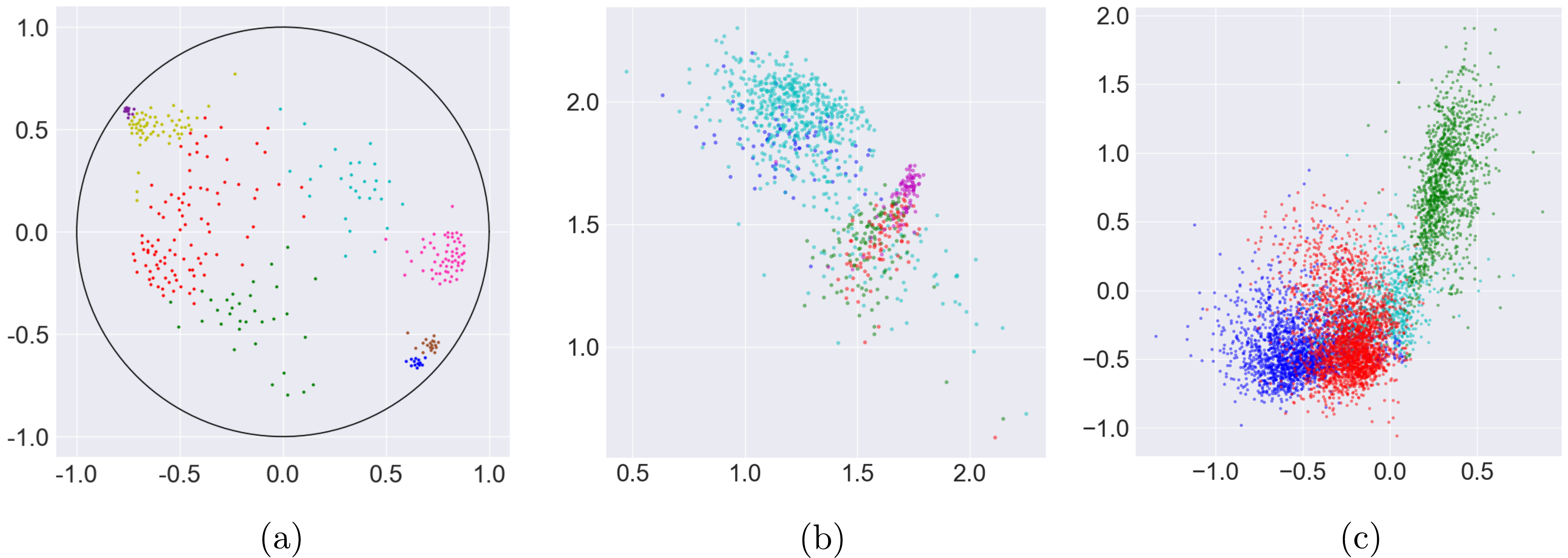}
    \caption{Visualization of the Poincar\'e embeddings for the three considered biological data sets. (a) Olsson’s single-cell
RNA expression. (b) Lung-Human-ASK440. (c) UC-Stromal. In (b) and (c), the boundary circle for the Poincar\'e disc has been omitted for better scaling for visualization purposes.}
    \vspace{-0.1in}
    \label{fig:bio_visual}
\end{figure}
\subsection{Simulation Details}
\textbf{Label Switching.} As the FL server applies our proposed graph partitioning based approach to aggregate the local convex hulls after recovering them, it does not need to know the ground truth labels or the exact local labels at each client (i.e., the $B_h$ labels suffice to recover the individual quantized convex hulls without knowing the exact identity of the clients from which they came). Hence, we did not simulate label switching in our code at any client because it does not affect the grouping procedure in our algorithm. After grouping the local convex hulls, the FL server assigns to each group a different label of its own choice. In our simulations, the ground truth training label associated with the majority of the local convex hulls in a given group is assigned as the label of the data points in that group before training the SVM classifiers on the recovered convex hulls. This is done so as to enable the computation of the accuracy of the trained classifiers at the FL server on the test data set in each setting. 

\textbf{Graph-Based Hull Aggregation.} For binary classification, we use the kernighan\_lin\_bisection() module from the NetworkX library ~\citep{hagberg2008exploring} in Python to group the convex hulls. For multi-label classification, we use the SpectralClustering() module from Scikit-learn \citep{pedregosa2011scikit}. Unlike kernighan\_lin\_bisection() which produces balanced partitions of nodes for the case of binary classification, SpectralClustering() based graph partitioning is not guaranteed to provide balanced partitioning for multi-label scenarios. To improve the partioning performance, we observe that the local hulls that belong to the same client should ideally be in separate groups. After $B_h$ decoding, the server can identify which of the local hulls come from the same client, albeit without knowing the identity of the specific client. Hence, in our simulations, we use a heuristic by which we assign very small weights to edges between each pair of local convex hulls from the same client, forcing SpectralClustering() to assign them to separate groups. This simple heuristic works very well in practice, as witnessed by our simulation results. 

\textbf{Synthetic Data Sets.} For both Euclidean and Poincar\'e SVMs, we set the regularization hyperparameter in~(\ref{eq:soft-margin-svm-convex}) to $\lambda=20,000,$ which essentially forces the solver to solve the hard margin SVM problem~(\ref{eq:hard-margin-svm-convex}). For federated baselines, we consider $L=10$, and partition the training data uniformly across clients. 

\textbf{Biological Data Sets.} For both Euclidean and Poincar\'e SVMs, we consider a regularization term ${\lambda}=0.1$. For federated baselines, we set $L=3$, and partition the training data uniformly across clients. The default value for the quantization parameter (i.e., distance margin) is $\epsilon=0.01$. For obtaining the reference points, we find that using the approach proposed in ~\citep{chien2021highly} (described in Section \ref{sec:hyp_cls}) may be unstable in some settings. Therefore, instead of selecting just the minimum distance pair, we select three pairs of points with lowest pairwise distances, and choose the one pair which produces the best training accuracy. 

\begin{table}[h]
\setlength{\tabcolsep}{4pt}
\centering
\scriptsize
\caption{Mean accuracy ($\%$) and $95\%$ confidence interval of the baselines for different data sets. The results are based on $10$ independent trials for each setting.} 
\vspace{0.1in}
\label{tab:bio_svm_add_results}
\begin{tabular}{@{}c|c|c|c|c|c@{}}
\toprule
Data set & Labels & CP ($\%$) & CH-CP ($\%$)& CE ($\%$) & CH-CE ($\%$) \\ \midrule
\multirow{1}{*}{Olsson}  & \multicolumn{1}{c|}{0-7} & {$79.17 \pm 0.00$} & {$83.33\pm 0.00$} & {${68.75 \pm 0.00}$} & {$77.08 \pm 0.00$} \\
\midrule
\multirow{1}{*}{Lung-Human}  & \multicolumn{1}{c|}{[0, 2, 3, 4]} & {$73.33\pm 0.00$} & {$74.07\pm 0.00$} & {${67.41 \pm 0.00}$} & {$68.89 \pm 0.00$} \\ 
\bottomrule
\end{tabular}
\end{table}

\subsection{Additional Results}

\textbf{Federated Baselines Outperforming Centralized Ones.} For the rows in Table ~\ref{tab:bio_svm_acc_results} where federated baselines offer better accuracy than their centralized counterparts, we carry out further analysis to examine the reasons behind the findings. For this, we consider additional centralized baselines where before training the SVM classifier (both for the Euclidean and Poincar\'e methods), we find the minimal convex hull for each class using Alg. \ref{alg:PGS}. The SVM classifiers are then trained on the minimal convex hulls instead of the entire classes. The results are presented in Table ~\ref{tab:bio_svm_add_results}, where we denote the baselines with minimal convex hulls by CH (Convex Hull). We can observe that training on minimal convex hulls indeed provides better performance than training on original points sets in this case. 
\begin{table}[h]
\setlength{\tabcolsep}{4pt}
\centering
\scriptsize
\caption{Mean accuracy ($\%$) and $95\%$ confidence interval of the baselines for different 
data sets. The results are based on $10$ independent trials for each setting. The FL method with best mean accuracy is written in boldcase letters.}
\vspace{0.1in}
\label{tab:ucs_add_table}
\begin{tabular}{@{}c|c|c|c|c|c@{}}
\toprule
Data set & Labels & CP ($\%$) & CE ($\%$)& FLP ($\%$) & FLE ($\%$) \\ \midrule
\multirow{5}{*}{UC-Stromal}  & \multicolumn{1}{c|}{0-3} & {$73.20\pm 0.00$} & {${73.54\pm 0.00}$} & {$\mathbf{68.61 \pm 3.36}$} & {$65.98 \pm 3.63$} \\
& \multicolumn{1}{c|}{[0, 1, 2]} & {${74.65\pm 0.00}$} & {$73.94\pm 0.00$} & {$65.92 \pm 6.11$} & {$\mathbf{67.14 \pm 5.67}$} \\
& \multicolumn{1}{c|}{[0, 1, 3]} & {${88.47\pm 0.00}$} & {$87.39\pm 0.00$} & {$\mathbf{89.71 \pm 1.26}$} & {$88.59 \pm 1.04$} \\ 
& \multicolumn{1}{c|}{[0, 2, 3]} & {$80.64\pm 0.00$} & {${82.29\pm 0.00}$} & {$\mathbf{79.85 \pm 2.24}$} & {$78.14 \pm 3.41$} \\
& \multicolumn{1}{c|}{[1, 2, 3]} & {${80.33\pm 0.00}$} & {${80.33\pm 0.00}$} & {$\mathbf{83.45  \pm 0.66}$} & {$82.42 \pm 1.61$} \\
\bottomrule
\end{tabular}
\end{table}

\begin{figure}[hbt]
    \centering   \includegraphics[width=0.95\linewidth]{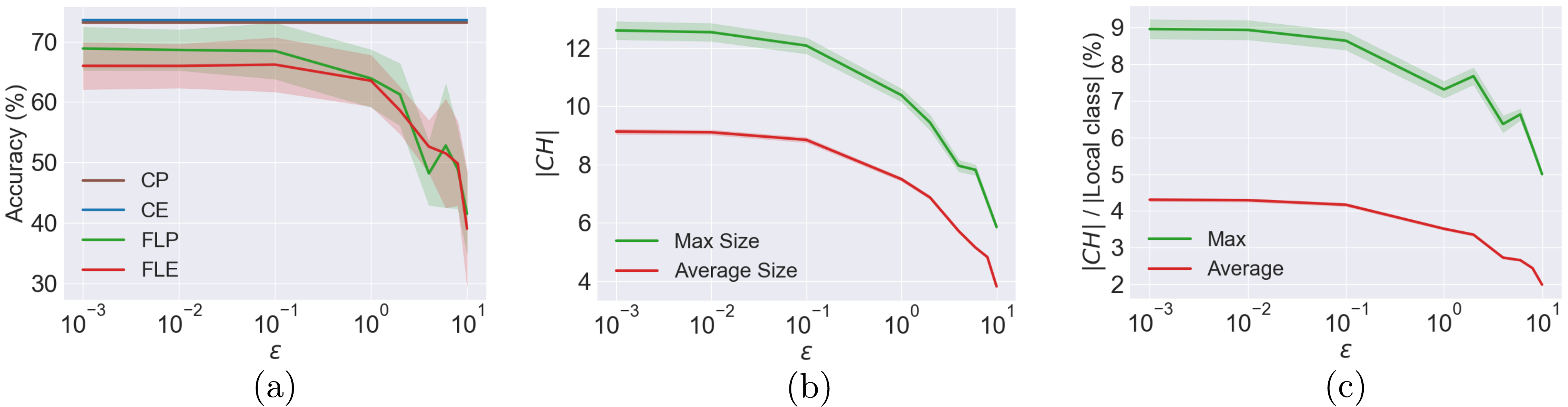}
    \caption{Impact of the quantization parameter $\eps$ on accuracy and convex hull complexity for UC-Stromal for the multi-label classification setting with labels $0-3$) and $L=5$. Here, $CH$ denotes the quantized convex hull for an arbitrary class at an arbitrary client. The average as well as the max are calculated over all clients across all local quantized convex hulls. The shaded areas represent the $95\%$ confidence interval from $10$ independent trials. }
    \label{fig:uc_stromal_eps_add}
\end{figure}

\textbf{Results for $L=5$.} Due to requiring the consent of patients for their data to be used for biological analyses, it is usually hard to compile large biological data sets. This is also the reason why the real-world data sets considered in this paper are (relatively) small. Furthermore, the number of participating clients with private biological data sets is expected to be small. Hence, in all our experiments from Section~\ref{sec:exp}, we considered $L=3$. Here, we provide additional results for $L=5$ and the UC-Stromal data set, with $\epsilon=0.01$. The accuracy results for various labels are presented in Table ~\ref{tab:ucs_add_table}, while the results illustrating the impact of $\epsilon$ on the accuracy and the convex hull complexities for the setting with $[0-3]$ are shown in Figure ~\ref{fig:uc_stromal_eps_add}. We observe similar trends as for $L=3$, except that the relative convex hull size increases notably, while the max still remains under $10\%$.   
\end{document}